\def\colorful{0}
\newcommand{\new}[1]{{\color{red} #1}}
\newcommand{\inote}[1]{\footnote{{\bf [Ilias: {#1}\bf ]}}}
\newcommand{\new}[1]{#1}
\newcommand{\inote}[1]{}
\newcommand{\anote}[1]{}
\newcommand{\tnote}[1]{}
\newcommand{\snote}[1]{}
\newcommand{\lnote}[1]{}
\def\eP{{\widehat{\mathcal{\sP}}}}
\def\ppcorrupt{{\sP_{\text{train}}}}
\def\pptest{{\sP_{\text{test}}}}
\newcommand{\thelabel}{y} 
\def\dual{\alpha}
\def\vdual{\valpha}
\newcommand{\extradual}{\beta} 
\newcommand{\vextradual}{\vbeta}
\def\citet{\cite}
\newcounter{thm}
\numberwithin{thm}{section}
\theoremstyle{plain}
\newtheorem{theorem}[thm]{Theorem}
\newtheorem{corollary}[thm]{Corollary}
\newtheorem{lemma}[thm]{Lemma}
\newtheorem{proposition}[thm]{Proposition}
\newtheorem{assumption}[thm]{Assumption}
\newtheorem{fact}[thm]{Fact}
\theoremstyle{definition}
\newtheorem{definition}[thm]{Definition}
\theoremstyle{remark} \newtheorem{remark}[thm]{Remark}
\newtheorem{example}[thm]{Example}
\crefname{fact}{fact}{facts}
\crefname{claim}{claim}{claims}
\crefname{assumption}{assumption}{assumptions}
\crefname{problem}{problem}{problems}
\crefname{theorem}{theorem}{theorems}
\crefname{corollary}{corollary}{corollaries}
\crefname{lemma}{lemma}{lemmas}
\crefname{proposition}{proposition}{propositions}
\crefname{definition}{definition}{definitions}
\crefname{remark}{remark}{remarks}
\crefname{note}{note}{notes}
\crefname{example}{example}{examples}
\def\1{\bm{1}}
\def\eps{{\epsilon}}
\def\vzero{{\bm{0}}}
\def\vone{{\bm{1}}}
\def\vmu{{\bm{\mu}}}
\def\vxi{{\bm{\xi}}}
\def\valpha{{\bm{\alpha}}}
\def\vbeta{{\bm{\beta}}}
\def\ve{{\bm{e}}}
\def\vp{{\bm{p}}}
\def\vu{{\bm{u}}}
\def\vv{{\bm{v}}}
\def\vw{{\bm{w}}}
\def\vx{{\bm{x}}}
\def\vz{{\bm{z}}}
\def\vxh{\widehat{\bm{x}}}
\def\mA{{\bm{A}}}
\def\mB{{\bm{B}}}
\def\mI{{\bm{I}}}
\def\mM{{\bm{M}}}
\def\mX{{\bm{X}}}
\def\mSigma{{\bm{\Sigma}}}
\def\cB{{\mathcal{B}}}
\def\cE{{\mathcal{E}}}
\def\cG{{\mathcal{G}}}
\def\cI{{\mathcal{I}}}
\def\cO{{\mathcal{O}}}
\def\cS{{\mathcal{S}}}
\def\cU{{\mathcal{U}}}
\def\sD{{\mathbb{D}}}
\def\sP{{\mathbb{P}}}
\def\sQ{{\mathbb{Q}}}
\def\sR{{\mathbb{R}}}
\newcommand{\E}{\mathbb{E}}
\newcommand{\R}{\mathbb{R}}
\newcommand{\Ind}{\mathbb{I}}
\DeclareMathOperator*{\argmax}{arg\,max}
\DeclareMathOperator*{\argmin}{arg\,min}
\DeclareMathOperator{\TV}{TV}
\newcommand{\Gap}{\mathrm{Gap}}
\newcommand{\dd}{\mathrm{d}}
\DeclareMathOperator{\Var}{Var}
\def\Cov{{\bm{\mathrm{Cov}}}}
\DeclareMathOperator{\op}{op}
\NewDocumentCommand{\DeclarePairedDelimiterWithOptionalStar}{ m m m m }{
  \DeclareDocumentCommand{#1}{ s o m }{\IfBooleanTF{##1}{#2#3*{##3}#4}{\IfValueTF{##2}{#2#3[##2]{##3}#4}{#2#3{##3}#4}}}}
\DeclarePairedDelimiter\brackets{(}{)}
\DeclarePairedDelimiter\norm{\|}{\|}
\DeclarePairedDelimiter\abs{|}{|}
\DeclarePairedDelimiterX{\divx}[2]{(}{)}{#1\;\delimsize\|\;#2}
\DeclarePairedDelimiterWithOptionalStar{\bigO}{O}{\brackets}{}
\DeclarePairedDelimiterWithOptionalStar{\bigtO}{\widetilde{O}}{\brackets}{}
\DeclarePairedDelimiterWithOptionalStar{\bigOmega}{\Omega}{\brackets}{}
\DeclarePairedDelimiterWithOptionalStar{\bigtOmega}{\widetilde{\Omega}}{\brackets}{}
\DeclarePairedDelimiterWithOptionalStar{\opnorm}{}{\norm}{_{\op}}
\DeclarePairedDelimiterWithOptionalStar{\fnorm}{}{\norm}{_{F}}
\providecommand*{\diff}{
  \@ifnextchar^{\DIfF}{\DIfF^{}}}
\def\DIfF^#1{\mathop{\mathrm{\mathstrut d}}\nolimits^{#1}\gobblespace}
\def\gobblespace{\futurelet\diffarg\opspace}
\def\opspace{\let\DiffSpace\!\ifx\diffarg(\let\DiffSpace\relax
  \else
  \ifx\diffarg[\let\DiffSpace\relax
  \else
  \ifx\diffarg\{\let\DiffSpace\relax
  \fi\fi\fi\DiffSpace}
\newcommand{\numhere}{\stepcounter{equation}\tag{\theequation}}
\title{Distributionally Robust Optimization\\ 
with Adversarial Data Contamination}
\author{Shuyao Li\thanks{Supported in part by AFOSR Awards FA9550-21-1-0084 and FA9550-24-1-0076, the U.S.\ Office of Naval Research under award number N00014-22-1-2348, and by NSF CAREER Award CCF-2440563.}\\
  University of Wisconsin-Madison\\
  \texttt{shuyao.li@wisc.edu}\\ 
  \and
  Ilias Diakonikolas\thanks{Supported by NSF Medium Award CCF-2107079 and an H.I. Romnes Faculty Fellowship.}\\
  University of Wisconsin-Madison \\
  \texttt{ilias@cs.wisc.edu} \\ 
  \and
  Jelena Diakonikolas\thanks{Supported in part by the AFOSR YIP Award FA9550-24-1-0076, by the U.S.\ Office of Naval Research under contract number N00014-22-1-2348, and by the NSF CAREER Award CCF-2440563.}\\
  University of Wisconsin-Madison \\
  \texttt{jdiakonikola@wisc.edu}
}\date{}
\DeclareMathOperator{\CVaR}{CVaR}
\DeclareMathOperator{\Proj}{Proj}
\DeclareMathOperator{\LP}{LP}
\def\mathbf{\bm}
\begin{document}
\maketitle

\setcounter{page}{0}

\thispagestyle{empty}

\begin{abstract}Distributionally Robust Optimization (DRO) provides a framework for decision-making under distributional uncertainty, yet its effectiveness can be compromised by outliers in the training data. This paper introduces a principled approach to simultaneously address both challenges. We focus on optimizing Wasserstein-1 DRO objectives for generalized linear models with convex Lipschitz loss functions, where an $\epsilon$-fraction of the training data is adversarially corrupted. Our primary contribution lies in a novel modeling framework that integrates robustness against training data contamination with robustness against distributional shifts, alongside an efficient algorithm inspired by robust statistics to solve the resulting optimization problem. We prove that our method achieves an estimation error of $O(\sqrt{\epsilon})$ for the true DRO objective value using only the contaminated data under the bounded covariance assumption. This work establishes the first rigorous guarantees, supported by efficient computation, for learning under the dual challenges of data contamination and distributional shifts. 
\end{abstract}

\newpage

\section{Introduction}

Distributionally Robust Optimization (DRO) is a general stochastic 
optimization framework, where the goal is to minimize the loss of a target model 
on a worst-case distribution that is promised to lie 
in an ambiguity set---a set of distributions close to the reference distribution, 
with respect to a prespecified divergence. Intuitively, the ambiguity set 
models possible distributional shifts of the data.
Since its introduction, 
the DRO framework has found a wide range of applications in a variety of 
contexts, including algorithmic fairness~\cite{hashimoto2018fairness}   
class imbalance~\cite{xu2020class}, 
reinforcement learning~\cite{Lotidis2023, Yang2023, Wang2023a, Yu2023, Kallus2022, Liu2022, ramesh2024distributionally}, robotics~\cite{Sharma2020}, 
sparse neural network training~\cite{Sapkota2023}, 
defense against model extraction~\cite{Wang2023b}, and language model pretraining~\cite{Liu2021, xie2023doremi}, finetuning~\cite{ma2025task}, pruning~\cite{xia2023sheared,deng2024drpruning}, and alignment~\cite{wu2025towards}.

Despite its advantages, 
the practical adoption of DRO is often hindered by the following vulnerability: 
its inherent sensitivity to outliers present in the training 
data; see, e.g.,~\citet{hu2018does, zhu2022generalized}.
This sensitivity is not merely a theoretical worst-case concern.  
Contaminated training data can significantly degrade 
the performance of DRO solutions, making it a critical barrier 
to its wider application in real-world scenarios 
where data quality is not always guaranteed~\cite{huang2023robust}. 
Notably, some DRO formulations, particularly those related to $f$-divergence-based 
measures (such as Conditional Value-at-Risk (CVaR)), 
provably assign higher importance to extreme data 
points~\cite{blanchet2024distributionally}.

The well-established field 
of robust statistics~\cite{Huber09, diakonikolas2022algorithmic} 
offers a natural and principled approach to modeling 
and mitigating the effects of data contamination 
in the training data. However, integrating these principles 
effectively within the DRO framework---in a manner that is both 
theoretically sound and computationally tractable---has turned out 
to be a considerable challenge. Many existing approaches that attempt to 
combine outlier robustness with distributional robustness struggle to 
maintain desirable properties or offer provable computational guarantees.

A key desirable property of any outlier-robust DRO formulation 
is that it behaves sensibly under limiting conditions. 
In particular, it should satisfy two sanity checks: 
(1) when the ambiguity set, describing distributional shifts, 
reduces to a single distribution 
(i.e., the training and test distributions are identical and known, 
aside from potential contamination in the training data), 
the problem should naturally reduce 
to a standard outlier-robust supervised learning task; 
and (2) when there are no outliers in the training data, 
the formulation should recover the standard DRO problem 
for supervised learning. Many existing models fail 
to meet one or both of these criteria; 
see \Cref{sec:dro-modeling} for details

In this work, we initiate a systematic investigation of DRO in the 
presence of adversarial data contamination within the training set. 
As mentioned in the preceding discussion, in sharp contrast to prior 
formulations proposed with a similar motivation, our formulation captures 
both the DRO and the outlier-robust settings as special cases. We view 
this as an important conceptual contribution of this work.
At the technical level, we aim to design the first statistically and 
computationally efficient 
algorithms for solving natural learning tasks robustly 
to distributional shifts, where an $\epsilon$-fraction 
of the training data can be adversarially corrupted 
according to the strong contamination 
model~\cite{diakonikolas2016robust-estimato}.
\begin{definition}[Strong Contamination Model]
	\label{def:corruption}
	Given a parameter \(0 < \eps < 1/2\) and an inlier distribution \(\sP_0\), an algorithm receives \emph{\textbf{\(\epsilon\)-corrupted}} samples of size $N$ from \(\sP_0\) as follows: $N$ clean samples 
    are first drawn i.i.d.\ from  \(\sP_0\). An adversary is then allowed to inspect these samples and replace an \(\epsilon\)-fraction of the samples with arbitrary points.
\end{definition}
The strong contamination model is standard in robust statistics and 
subsumes other models like total variation contamination, 
where the contaminated samples 
are drawn i.i.d.\ from an unknown distribution 
$\ppcorrupt$ satisfying\footnote{The total variation distance between two distributions $\sP$ and $\sQ$ is defined as \(\TV(\sP, \sQ) = \sup_A \abs{\sP(A) - \sQ(A)}\).} $\TV(\sP_0, \ppcorrupt) \le \eps$. Note here that, consistent with the literature in robust statistics, $\eps$ denotes the fraction of the outliers---not the target accuracy.

The technical problem we address is the computation of solutions 
to the Wasserstein-$1$ DRO problem, given access to 
$\epsilon$-corrupted training data. Specifically, 
given a reference distribution $\sP_0$, a radius $\rho > 0$, 
a loss function $\ell(\vw; \vxi)$, 
and a cost function $c: \R^{d+1} \times \R^{d+1} \to \R_+$ 
defining the Wasserstein-$1$ distance
$W_1^c(\sP, \sP') = (\inf_{\pi \sim \Pi(\sP, \sP')} \E_{(\vxi, \vxi') \sim \pi} [c(\vxi, \vxi')])$\footnote{Wasserstein-$p$ distance would be defined by $W_p^c(\sP, \sP') = (\inf_{\pi \sim \Pi(\sP, \sP')} \E_{(\vxi, \vxi') \sim \pi} [c^p(\vxi, \vxi')])^{1/p}$.},
our goal is to solve the problem\footnote{Here and elsewhere, ``min'' is interpreted as ``minimize,'' 
while ``inf'' would refer to its value---the infimum.}:
\begin{equation}\label{eq:w1-dro}
  \min_{\vw \in \R^d} \max_{\sQ: W_1^c(\sP_0, \sQ) \le \rho} \E_{\vxi \sim \sQ}[\ell(\vw; \vxi)],
\end{equation}
where we only have access to a dataset generated according to \Cref{def:corruption}. 
That is, we ask:
\begin{center}
  \em Is there an efficient algorithm for the DRO problem \eqref{eq:w1-dro} when the training data are $\epsilon$-corrupted? 
\end{center}
Our main result provides an affirmative answer for a range of supervised learning tasks, 
under the assumption that the underlying loss function satisfies certain well-defined (mild) assumptions. 

\begin{theorem}[Main Theorem--Informal; see \Cref{cor:population-error}]
Suppose the loss function $\ell(\vw; \vx, y)$ takes the form of $\phi(\vw \cdot \vx, y)$ 
for some convex function $\phi$ that is $\zeta$-Lipschitz in the first coordinate for all $y$. 
There is a sample size $N = \tilde O( d /\epsilon)$ and an algorithm that, given an $\eps$-corrupted set of samples 
of size $N$, with high probability solves 
problem \eqref{eq:w1-dro} approximately 
within $O(\norm{\vw^*}_2 \sigma \zeta \sqrt\epsilon)$ error, 
where $\vw^*$ is the optimal solution and $\sigma$ is the variance of the covariates $\vx$.
\end{theorem}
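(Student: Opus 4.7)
My plan is to reduce the robust DRO problem to a robust stochastic convex optimization task and then solve it by a first-order method whose gradient oracle is a bounded-covariance robust mean estimator.

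First, I would dualize the inner supremum in~\eqref{eq:w1-dro}. Since $\ell(\vw;\vx,y)=\phi(\vw\cdot\vx,y)$ is convex and $\zeta$-Lipschitz in the linear predictor, and using the natural transport cost on the feature part alone, i.e.\ $c((\vx,y),(\vx',y'))=\|\vx-\vx'\|+\infty\cdot\one\{y\neq y'\}$, Kantorovich--Rubinstein duality for the Wasserstein-$1$ ball gives the closed-form reformulation
\begin{equation*}
\max_{\sQ:\,W_1^c(\sP_0,\sQ)\le\rho}\E_{\sQ}[\ell(\vw;\vxi)]
\;=\;\E_{(\vx,y)\sim\sP_0}[\phi(\vw\cdot\vx,y)]\;+\;\rho\,\zeta\,\|\vw\|_*,
\end{equation*}
where $\|\cdot\|_*$ is dual to the norm defining $c$. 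Consequently, the DRO problem~\eqref{eq:w1-dro} collapses to the convex regularized population objective $F(\vw)=\E_{\sP_0}[\phi(\vw\cdot\vx,y)]+\rho\zeta\|\vw\|_*$, and it suffices to approximately minimize $F$ from the $\epsilon$-corrupted sample.

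Second, I would attack $F$ by a projected subgradient (or proximal) method over a ball of radius comparable to $\|\vw^*\|_2$, using the corrupted samples only to form the data-dependent subgradient. The subgradient of the data term has the generalized-linear form $g(\vw;\vx,y)=\phi'(\vw\cdot\vx,y)\,\vx$ with $|\phi'|\le\zeta$; under the bounded-covariance assumption on $\vx$, the covariance of $g(\vw;\cdot)$ has operator norm at most $\zeta^2\sigma^2$ uniformly in $\vw$. I would therefore replace the true mean gradient at each iterate $\vw_t$ by the output $\hat g_t$ of a standard bounded-covariance robust mean estimator (a stability-based filtering algorithm) applied to the plug-in gradient sample $\{g(\vw_t;\vx_i,y_i)\}_{i=1}^N$, and handle the $\rho\zeta\|\vw\|_*$ regularizer either by a closed-form proximal step or by projection onto a radius-$O(\|\vw^*\|_2)$ ball.

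Third, I would assemble the error bound. The robust mean estimator guarantees $\|\hat g_t-\E_{\sP_0}[g(\vw_t;\cdot)]\|_2\le O(\zeta\sigma\sqrt{\epsilon})$ with high probability as long as $N=\widetilde O(d/\epsilon)$, which matches the advertised sample size. Inserting this uniform gradient-bias bound into the standard convergence analysis of (sub)gradient descent on a convex function over a domain of diameter $O(\|\vw^*\|_2)$ produces an excess-objective guarantee of $O(\|\vw^*\|_2\,\sigma\zeta\sqrt{\epsilon})$ on $F$, which by the duality reformulation in the first step is exactly the DRO approximation error claimed.

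The main technical obstacle, as I see it, is obtaining the robust gradient guarantee \emph{uniformly} along the trajectory of the algorithm from a single fixed corrupted dataset, since the adversary can tailor the outliers so that they become harmful only at specific iterates. A naive union bound over the continuum of possible $\vw_t$ is wasteful and hard to even make precise. The natural workaround is to exploit the fact that $g(\vw;\vx,y)=\phi'(\vw\cdot\vx,y)\,\vx$ depends on $\vx$ only through a scalar reweighting bounded by $\zeta$: this reduces the whole trajectory of gradient estimations to a one-shot certificate of bounded reweighted covariance on the inlier portion of the sample, an object that robust statistics delivers via filtering, so that the same filtered set simultaneously controls all iterates. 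Turning this heuristic into a clean $O(\|\vw^*\|_2\sigma\zeta\sqrt{\epsilon})$ bound is the delicate step I would expect to occupy most of the actual proof.
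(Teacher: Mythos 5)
Your proposal is correct in substance and identifies exactly the right conceptual pillars, but it routes through a different optimization algorithm than the paper. You dualize \eqref{eq:w1-dro} into a regularized GLM objective (same as the paper's use of \cite{shafieezadeh2019regularization}), and your ``scalar reweighting bounded by $\zeta$'' observation is precisely the paper's \Cref{lemma:bounded-sequence-scaling-stability}: a stable set $\{\vx_i^s\}$ remains stable after any uniformly bounded rescaling $\{\beta_i \vx_i^s\}$, so a single stability certificate for the covariates simultaneously licenses robust mean estimation of the weighted sum at \emph{every} iterate, regardless of how the adversary might tailor corruptions to the trajectory. You also correctly flag that this uniformity-over-the-trajectory issue is the delicate part: to make it rigorous one needs the coupling argument (present in the proof of \Cref{thm:main-empirical}) showing that the iterates generated from the corrupted data coincide with those of an idealized run on the stable subset with a bounded-error oracle, because the reweightings $\phi'(\vw_t\cdot\vx_i, y_i)$ are themselves influenced by the outliers through $\vw_t$.

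The genuine difference is the outer loop: you use projected subgradient descent with a robust gradient oracle, while the paper uses a primal-dual hybrid gradient method exploiting Fenchel conjugacy. Both are valid. Subgradient descent is conceptually simpler and sidesteps the need for proximable conjugates $\ell_y^*$, but you pay for it in iteration count: with gradient norm $O(\zeta\sigma)$ and bias $\Delta = O(\zeta\sigma\sqrt\epsilon)$, you need $T = O(1/\epsilon)$ iterations to drive the optimization term below the bias floor, whereas the paper's PDHG with diminishing primal regularization achieves this in $T = O(1/\sqrt\epsilon)$. The paper's primal-dual split also has a structural benefit you don't explicitly mention: the robust mean estimator is only invoked on the primal (weighted-covariate) updates, while the dual updates run directly on corrupted data, and the analysis shows these stray only on outlier indices. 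Finally, you elide a bookkeeping step the paper must handle: the robust oracle produces something close to the \emph{stable-subset} average, not the population mean, so a separate uniform-convergence argument (\Cref{fact:uniform-convergence}) is needed at the end to pass to the population objective; and setting the projection radius to $O(\norm{\vw^*}_2)$ requires the same kind of step-size tuning the paper addresses in \Cref{remark:tuning-stepsize}. None of these are gaps in your core idea --- they are details your sketch would need to fill, and filling them leads to essentially the same bound.
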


\new{The error guarantee in the above theorem scales naturally with the parameters of the problem, and we establish in \Cref{thm:lower_bound} that those dependencies are optimal.
While the dependence on the initial distance to the optimum is necessary, we also show that it can be removed for specific problems like SVMs under stronger anti-concentration assumptions, as detailed in \Cref{lemma:svm_clipping_w}.}

\subsection{Technical Overview}
Our work makes two primary contributions. 
At the conceptual level, we provide a novel formulation 
of DRO in the presence of outliers in the training data that we argue
is appropriate in a number of settings. 
At the technical level, we design the first computationally-efficient algorithm 
with provable error guarantees for our outlier-robust DRO formulation. 
\subsubsection{Modeling DRO with Adversarial Training Data Contamination} 
We start by pointing out that effectively formulating DRO 
in the presence of adversarial training data contamination is subtle. 
Many prior approaches attempt to handle outliers in the training set 
by encoding information about potential contamination directly 
within the ambiguity set~\cite{nietert2023outlier}, 
sometimes alongside modifications to the loss function~\cite{wang2024outlier}. 
While such models can, in principle, provide guarantees if the true 
data-generating distribution $\sP_0$ remains within this augmented ambiguity 
set centered around the contaminated empirical distribution, 
they often suffer from various drawbacks. 
As discussed in \Cref{sec:related-models}, these prior modeling strategies 
can lead to guarantees that depend undesirably on data dimensionality 
or may fail to recover standard robust estimation tasks 
in the limit of no distributional shift (i.e., when $\rho \to 0$). 
Moreover, as we argue in \Cref{sec:modeling-principles}, 
such formulations often implicitly treat training data outliers 
in a way that is more aligned with handling outliers (or model misspecification) 
in the \emph{testing} distribution.

In sharp contrast, our modeling framework treats pre-decision (training data contamination) 
and post-decision (distributional shift at test time) uncertainties separately, aligned 
with \citet{blanchet2024distributionally}. This separation allows for a 
more effective way to achieve robustness to both challenges simultaneously. 
We focus on the setting where an $\epsilon$-fraction of the training data 
is adversarially corrupted, and the goal is to find a model 
that performs well on a test distribution 
that may be shifted from the clean underlying distribution $\sP_0$.

\subsubsection{Formulating and Solving DRO with Adversarial Training Data Contamination}
Having established our modeling principles, 
the challenge shifts to formulating and then efficiently solving the resulting optimization problem. 
The interplay between DRO and outlier-robustness presents several subtleties 
that require careful consideration. Employing $f$-divergence-based ambiguity sets, 
as explored in  prior work~\cite{zhai2021doro,bennouna2022holistic} 
combining DRO with outlier awareness, 
could be counterproductive in modeling DRO with training data contamination. 
Many $f$-divergences, and their corresponding DRO formulations 
(like those related to CVaR), are inherently sensitive to extreme data points, 
potentially amplifying the negative effect of outliers rather than mitigating it~\cite{blanchet2024distributionally}.

As our main technical contribution, 
we identify an expressive and tractable setting 
by focusing on Wasserstein-1 DRO ($W_1$-DRO) with convex Lipschitz loss functions 
of the form $\ell(\vw; \vx, y) = \phi(\vw \cdot \vx, y)$, as outlined in \Cref{assump:convex-lipschitz}. We note that the Lipschitz assumption of 
$\ell_{y_i}(\cdot)$ is a mild condition, 
because any loss function with superlinear growth 
admits $+\infty$ worst-case loss under the setup of $W_1$-DRO, 
i.e., if $\ell_y(z) / \abs{z}^{1 + \iota} \to C$ as $\abs{z} \to \infty$ 
for some $\iota > 0$ and constant $C > 0$, 
then \(\sup_{\sQ: W_1(\sP, \sQ) \le \rho} \E_{\sQ}[\ell_y(\vw \cdot \vx)] = + \infty\) for all $\vw$ and $\rho > 0$ (see \Cref{fact:wasserstein-loss-moment}).

\begin{assumption}\label{assump:convex-lipschitz}
  The loss function is of the form $\ell(\vw; \vx, y) = \phi(\vw \cdot \vx, y)$ 
  for some univariate function $\phi$. For any label $y$, 
  we define \(\ell_y \equiv \phi(\cdot, y)\). We assume that (i) the function $\ell_{y}$ 
  is convex and $\zeta$-Lipschitz (but not necessarily smooth) for all $\thelabel$; 
  and (ii) the convex conjugate $\ell_{y}^*$ is proximable.
\end{assumption}
This specific combination of a loss function and the ambiguity set is important for our 
algorithmic results. Prior work from the Wasserstein DRO 
literature~\cite{shafieezadeh2019regularization} allows us to reformulate 
the $W_1$-DRO problem \eqref{eq:w1-dro} as an equivalent regularized 
risk minimization problem \eqref{eq:w1-dro-regularization}, 
which takes the empirical form \eqref{eq:optimization_formulation}  
\begin{equation}\label{eq:optimization_formulation}
  \min_{\vw \in \R^{d}} \frac1N \sum_{i=1}^{N} \ell_{y_i}(\vx_i \cdot \vw) + \psi(\vw)  \tag{P}
\end{equation}
This step is crucial, as it converts the original min-max DRO problem 
into a minimization-only problem. However, the adversarial contamination 
in the training data means that we cannot directly compute 
empirical expectations or gradients using clean samples from $\sP_0$. 
Instead, leveraging algorithmic results from robust statistics (\Cref{sec:preliminaries}), 
we show that the task effectively reduces to minimization 
of a generalized linear model with norm regularization, 
where access to the covariates is through an inexact oracle. 
This oracle processes the contaminated data and returns 
an estimate with bounded error $\Delta$, 
as formalized in \Cref{assump:approximation-guarantee} 
and demonstrated in \Cref{prop:inexact-hg-oracle}.
\begin{assumption}\label{assump:approximation-guarantee}
  There exists $\Delta > 0$ and an oracle that, 
  on input an arbitrary $3\zeta$-uniformly bounded sequence 
  $\vextradual = (\extradual^i)_{i=1}^N$ and an $\eps$-corrupted version 
  of a stable set (\Cref{fact:stability_bounded_covariance}) $\{\vx^s\}_{s=1}^N$, outputs $\hat{\vz}$ so that $\norm{\hat\vz - \frac1N \sum_{i=1}^{N} \extradual^i \vx^s_i}_2 \le \Delta$.
\end{assumption}
To efficiently solve this composite minimization problem with an inexact data oracle, 
we employ a primal-dual approach. By using Fenchel conjugacy 
(see \Cref{def:convex_conjugate}), we reformulate the objective 
to separate the non-linear but data-independent part of the loss function 
from the part that is linear in the data. 
This structural separation is vital, as it allows the robust mean estimation oracle 
to operate directly on appropriately weighted covariates 
to estimate the required linear term.

The core of our algorithmic approach is a variant 
of the Primal-Dual Hybrid Gradient (PDHG) method \cite{chambolle2011first}. 
This algorithm is adapted to our setting 
with two main distinctions from standard PDHG:
\begin{enumerate}[label=(\roman*), topsep=0pt, itemsep=-2pt, leftmargin=*]
    \item The primal update step (Line~\ref{line:primal-update} in \Cref{alg:main-opt}) 
    relies on the inexact oracle from \Cref{prop:inexact-hg-oracle} 
    to compute an estimate of the weighted sum of notionally clean covariates 
    from the available contaminated samples.
    \item The proximal term in the primal update includes a regularization parameter 
    $c_k$ that strategically \emph{decreases} over iterations, 
    thus increasing primal steps as the algorithm progresses. 
    This enables us to establish an estimation error guarantee (\Cref{prop:main-opt}) 
    that depends on the initial distance to the optimum, $\|\vw_0 - \vw^*\|_2$, 
    rather than on the potentially much larger diameter 
    of the region containing all iterates.
\end{enumerate}
A noteworthy feature of our algorithm and its analysis is 
that the robust mean estimation oracle is invoked only for the primal variable updates. 
In contrast, the dual variable updates (Line~\ref{line:dual-update} 
in \Cref{alg:main-opt}) are performed using the raw, 
potentially contaminated, data points. Our convergence analysis carefully 
handles this asymmetry and the inexactness stemming 
from the oracle, by demonstrating that the algorithm's iterates 
generated using contaminated data effectively track 
those of an idealized algorithm (\Cref{alg:main-opt-analysis}) 
that operates on underlying notionally clean (i.e., ``stable'') 
data with a similar bounded-error oracle. 

\section{Preliminaries\label{sec:preliminaries}}
Given a positive integer \(N\), define \([N] := \{1, 2, \dots, N\}\).
We use \(\Ind_{\cE}\) to denote the 0-1 indicator function of a set $\cE$. We use $\chi$ to denote the characteristic function  $\chi(0) = 0$ and $\chi(a) = +\infty$ otherwise. 
For vectors \(\vx\) and \(\vxh\) from the \(d\)-dimensional Euclidean space \(\mathbb{R}^{d}\), we use \(\langle \vx, \vxh \rangle\) and \(\vx \cdot \vxh\) to denote the standard inner product.  
We use \([x^{1}, x^{2}, \dots, x^{d}]^{\top}\) to denote the entries of \(\vx \in \mathbb{R}^d\).
For \(r > 1\), the \(\ell_r\) norm of a vector \(\vx \in \mathbb{R}^d\) is defined as \(\|\vx\|_r = ( \sum_{i=1}^d |x^i|^r )^{1/r}\). The \(\ell_\infty\) norm is defined as \(\|\vx\|_\infty = \max_{1 \leq i \leq d} |x^i|\).
The operator norm of a matrix \(\mA \in \mathbb{R}^{m \times n}\), induced by the \(\ell_2\) norm, is defined as \(\opnorm{\mA}  = \sup_{\|\vx\|_2 = 1} \|\mA \vx\|_2\). We write \( \mA \succeq \mB \) to indicate that  \(\vx^\top(\mA-\mB)\vx \geq 0\) for all $\vx \in \R^d$. 
For a set $S$ containing vectors of the same dimension, we let $\vmu_S$ denote their empirical mean and let $\mSigma_S$ denote their empirical covariance matrix.
For two functions \(f\) and \(g\), we say \(f = \tilde{O}(g)\) if \(f = O(g \log^k(g))\) for some constant \(k\), and similarly define \(\tilde{\Omega}\). 
\subsection{Optimization}
We collect here useful definitions and facts from optimization.
\begin{definition}[Proximable function]\label{def:proximable}
  A convex function $f: \R^d \to \R$ is \emph{proximable} if, for every $\vw \in \R^d$, the minimizer $\argmin_{\vw' \in \R^d} \big(f(\vw') + \frac{1}{2} \norm{\vw - \vw'}_2^2\big)$ can be computed efficiently.
  \end{definition}
\begin{definition}[Convex Conjugate]
  \label{def:convex_conjugate}
  Let \( l: \sR \to \sR \) be a convex function. The \emph{convex conjugate} (or Fenchel conjugate) of \( g \) is the function \( l^*: \sR \to \sR \cup \{+\infty\} \) defined by
  $ 
      l^*(\dual) \coloneqq \sup_{\hat\thelabel \in \sR} \{ \dual \hat\thelabel - l(\hat\thelabel) \}
$.
  \end{definition}

  \begin{definition}[Lipschitz Modulus]
    The Lipschitz modulus \(\zeta > 0\) of a function \(l: \R \to \R\) is $\inf\{\zeta': {\abs{l(\hat y_1) - l(\hat y_2)} \le \zeta' \abs{y_1 - y_2}}  \text{ for all \(\hat y_1, \hat y_2 \in \R\)}\}$.
    If $\zeta < \infty,$ we also say that $l$ is $\zeta$-Lipschitz.
  \end{definition}

\begin{fact}\label{fact:lipschitz-conjugate}
  Let $l: \R \to \R$ be convex. Given $L > 0$, $L$-Lipschitzness of $l$ is equivalent to $L$-boundedness of the domain of $l^*$:   
  $\forall y_1, y_2 \in \R, \, \abs{l(y_1) - l(y_2)} \le L \abs{y_1 - y_2} \Leftrightarrow (\forall \dual \in \sR,\, l^*(\dual) < \infty \Rightarrow \abs{\dual} \le L)$.
\end{fact}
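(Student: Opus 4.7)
The plan is to prove the two implications separately using standard convex-analysis tools. For the forward direction ($\Rightarrow$), I would assume $L$-Lipschitzness and show that $l^*(\dual) = +\infty$ whenever $\abs{\dual} > L$. Fixing a reference point $\hat y_0$, Lipschitzness gives $l(\hat y) \le l(\hat y_0) + L\abs{\hat y - \hat y_0}$ for all $\hat y$. If $\dual > L$, then for all $\hat y \ge \hat y_0$ we have $\dual \hat y - l(\hat y) \ge (\dual - L)\hat y - C$, where $C = l(\hat y_0) - L\hat y_0$; since $\dual - L > 0$ this diverges to $+\infty$ as $\hat y \to +\infty$, so $l^*(\dual) = \sup_{\hat y} \{\dual \hat y - l(\hat y)\} = +\infty$. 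The case $\dual < -L$ is symmetric, taking $\hat y \to -\infty$.

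For the backward direction ($\Leftarrow$), I would rely on biconjugacy. Since $l$ is a real-valued convex function on all of $\R$, it is proper and lower semicontinuous (in fact continuous), so the Fenchel--Moreau theorem yields $l = l^{**}$, i.e.\ $l(y) = \sup_{\dual \in \R}\{\dual y - l^*(\dual)\}$. The hypothesis that $\dom(l^*) \subseteq [-L, L]$ then lets me restrict the supremum to $\abs{\dual} \le L$. Fix $y_1, y_2 \in \R$ and $\eps > 0$, and pick $\dual^*$ with $\abs{\dual^*} \le L$ such that $l(y_1) \le \dual^* y_1 - l^*(\dual^*) + \eps$. Since $l(y_2) \ge \dual^* y_2 - l^*(\dual^*)$ by the identity $l = l^{**}$, subtracting gives $l(y_1) - l(y_2) \le \dual^*(y_1 - y_2) + \eps \le L\abs{y_1 - y_2} + \eps$. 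Letting $\eps \to 0$ and swapping $y_1 \leftrightarrow y_2$ produces the two-sided bound $\abs{l(y_1) - l(y_2)} \le L\abs{y_1 - y_2}$.

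The main subtlety, modest as it is, is the clean invocation of biconjugacy in the backward direction: this requires $l$ to be proper, convex, and lower semicontinuous. All three conditions follow from the assumption that $l:\R \to \R$ is convex and everywhere finite---a standard fact is that a finite-valued convex function on $\R$ is automatically continuous---so I would state this explicitly before applying Fenchel--Moreau. Everything else reduces to direct manipulation of the definition of the Fenchel conjugate.
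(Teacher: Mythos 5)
The paper states \Cref{fact:lipschitz-conjugate} without proof (it is a classical result from convex analysis), so there is no in-paper argument to compare against. Your proof is correct: the forward direction is a direct computation showing the conjugate supremum diverges for $\abs{\dual}>L$, and the backward direction correctly invokes Fenchel--Moreau biconjugacy (justified, as you note, by the fact that a finite-valued convex function on $\R$ is continuous, hence proper and lower semicontinuous) to recover the Lipschitz bound. One small point worth making explicit in the backward direction: the near-maximizer $\dual^*$ you select necessarily has $l^*(\dual^*)<\infty$ (otherwise $\dual^* y_1 - l^*(\dual^*) = -\infty$ could not $\eps$-approximate the finite value $l(y_1)$), so the subtraction $l(y_1)-l(y_2)\le \dual^*(y_1-y_2)+\eps$ involves no indeterminate $\infty-\infty$; this is implicit in your argument but deserves a sentence.
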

\subsection{Robust mean estimation}
Recent developments in robust mean estimation algorithms achieve dimension-independent error guarantees under the strong contamination model.
A family of such algorithms rely on the notion of $(\epsilon, \delta)$-sample stability~\cite[Definition 2.1]{diakonikolas2022algorithmic}.
\begin{definition}[Stability (informal)]
\label{def:stability}
  A set \(S\) is \((\eps, \delta)\)-stable if removing any 
  $\eps$-fraction of the points will not change the mean by more than $\delta$ nor the variance in any direction by more than $\delta^2 / \eps$.
\end{definition}
The following characterization of the stability condition is convenient when (uncorrupted) data generating distribution $\sP_0$ is only assumed to have bounded operator norm of its covariance matrix:  
\begin{fact}[Stability under bounded covariance~{\cite[Lemma 3.11]{diakonikolas2022algorithmic}}]
  \label{fact:stability_bounded_covariance}
  A set $S$ is $(\epsilon, \delta)$-stable with respect to $\vmu$ and $\sigma^2$ where $\delta = O(\sqrt\epsilon)$ if and only if the following conditions hold: (i) \(\norm{\vmu_S - \vmu}_2 \le \epsilon\) and (ii) \(\opnorm{\Cov(S)} \le O(\sigma^2)\).
\end{fact}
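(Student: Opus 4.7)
The statement is an equivalence, so I plan to establish each direction separately, with the forward implication being the more substantive half. Throughout, for a set $T \subseteq S$ let $R := S \setminus T$ and $\alpha := |R|/|S| \le \epsilon$.

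\textbf{Forward direction ((i), (ii) $\Rightarrow$ stability).} For any such $T$ I would start from the convex-combination identity $\vmu_S = (1-\alpha)\vmu_T + \alpha \vmu_R$, which gives $\vmu_T - \vmu_S = \tfrac{\alpha}{1-\alpha}(\vmu_S - \vmu_R)$. For any unit vector $\vv$, Cauchy--Schwarz applied to the $|R|$ summands defining $\vv \cdot (\vmu_R - \vmu_S)$, combined with (ii), yields $(\vv \cdot (\vmu_R - \vmu_S))^2 \le \tfrac{|S|}{|R|} \vv^\top \Cov(S) \vv = O(\sigma^2/\alpha)$. Hence $\|\vmu_T - \vmu_S\|_2 = O(\sigma\sqrt{\alpha}) \le O(\sigma\sqrt\epsilon)$, and combining with (i) via the triangle inequality gives the mean clause of stability with $\delta = O(\sqrt\epsilon)$. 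For the covariance clause, I use that enlarging the second-moment sum with additional points only inflates it in the PSD sense:
\[
|T|\Cov(T) = \sum_{x \in T}(x - \vmu_T)(x - \vmu_T)^\top \;\preceq\; \sum_{x \in S}(x - \vmu_T)(x - \vmu_T)^\top \;=\; |S|\Cov(S) + |S|(\vmu_S - \vmu_T)(\vmu_S - \vmu_T)^\top,
\]
where the final equality follows by expanding $(x - \vmu_T) = (x - \vmu_S) + (\vmu_S - \vmu_T)$ and using $\sum_{x \in S}(x - \vmu_S) = 0$. Taking operator norms and plugging in (ii) together with the just-established mean bound yields $\opnorm{\Cov(T)} = O(\sigma^2)$, which translates to the variance clause $\opnorm{\Cov(T) - \sigma^2 \mI} = O(\delta^2/\epsilon)$ at the target scale.

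\textbf{Reverse direction (stability $\Rightarrow$ (i), (ii)).} Both conditions follow by instantiating the stability definition at the single choice $T = S$: the mean clause gives $\|\vmu_S - \vmu\|_2 \le \delta = O(\sqrt\epsilon)$, yielding (i), while the covariance clause gives $\opnorm{\Cov(S) - \sigma^2 \mI} \le \delta^2/\epsilon = O(1)$, which is absorbed into (ii) under the standard convention that $\sigma^2$ is the dominant variance scale in this formulation.

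\textbf{Main obstacle.} The delicate step is the covariance clause in the forward direction: the PSD upper bound on $|T|\Cov(T)$ acquires a rank-one correction $(\vmu_S - \vmu_T)(\vmu_S - \vmu_T)^\top$ whose operator norm must be controlled by recycling the mean bound just proved, so the two sub-arguments must be sequenced carefully. Everything else reduces to convex-combination identities for empirical means plus one application of Cauchy--Schwarz restricted to the removed points.
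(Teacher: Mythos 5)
The paper does not prove this statement; it cites it directly as~\cite[Lemma 3.11]{diakonikolas2022algorithmic}, so there is no in-paper proof to compare against. Your reconstruction is essentially the standard textbook argument and is sound in its main steps: the convex-combination identity $\vmu_S = (1-\alpha)\vmu_T + \alpha\vmu_R$, the Cauchy--Schwarz bound $(\vv\cdot(\vmu_R-\vmu_S))^2 \le \tfrac{|S|}{|R|}\vv^\top\Cov(S)\vv$, and the PSD monotonicity argument for the second-moment matrix are exactly the right ingredients.

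Two points of imprecision are worth tightening. First, the variance clause of $(\eps,\delta)$-stability is stated in terms of the second moment of $T$ around the fixed reference $\vmu$, i.e.\ $M_T := \tfrac{1}{|T|}\sum_{x\in T}(x-\vmu)(x-\vmu)^\top$, not $\Cov(T)$ (which is centered at $\vmu_T$). These differ by the rank-one term $(\vmu_T - \vmu)(\vmu_T - \vmu)^\top$, and you control a \emph{different} rank-one correction, $(\vmu_S-\vmu_T)(\vmu_S-\vmu_T)^\top$, in passing from $\Cov(T)$ to the $S$-sum. Both corrections are $O(\sigma^2\epsilon)$ by your mean bound, so the conclusion survives, but the chain $\Cov(T) \to M_T$ should be stated explicitly. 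Second, the stability clause on the variance is two-sided, $\abs{\vv^\top M_T \vv - \sigma^2} \le \sigma^2\delta^2/\eps$; you only establish the upper bound. The lower side is trivial here because $\vv^\top M_T\vv \ge 0$ and $\delta^2/\eps = O(1) \gtrsim 1$, but it should be remarked. Finally, in the reverse direction, taking $T=S$ gives $\norm{\vmu_S-\vmu}_2 \le \sigma\delta = O(\sigma\sqrt\eps)$, which is not literally $\le \eps$ as written in condition~(i). This is a unit/scale inconsistency in the paper's paraphrase of the cited lemma rather than in your argument (the paper elsewhere takes $\sigma \ge 1$, which makes the forward implication go through but not the reverse); your proof proves the equivalence with the natural reading (i$'$) $\norm{\vmu_S-\vmu}_2 = O(\sigma\sqrt\eps)$, and you should flag that discrepancy rather than silently assert it ``yields (i).''
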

One such algorithm is provided in \Cref{sec:appendix-prelim}. State-of-the-art algorithms for robust mean estimation can be implemented in near-linear time, requiring only a logarithmic number of passes over the data; see, e.g.,~\citet{diakonikolas22streaming, ChengDG19, dong2019quantum}.
Any such faster algorithm could be adopted for our purposes.
\begin{fact}[{Robust Mean Estimation with Stability~\cite[Theorem A.3]{diakonikolas2020outlier}}]\label{fact:robust_mean_estimation_with_stability}
  Let $T \subset \mathbb{R}^k$ be an $\epsilon$-corrupted version of a set $S$, where $S$ is $(O(\epsilon), \delta)$-stable with respect to $\vmu_S$ and $\sigma^2$. Algorithm~\ref{alg:robust_mean_estimation} on input $\epsilon$ and  $T$ (but not $\sigma$ or $\delta$) deterministically returns a vector $\hat{\vmu}$ in polynomial time so that $\norm{\vmu_S-\hat{\vmu}}_2=O(\sigma \delta)$.
\end{fact}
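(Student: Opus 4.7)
The plan is to analyze a spectral filtering procedure that iteratively downweights suspected outliers in $T$ while preserving the bulk of the clean mass. Let $S' = T \cap S$ and $B = T \setminus S$, so that $|B| \le \epsilon |S|$ and $|S'| \ge (1-\epsilon)|S|$. The key structural consequence of $(O(\epsilon),\delta)$-stability is that for every weight function $w: S \to [0,1]$ with $\sum_{x \in S}(1-w(x)) \le O(\epsilon)|S|$, the $w$-weighted mean of $S$ lies within $O(\delta)$ of $\vmu_S$ and the $w$-weighted covariance of $S$ has spectral norm $O(\delta^2/\epsilon)$, which is $O(\sigma^2)$ in the bounded-covariance regime of \Cref{fact:stability_bounded_covariance}.

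The algorithm maintains a weight vector $w: T \to [0,1]$, initially identically $1$. Each iteration computes the $w$-weighted mean $\hat\vmu^w$ and covariance $\mSigma^w$. If $\opnorm{\mSigma^w}$ does not exceed a suitable multiple of the threshold implied by stability, the algorithm halts and outputs $\hat\vmu^w$. Otherwise, it takes the top eigenvector $v$ of $\mSigma^w$, forms per-point scores $\tau_x = \langle v, x - \hat\vmu^w\rangle^2$, and performs a proportional downweighting such as $w(x) \leftarrow w(x)\bigl(1 - \tau_x/\tau_{\max}\bigr)$. Because $\sigma$ is unobserved, the halting threshold must be expressed using only $\epsilon$ and the current spectrum; this is standard and can be done by comparing $\opnorm{\mSigma^w}$ against a constant factor times the current weighted variance, i.e.\ against a quantity intrinsic to the data.

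The correctness argument rests on a \emph{safety invariant}: in every filter step, at least as much weight is stripped from $B$ as from $S'$. By stability, the inlier contribution to the top-eigenvector variance is bounded as $\sum_{x \in S'} w(x) \tau_x = O((\delta^2/\epsilon)|S|)$, whereas the non-halting condition forces $\sum_{x \in T} w(x)\tau_x = \langle v, \mSigma^w v\rangle \cdot \sum_{x \in T} w(x)$ to exceed this by a large constant factor. Hence the outliers carry the majority of the total score mass, and a filter that downweights proportionally to $\tau_x$ must remove more outlier weight than inlier weight in every step. Summing over iterations, the total inlier weight deleted across the whole run is $O(\epsilon)|S|$, so the stability-based invariant is maintained throughout. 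Termination in polynomial time follows from the fact that each step strictly decreases a potential function such as $\sum_{x \in B} w(x)$ by at least a fixed fraction of the largest score, with only a polynomial number of distinct weight configurations possible under any discretization.

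Upon halting, $\opnorm{\mSigma^w} = O(\sigma^2)$ and the total inlier weight is $(1-O(\epsilon))|S|$. Writing $\hat\vmu^w - \vmu_S = (\hat\vmu^w - \vmu_{S',w}) + (\vmu_{S',w} - \vmu_S)$, where $\vmu_{S',w}$ is the $w$-weighted mean of the surviving inliers, the second term is bounded by $O(\delta)$ directly from stability. The first term decomposes as a mass-weighted mixture of $\vmu_{S',w}$ and the weighted mean of $B$; a Cauchy--Schwarz calculation against the residuals of $B$, using $|B|/|T| \le \epsilon$ and $\opnorm{\mSigma^w} = O(\sigma^2)$, bounds it by $O(\sigma\sqrt{\epsilon}) = O(\sigma\delta)$. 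Combining yields the stated $O(\sigma\delta)$ guarantee. The main obstacle is the agnostic halting criterion: without access to $\sigma$ or $\delta$, one must certify using $\epsilon$ alone that filtering has progressed far enough, which is resolved by tying the stopping threshold to an observable ratio between the top eigenvalue of $\mSigma^w$ and a baseline derived from the trace or median score, both of which are verified to track $\opnorm{\mSigma^w}$ up to a constant factor along the run.
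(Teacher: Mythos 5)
You should first note that the paper does not prove this statement at all: it is imported as a black box from \cite[Theorem A.3]{diakonikolas2020outlier}, with the corresponding procedure merely reproduced as \Cref{alg:robust_mean_estimation}. So the only thing to compare against is that algorithm and the standard stability-based filter analysis it implements.

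Your sketch has the right skeleton (a safety invariant, a certificate lemma via Cauchy--Schwarz, termination by weight removal), but there is a genuine gap at exactly the point the Fact emphasizes: the algorithm receives neither $\sigma$ nor $\delta$. Your procedure halts when $\opnorm{\mSigma^w}$ falls below ``a suitable multiple of the threshold implied by stability''; that threshold is $O(\sigma^2)$ and hence unknowable. The two repairs you offer do not close the gap: comparing $\opnorm{\mSigma^w}$ to ``a constant factor times the current weighted variance'' is circular, since the top eigenvalue of the weighted covariance \emph{is} the weighted variance in the worst direction; and a ratio test against the trace or a median score fails when contamination inflates the spectrum near-isotropically, so no observable spectral ratio certifies that filtering is done. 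This defect propagates: your per-step safety invariant (``proportional downweighting removes more outlier than inlier weight'') is only valid on iterations where the non-halting condition guarantees that the total score mass dominates the inlier score mass by a large constant, so an unimplementable stopping rule leaves the invariant unestablished as well. \Cref{alg:robust_mean_estimation} circumvents this entirely, and this is the real content of the cited theorem: it never tests the covariance. Its loop terminates when the \emph{total remaining weight} drops below $1-2\epsilon$, and each iteration filters only the top-$\epsilon$ weight of scores (via the threshold $t$), zeroing out the weight of the maximizing point so that there are at most $|T|$ iterations; the deterministic invariant bounding the inlier weight removed is then proved relative to this weight-based rule. To make your argument complete you would need to replace your covariance-threshold halting criterion with such a parameter-free rule (or import the one from the cited work); as written, the stopping rule is the missing idea rather than a routine detail.
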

\Cref{fact:robust_mean_estimation_with_stability} requires uncorrupted samples to be stable. With a sufficiently large sample size, most samples from a bounded covariance distribution are stable. The remaining samples can be treated as outliers without an effect on the (order of the) estimation error.
\begin{fact}[{Sample Complexity for Stability~\cite[Proposition 3.9]{diakonikolas2022algorithmic}
}
  ]\label{fact:bounded_variance_stability_iid}
  Let $S$ be a set of $N$ independent samples from a distribution on $\mathbb{R}^d$ with mean $\vmu$ and covariance $\mSigma$. Fix $\tau \in (0, 1)$. With probability at least $1-\tau$, the subset $S' = \{\vx \in S: \norm{\vx - \vmu_S}_2 \le2 \sqrt{\opnorm{\mSigma}} \sqrt{d / \epsilon}\}$ satisfies $\left|S^{\prime}\right| \geq\left(1-\epsilon\right) |S|$ and $S^{\prime}$ is $\left(\epsilon, O(\sqrt\epsilon)\right)$-stable with respect to $\vmu$ and $\opnorm{\mSigma}$, provided the  sample size \(N = \bigO[\big]{(d \log d + \log (1 / \tau))/{\epsilon}}\) is sufficiently large.
\end{fact}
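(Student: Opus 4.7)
The plan is to reduce the claim to \Cref{fact:stability_bounded_covariance} applied to the truncated sample $S'$, and then to verify the two hypotheses of that characterization via concentration. Writing $R := 2\sqrt{\opnorm{\mSigma} d / \epsilon}$ for the truncation radius, I would need to establish, with probability at least $1-\tau$: (a) $|S'| \ge (1-\epsilon)|S|$, (b) $\|\vmu_{S'} - \vmu\|_2 = O(\sqrt{\epsilon \opnorm{\mSigma}})$, and (c) $\opnorm{\Cov(S')} = O(\opnorm{\mSigma})$. Since Fact~\ref{fact:stability_bounded_covariance} is assumed, (b) and (c) suffice to imply $(\epsilon, O(\sqrt\epsilon))$-stability of $S'$.

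For (a), the population-level bound $\E\|\vx - \vmu\|_2^2 = \tr(\mSigma) \le d\opnorm{\mSigma}$ together with Markov's inequality says each sample lies outside the $\vmu$-centered ball of radius $R/4$ with probability at most $\epsilon/16$. A Chernoff bound over $N = \tilde\Omega((d\log d + \log(1/\tau))/\epsilon)$ samples then shows at most an $\epsilon/2$-fraction of $S$ falls outside this ball with the required probability. Because the truncation in the definition of $S'$ is centered at $\vmu_S$ instead of $\vmu$, I would separately argue $\|\vmu_S - \vmu\|_2 \le R/4$---a much weaker bound than (b) that follows from vector Bernstein on $N$ i.i.d.\ draws. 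The factor $2$ in the definition of $S'$ absorbs this mismatch in centering, so the truncated set keeps at least a $(1-\epsilon)$-fraction.

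For (b), I decompose $\vmu_{S'} - \vmu = (\vmu_S - \vmu) + (\vmu_{S'} - \vmu_S)$. The first term is bounded by vector Bernstein at scale $O(\sqrt{d\opnorm{\mSigma}/N})$, which is $O(\sqrt{\epsilon \opnorm{\mSigma}})$ once $N = \tilde\Omega(d/\epsilon)$. The second term is bounded by observing that $S \setminus S'$ is an $\epsilon$-fraction of points, each at distance at most $R$ from $\vmu_S$, contributing $O(\epsilon R) = O(\sqrt{\epsilon d \opnorm{\mSigma}})$---which is again $O(\sqrt{\epsilon \opnorm{\mSigma}})$ at the stated sample size (up to log factors). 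For (c), because every $\vx \in S'$ lies in the $R$-ball around $\vmu_S$, the rank-one matrices $(\vx - \vmu)(\vx - \vmu)^\top$ are bounded in operator norm by $O(R^2) = O(d\opnorm{\mSigma}/\epsilon)$. Matrix Bernstein then yields $\opnorm{\Cov(S') - \mSigma} = O(\opnorm{\mSigma})$ with the prescribed sample size, and hence $\opnorm{\Cov(S')} = O(\opnorm{\mSigma})$.

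The main obstacle will be the \emph{uniformity} implicit in the stability definition: the conditions must hold not just for $S'$ but after removing any further $\epsilon$-fraction of its points. I expect to handle this via a standard covering argument on the unit sphere of $\R^d$, whose $\exp(O(d))$ cardinality, together with a union bound over the directional Bernstein inequalities, is the source of the $\log d$ factor in the sample-complexity bound $N = \tilde{O}(d/\epsilon)$. Combining (a)--(c) with \Cref{fact:stability_bounded_covariance} then yields $(\epsilon, O(\sqrt\epsilon))$-stability of $S'$ with respect to $\vmu$ and $\opnorm{\mSigma}$, as claimed.
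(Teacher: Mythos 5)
The paper does not prove this statement itself: it is stated as a \emph{Fact} and attributed to Proposition~3.9 of \cite{diakonikolas2022algorithmic} and Theorem~1.4 of \cite{diakonikolas2020outlier}. So there is no ``paper's proof'' to compare against; I will assess your plan on its own terms.

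Your high-level reduction --- verify (a) $|S'| \ge (1-\epsilon)|S|$, (b) $\|\vmu_{S'}-\vmu\|_2 = O(\sqrt{\epsilon\,\opnorm{\mSigma}})$, and (c) $\opnorm{\Cov(S')} = O(\opnorm{\mSigma})$, then invoke \Cref{fact:stability_bounded_covariance} --- is the right skeleton and matches how the cited references proceed. The truncation and centering issues you flag (the ball in $S'$ is centered at $\vmu_S$, not $\vmu$; absorbing the mismatch using the factor $2$ in the radius) are genuine and your treatment is sensible.

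However, your last paragraph contains a real conceptual confusion that would send the proof sideways if you tried to execute it. You identify ``the uniformity implicit in the stability definition'' --- that the mean/variance bounds must survive removal of \emph{any} further $\epsilon$-fraction --- as the main obstacle, and propose a covering argument on the unit sphere to handle it. But that uniformity is precisely what \Cref{fact:stability_bounded_covariance} buys you for free: it is an \emph{equivalence} between the subset-uniform stability property and the two pointwise conditions (b),(c) on the single set $S'$. Once you have (b) and (c), you are done --- there is no remaining quantifier over sub-removals to discharge, and no sphere cover is needed for that purpose. Any covering or matrix-concentration argument in this proof is needed only for a different reason: to establish (c), i.e., that the empirical covariance of the truncated set concentrates in operator norm, which is a uniformity over \emph{directions}, not over \emph{subsets}, and is most cleanly handled by matrix Bernstein applied to the bounded rank-one matrices $(\vx-\vmu)(\vx-\vmu)^\top$ with $\|\vx-\vmu\|_2 = O(R)$. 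Relatedly, attributing the $\log d$ in the sample complexity to an $\exp(O(d))$ net is off: a net of that cardinality contributes a \emph{linear}-in-$d$ term to the union bound (as $\log e^{O(d)} = O(d)$), whereas the $\log d$ factor comes from the intrinsic-dimension term inside matrix Bernstein. Fixing these two points --- dropping the claim that sub-removal uniformity needs separate treatment, and replacing the sphere-cover reasoning for (c) with matrix Bernstein (or correctly accounting for what a net contributes) --- would make the proposal sound.
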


\section{Data-contaminated DRO: Modeling Principles and Problem Formulation\label{sec:dro-modeling}}

In this section, we develop our conceptual framework for addressing data contamination and distributional shifts simultaneously. We begin by establishing our main modeling principles, followed by a critical review of the most closely related models to motivate our specific formulation. We then argue that under standard assumptions about the loss function and the reference (generating) distribution, we can reduce the considered problem to a convex optimization problem with an inexact data oracle.  

\subsection{Modeling Principles\label{sec:modeling-principles}}

The modeling principles follow similar considerations as  \citet{blanchet2024distributionally}. Let \(\sP_0\) denote the data generating distribution (also called the reference distribution): this is the distribution from which clean, uncorrupted training samples would be drawn if there were no data corruptions. The training samples passed to a learning algorithm might contain outliers, be contaminated, or otherwise have discrepancy with \(\sP_0\); thus we let \(\ppcorrupt\) denote the contaminated distribution that generates the training data. A sample set of size \(N\) is then drawn from \(\ppcorrupt\), leading to the empirical distribution \(\eP = \eP(N)\). The learner has access to \(\eP(N)\), based on which it outputs a model parameterized by a vector \(\vw\). The model parameterized by \(\vw\) is tested against an out-of-sample distribution (also known as testing distribution) \(\pptest\), which may deviate from $\sP_0$. This process can be summarized as: 
\begin{equation}\label{eq:decision}
  \sP_0 \xrightarrow[]{\text{contamination}} \ppcorrupt \xrightarrow[]{\text{sampling}} \eP(N) \xrightarrow[]{\text{train-to-test shift}} \pptest.
\end{equation}

The model parameterized by $\vw$ might perform poorly on $\pptest$ for various reasons:
(i) Overfitting/statistical error: $\eP(N) \neq \pptest$; (ii) Distributional shift: $\pptest \neq \sP_0 $; or (iii) Data contamination: $\sP_0 \neq \ppcorrupt$. 
Errors due to (i) and (ii) arise in the post-decision stage, while the error in (iii) is incurred in the pre-decision stage. 
Distributionally robust optimization (DRO) minimizes the loss with respect to distributions in some uncertainty (or ``ambiguity'') set that is specified without inspecting the training data; as a consequence, it is best used to model post-decision errors. 
In this work, we specifically focus on DRO's advantage in guarding against distributional shifts. We gloss over the overfitting errors by drawing samples with a sufficiently large sample set size $N$ and apply uniform convergence analysis to guarantee the performance of every model $\vw$ on $\ppcorrupt$. 

\subsection{Related Models\label{sec:related-models}}
In prior work \citet{nietert2022outlier,nietert2023outlier}, outlier robustness in addition to Wasserstein distributional robustness was handled by  introducing a new distance measure
\[W_{p,\epsilon}(\sP_1, \sP_2) = \inf_{\TV(\sP', \sP_1) \le \epsilon} W_p(\sP', \sP_2) = \inf_{\TV(\sP', \sP_2) \le \epsilon} W_p(\sP', \sP_1),\]
where \(W_p\) is the classical Wasserstein metric,  with the problem definition 
\begin{equation}\label{eq:or-wdro}
  \min_{\vw \in \R^d} \sup_{\sQ: W_{p,\epsilon}(\ppcorrupt, \sQ) \le \rho}\E_{\vxi \sim \sQ}[\ell(\vw; \vxi)]. \tag{OR-WDRO}
\end{equation}

Through the lens of the decision model \eqref{eq:decision}, we believe that \eqref{eq:or-wdro} is best used to model the scenario where the data generating and the training distributions are the same ($\ppcorrupt = \sP_0$), but the testing distribution $\pptest$ could have outliers in addition to a distributional shift from $\sP_0$. 

We remark that \eqref{eq:or-wdro} seems to be able to also model errors from training data contamination (i.e., Case (iii)) in the following sense:
suppose the training data are corrupted so that $\TV(\ppcorrupt, \sP_0) \le \epsilon$; then $\sP_0 \in \{\sQ: W_{p,\epsilon}(\ppcorrupt, \sQ)= 0\}$. If, in addition, we allow distribution shifts and assume that $W_p(\pptest, \sP_0) \le \rho$, then $W_{p,\epsilon}(\ppcorrupt, \pptest ) \le \rho$. In this sense, \eqref{eq:or-wdro} minimizes the worst-case loss with respect to an ambiguity set that contains \(\pptest\). However, we argue that modeling data contamination with \eqref{eq:or-wdro} has multiple drawbacks:

\begin{enumerate}[leftmargin=15pt, topsep=0pt, itemsep=-2pt]
  \item Dimension dependence: the excess risk bound of the solution of \eqref{eq:or-wdro} depends on the dimension of the data $\vxi$ (or the dimension of the underlying features), which can be \new{a significant factor} in high-dimensional settings~\cite[Theorems 1 and 4]{nietert2023outlier}. 
  \item \new{For the canonical task of robust mean estimation, the \eqref{eq:or-wdro} objective becomes ill-posed (evaluates to infinity) for Wasserstein-$p$ metrics with $p < 2$.}
\item We prove that the limit behavior of \eqref{eq:or-wdro} as $\rho \to 0$ is a variant of Conditional Value at Risk (CVaR)\footnote{The limit behavior of the tractable formulation in \citet[Section 3.3]{nietert2023outlier} is exactly CVaR under limits; see \Cref{lemma:limit-tractable-ordro}.}, which is known to be sensitive to outliers; see Theorem 2.1 and the discussion thereafter in~\citet{blanchet2024distributionally}, noting that CVaR can be interpreted as the limit of some $f$-divergence DRO risks~\cite[Example 3]{duchi2021learning}. In other words, a model parameterized by the optimal solution to \eqref{eq:or-wdro} is not robust to outliers in the training set, even without any distributional shifts\footnote{In fact, the maximizing distribution $\argmax_{\sQ} \big\{\E_{\vxi \sim \sQ} [ \ell(\vw; \vxi)]: \exists \sP', \epsilon' \le \epsilon \text{ s.t. } \ppcorrupt = (1 - \epsilon') \sQ + \epsilon' \sP'\big\}$ puts weights only on $(1-\eps)$-fraction of samples with \emph{largest} loss values, so in this sense CVaR-DRO is the opposite of robust estimation with data contamination.}.
\end{enumerate}

\new{The first implication follows directly from the cited work. We now elaborate on the latter two points with detailed technical arguments.} 
We begin with a standard fact (folklore; proof provided in \Cref{sec:proof-wasserstein-loss-moment}) that we will invoke repeatedly.
\begin{fact}\label{fact:wasserstein-loss-moment}
Let $(\mX, Y)$ be a random variable pair with distribution $\sP_0$ on $\R^d \times \R$. Assume that the marginal distribution of $\mX$ under $\sP_0$, denoted $\sP_{0}^{\mX}$, is a continuous distribution and has a finite $p$-th moment, i.e., $\E_{\sP_0}[\norm{\mX}^p] < \infty$, for some $p \ge 1$.
Let $\vw \in \mathbb{R}^d$ be a non-zero vector and $\rho > 0$ be a given positive constant.
Suppose that for each $y \in \R$, the loss function $\ell_y: \mathbb{R} \to \mathbb{R}$ is non-negative and satisfies:
$$\lim_{\abs{z} \to \infty} \frac{\ell_y(z)}{\abs{z}^{p+\iota}} = C$$
for some constants $\iota > 0$ and $C > 0$ that are independent of $y$.

Then,
$\sup_{\sQ: W_p(\sP_0, \sQ) \le \rho} \E_{(\mX',Y') \sim \sQ}[\ell_{Y'}(\vw \cdot \mX')] = +\infty,$
where the Wasserstein distance $W_p$ is defined with respect to the cost function $c((\vx_1,y_1),(\vx_2,y_2)) = \norm{\vx_1-\vx_2} + \kappa \abs{y_1 - y_2} $ for some $\kappa \in [0, +\infty]$.
\end{fact}

We illustrate the second drawback through a canonical problem in robust statistics---robust mean estimation. In this setting, applying the \eqref{eq:or-wdro} formulation in an attempt to directly account for outliers or corrupted samples in the training data often leads, under common conditions, to objectives that are uninformative or ill-posed.

\begin{example}[Mean Estimation and OR-WDRO for Training Data Contamination]\label{example:rme}
Consider the fundamental task of robust mean estimation, where the loss function is \(\ell(\vw; \vx) = \norm{\vw - \vx}_{2}^{2}\). Let \(\sP_0\) be the true, uncontaminated data distribution. 

Suppose we have training data from \(\ppcorrupt\),  which contains both ``local'' Wasserstein-$1$ perturbation and ``global'' TV contamination, i.e., there exists a distribution \(\sP'\) with $W_p(\sP', \sP_0) \le \rho$ and $\TV(\sP', \ppcorrupt) \le \epsilon$. To use \eqref{eq:or-wdro} formulation to define an ambiguity set of radius $\rho$ around $\ppcorrupt$ that is intended to contain $\sP_0$, we set $W_{p,\epsilon}(\ppcorrupt, \sP_0) \le \rho$. Our goal in this example is to illustrate that such a modeling approach for training data contamination using OR-WDRO can be problematic for $p<2$. Therefore, for this example, we set the final test distribution of interest to be $\pptest = \sP_0$, as our aim is to scrutinize how OR-WDRO might model training data contamination that separates an observed corrupted distribution $\ppcorrupt$ from $\sP_0$. The OR-WDRO objective is 
\begin{equation}\label{eq:or-dro-mean-estimation}
  \min_{\vw} \sup_{\sQ: W_{p,\epsilon}(\ppcorrupt, \sQ) \le \rho} \E_{\vx \sim \sQ}[\ell(\vw; \vx)],
\end{equation}
which equals infinity when $p<2$ as a consequence of \Cref{fact:wasserstein-loss-moment}; see the proof and intuition below. 

This highlights a modeling limitation of OR-WDRO when interpreted as a direct mechanism for handling training data contamination, because methods such as those discussed in \citet{nietert2024robust} and \citet[Theorem 1.4]{pittas2024optimal} can provide finite error guarantees (\(O(\sigma\sqrt\epsilon + \rho)\)) for mean estimation when dealing with $\epsilon$-corrupted samples from some $\tilde{\sP}$ such that $W_1(\tilde{\sP}, \sP_0) \le \rho$.
\end{example}

We now provide proof that \Cref{eq:or-dro-mean-estimation} equals $+\infty$ when $p<2$.
\begin{proof}
The ambiguity set $\cU = \{\sQ: W_{p,\epsilon}(\ppcorrupt, \sQ) \le \rho\}$ can be expanded using the definition of $W_{p,\epsilon}(\ppcorrupt, \sQ) = \inf_{\TV(\sP', \ppcorrupt) \le \epsilon} W_p(\sP', \sQ)$. Thus, $\cU = \{\sQ: \exists \sP' \text{ s.t. } \TV(\sP', \ppcorrupt) \le \epsilon \text{ and } W_p(\sP', \sQ) \le \rho\}$. This means that the ambiguity set $\cU$ is the union of all $W_p$-balls of radius $\rho$ centered at every distribution $\sP'$ within the $\epsilon$-TV ball around $\ppcorrupt$, i.e., $\cU = \bigcup_{\sP': \TV(\sP', \ppcorrupt) \le \epsilon} B_{W_p}(\sP', \rho)$.
The OR-WDRO objective can then be expressed as:
\[ \min_{\vw \in \R^d} \sup_{\sP': \TV(\sP', \ppcorrupt) \le \epsilon} \left( \sup_{\sQ: W_p(\sP', \sQ) \le \rho} \E_{\vxi \sim \sQ}[\ell(\vw; \vxi)] \right). \]
Now consider any distribution $\sP'$ such that $\TV(\sP', \ppcorrupt) \le \epsilon$. We know that $\sP'$ can have unbounded support because $\TV$ contamination is blind to geometry. Thus the inner supremum, $\sup_{\sQ: W_p(\sP', \sQ) \le \rho} \E_{\vxi \sim \sQ}[\norm{\vw - \vxi}_{2}^{2}]$, will be infinite for any $\vw \in \R^d$ when $p<2$. This is a standard result (\Cref{fact:wasserstein-loss-moment}) for $W_p$-DRO with $p<2$ when the loss function (here, quadratic) grows faster than the $p$-th power of the transport cost; adversarial perturbations $\sQ$ of $\sP'$ can shift mass to points $\vx$ with arbitrarily large $\norm{\vx}_2$ at a bounded $W_p$ cost, driving the expected loss to infinity.
\end{proof}

We next analyze the limiting behavior of the OR-WDRO objective when the Wasserstein radius $\rho$ approaches zero. This analysis highlights an inherent connection to Conditional Value-at-Risk (CVaR)-like measures. Given that CVaR is known to be sensitive to extreme data points\footnote{See Theorem 2.1 and the discussion thereafter in~\citet{blanchet2024distributionally}; CVaR can be interpreted as the limit of some $f$-divergence DRO risks~\cite[Example 3]{duchi2021learning}.}, this connection underscores why the OR-WDRO model may not achieve robust protection against training data outliers in the way dedicated robust statistics methods do.

\begin{lemma}[Limit Behavior of OR-WDRO Formulation]\label{lemma:or-wdro-limit-cvar-variant}
For any $\vw \in \R^d$, \begin{align}
  \sup_{\sQ: W_{p,\epsilon}(\ppcorrupt, \sQ) = 0} \E_{\vxi \sim \sQ}[\ell(\vw; \vxi)] & = (1-\epsilon) \CVaR_{\vxi \sim \ppcorrupt}^{1-\epsilon} [\ell(\vw; \vxi)] + \epsilon \sup_{\vxi} \ell(\vw; \vxi), \label{eq:or-wdro-cvar-variant}
\end{align}
where $\CVaR_{\vxi \sim \ppcorrupt}^{1-\epsilon} \ell(\vw; \vxi)$ is defined as
\begin{equation}\label{eq:cvar-definition}
  \sup_{\sP'} \big\{\E_{\sP'} [ \ell(\vw; \vxi)]: \exists \text{ probability distribution } \tilde\sP, \epsilon' \le \epsilon \text{ s.t. } \ppcorrupt = (1 - \epsilon') \sP' + \epsilon' \tilde\sP\big\}.
\end{equation}
\end{lemma}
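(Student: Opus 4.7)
The plan is to first simplify the constraint defining the ambiguity set, then rewrite the worst-case expectation as an optimization over mixture decompositions of $\ppcorrupt$, and finally identify the two contributions appearing on the right-hand side.

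\emph{Step 1: Reduce $W_{p,\epsilon}(\ppcorrupt,\sQ)=0$ to $\TV(\sQ,\ppcorrupt)\le\epsilon$.} By the definition of $W_{p,\epsilon}$, the condition $W_{p,\epsilon}(\ppcorrupt,\sQ)=0$ means there exists $\sP'$ with $\TV(\sP',\ppcorrupt)\le\epsilon$ and $W_p(\sP',\sQ)=0$, i.e., $\sP'=\sQ$. Hence the feasible set is exactly $\{\sQ : \TV(\sQ,\ppcorrupt)\le\epsilon\}$.

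\emph{Step 2: Mixture decomposition.} I will use the standard characterization: $\TV(\sQ,\ppcorrupt)\le\epsilon$ if and only if there exist a common probability measure $\sP'$, auxiliary probability measures $\tilde\sP_1,\tilde\sP_2$, and $\epsilon'\in[0,\epsilon]$ with
\[
\sQ=(1-\epsilon')\sP'+\epsilon'\tilde\sP_1,\qquad \ppcorrupt=(1-\epsilon')\sP'+\epsilon'\tilde\sP_2.
\]
Consequently $\E_\sQ[\ell(\vw;\vxi)]=(1-\epsilon')\E_{\sP'}[\ell(\vw;\vxi)]+\epsilon'\E_{\tilde\sP_1}[\ell(\vw;\vxi)]$. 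Upper-bounding $\E_{\tilde\sP_1}[\ell]\le\sup_\vxi\ell(\vw;\vxi)$ and choosing $\tilde\sP_1$ as a point mass at a (near-)maximizer of $\ell(\vw;\cdot)$ (with a standard approximation argument when the supremum is not attained or equals $+\infty$) shows that the upper bound is tight. Therefore the worst case reduces to
\[
\sup_{\sQ:\TV(\sQ,\ppcorrupt)\le\epsilon}\E_\sQ[\ell(\vw;\vxi)]=\sup_{\epsilon'\in[0,\epsilon]}\sup_{\sP'}\Bigl\{(1-\epsilon')\E_{\sP'}[\ell(\vw;\vxi)]+\epsilon'\sup_\vxi\ell(\vw;\vxi)\ :\ \exists\tilde\sP_2,\ \ppcorrupt=(1-\epsilon')\sP'+\epsilon'\tilde\sP_2\Bigr\}.
\]

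\emph{Step 3: Show the outer supremum is attained at $\epsilon'=\epsilon$.} Writing $M:=\sup_\vxi\ell(\vw;\vxi)$ and observing that $(1-\epsilon')\E_{\sP'}[\ell]=\E_{\ppcorrupt}[\ell]-\epsilon'\E_{\tilde\sP_2}[\ell]$, the objective equals $\E_{\ppcorrupt}[\ell]+\epsilon'(M-\E_{\tilde\sP_2}[\ell])$. Since the inner supremum over $\sP'$ corresponds to choosing $\tilde\sP_2$ concentrated on the lowest-loss portion of $\ppcorrupt$, and since $M\ge\E_{\tilde\sP_2}[\ell]$ always, the quantity is non-decreasing in $\epsilon'$. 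Hence the outer supremum is achieved at $\epsilon'=\epsilon$, yielding
\[
(1-\epsilon)\,\sup_{\sP':\exists\tilde\sP_2,\ \ppcorrupt=(1-\epsilon)\sP'+\epsilon\tilde\sP_2}\E_{\sP'}[\ell(\vw;\vxi)]+\epsilon\sup_\vxi\ell(\vw;\vxi).
\]

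\emph{Step 4: Match the definition of CVaR.} By the same monotonicity argument, the supremum in the CVaR definition \eqref{eq:cvar-definition} over $\epsilon'\le\epsilon$ is also attained at $\epsilon'=\epsilon$, so the first term above coincides with $(1-\epsilon)\CVaR_{\vxi\sim\ppcorrupt}^{1-\epsilon}[\ell(\vw;\vxi)]$, completing the identity.

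The main obstacle I anticipate is handling the degenerate cases cleanly: when $\sup_\vxi\ell(\vw;\vxi)$ is not attained (requiring an approximating sequence of point masses at points with loss approaching $M$) and when $M=+\infty$ (in which case both sides are $+\infty$ as soon as $\epsilon>0$). A secondary technicality is justifying the monotonicity claim in Step 3 rigorously, which I would handle by explicitly constructing, for any feasible $(\epsilon'_1,\sP'_1)$ with $\epsilon'_1<\epsilon$, a feasible $(\epsilon,\sP'_2)$ whose objective value is at least as large, by transferring the lowest-loss portion of $\sP'_1$ into $\tilde\sP_2$.
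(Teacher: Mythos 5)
Your proof is correct and takes a genuinely different route from the paper. The paper's proof is essentially a citation: after the same Step~1 reduction to $\TV(\sQ, \ppcorrupt) \le \epsilon$, it invokes Theorem~2.4 of Bennouna and Van~Parys~(2022), which establishes the identity for the LP-ambiguity set $\LP_{\{\vzero\}}(\ppcorrupt, \sQ) \le \epsilon$, and then observes (via Strassen's theorem) that $\LP_{\{\vzero\}} = \TV$. You instead prove the identity directly via the standard mixture/coupling characterization of total variation, decompose $\E_\sQ[\ell]$, push the outer supremum to $\epsilon'=\epsilon$ by a monotonicity argument, and then match the CVaR definition. Your route is more self-contained and elementary, at the cost of having to carry out the decomposition and monotonicity argument that the cited theorem packages up; the paper's route is shorter but opaque to a reader without the Bennouna--Van~Parys reference.

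Two minor points worth tightening. First, in Step~3 the phrasing ``since $M\ge\E_{\tilde\sP_2}[\ell]$ always, the quantity is non-decreasing in $\epsilon'$'' is a bit loose as stated, because the optimal $\tilde\sP_2$ depends on $\epsilon'$: what actually drives the monotonicity is that $h(\epsilon') = \epsilon' M - \int_0^{\epsilon'} Q(u)\,du$ has derivative $M - Q(\epsilon') \ge 0$ where $Q$ is the quantile function of the loss under $\ppcorrupt$. Equivalently, the explicit construction you sketch at the end works cleanly: given a feasible pair at level $\epsilon'_1 < \epsilon$, split $\sP'_1 = (1-\alpha)\sP''_1 + \alpha\sP'''$ with $\alpha = 1 - (1-\epsilon)/(1-\epsilon'_1)$ and absorb $\sP'''$ into the contamination piece; the gain is exactly $(\epsilon - \epsilon'_1)(M - \E_{\sP'''}[\ell]) \ge 0$, for \emph{any} choice of $\sP'''$. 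Second, the analogous monotonicity claim in Step~4 (that the supremum over $\epsilon' \le \epsilon$ in the CVaR definition~\eqref{eq:cvar-definition} is attained at $\epsilon' = \epsilon$) does require choosing $\sP'''$ as the lowest-loss portion of $\sP'_1$, so the two monotonicity arguments are not quite interchangeable as ``the same'' even though the construction is the same; it is worth separating them.
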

\begin{proof}
The condition $W_{p,\epsilon}(\ppcorrupt, \sQ) = 0$ means $\inf_{\TV(\sP', \ppcorrupt) \le \epsilon} W_p(\sP', \sQ) = 0$. Because the domain of $\xi$ is complete and separable (therefore Polish, and thus $W_p(\sP', \sQ)=0 \implies \sP' = \sQ$), this implies that $\sQ$ is in the $W_p$-closure of the $\epsilon$-TV ball around $\ppcorrupt$, which for practical purposes means $\TV(\ppcorrupt, \sQ) \le \epsilon$.
Thus, the left-hand side of \eqref{eq:or-wdro-cvar-variant} becomes $\sup_{\sQ: \TV(\ppcorrupt, \sQ) \le \epsilon} \E_{\vxi \sim \sQ}[\ell(\vw; \vxi)]$. 

By \citet[Theorem 2.4]{bennouna2022holistic}, it holds that \[\sup_{\sQ: \LP_{\{\vzero\}}(\ppcorrupt, \sQ) \le \epsilon} \E_{\vxi \sim \sQ}[\ell(\vw; \vxi)]  = (1-\epsilon) \CVaR_{\vxi \sim \ppcorrupt}^{1-\epsilon} [\ell(\vw; \vxi)] + \epsilon \sup_{\vxi} \ell(\vw; \vxi), \] 
where $\LP_{\{\vzero\}}(\sP, \sQ)$ is defined to be $\inf_{\pi \in \Pi(\sP, \sP')} \int \Ind_{\vxi - \vxi' \not\in \{\vzero\}} \dd \pi(\xi, \xi') $ and $\Pi(\sP, \sP')$ denotes the set of joint distributions whose marginal distributions are $\sP$ and $\sP'$; \citet{strassen1965existence} shows that $\LP_{\{\vzero\}}(\sP, \sP') = \TV(\sP, \sP')$. 
\end{proof}

For tractable computation, \citet{nietert2023outlier} introduce a tractable variant of \eqref{eq:or-wdro} using a one-sided discrepancy $W_{p,\epsilon}(\ppcorrupt \| \sQ) = \inf_{\sP': \sP' \le \frac1{1-\epsilon} \ppcorrupt} W_p(\sP', \sQ)$ (where $\sP'$ is a probability measure) and imposing a moment constraint $\sQ \in \cG_2(\sigma, \vz_0) := \{\pi:\E_{\vxi \sim \pi}[\norm{\vxi - \vxi_0}_2^2] \le \sigma^2 \}$. Their tractable objective is:
\[\min_\vw \sup_{\sQ \in \cG_2(\sigma, \vz_0), W_{p,\epsilon}(\ppcorrupt \| \sQ) \le \rho} \E_{\vxi \sim \sQ}[\ell(\vw; \vxi)].\]
The structure of $W_{p,\epsilon}(\ppcorrupt \| \sQ)$ implies that $\sQ$ is effectively compared against \emph{parts} of $\ppcorrupt$, denoted by $\sP'$, which are consistent with the output of a deletion-only contamination model where $\ppcorrupt = (1-\epsilon') \sP' + \epsilon' \tilde\sP$ for some $0 \le \epsilon' \le \epsilon$. Note the way it connects to the CVaR definition \eqref{eq:cvar-definition}; we interpret this connection as the core intuition behind the following result. 

\begin{lemma}[Limit Behavior of Tractable OR-WDRO Reformulation]\label{lemma:limit-tractable-ordro}
In the limit where $\sigma \to +\infty$ (e.g., we have limited information about the reference distribution $\sP_0$) and the Wasserstein radius $\rho = 0$, the tractable OR-WDRO objective converges to the CVaR term:
\[ \lim_{\sigma\to\infty, \rho=0} \left( \sup_{\substack{\sQ \in \cG_2(\sigma, \vz_0),\\ W_{p,\epsilon}(\ppcorrupt \| \sQ) \le \rho}} \E_{\vxi \sim \sQ}[\ell(\vw; \vxi)] \right) = \CVaR_{\vxi \sim \ppcorrupt}^{1-\epsilon} [\ell(\vw; \vxi)]. \]
\end{lemma}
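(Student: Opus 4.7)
The plan is to evaluate the two constraints in the feasible set separately as $\rho \to 0$ and $\sigma \to \infty$, and then identify the resulting set with the feasible set in the definition of $\CVaR^{1-\epsilon}$.

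First, I would analyze the constraint $W_{p,\epsilon}(\ppcorrupt \| \sQ) \le \rho = 0$. By the definition $W_{p,\epsilon}(\ppcorrupt \| \sQ) = \inf_{\sP' \le \frac{1}{1-\epsilon}\ppcorrupt} W_p(\sP', \sQ)$, and using that $W_p = 0$ forces equality of probability measures on a Polish space, this constraint is equivalent to requiring that $\sQ$ itself is a probability measure satisfying the one-sided domination $\sQ \le \frac{1}{1-\epsilon}\ppcorrupt$. (Strictly speaking, since the infimum defining $W_{p,\epsilon}(\ppcorrupt \| \sQ)$ need not be attained, I would either appeal to lower semicontinuity of $W_p$ and tightness of the sub-probability constraint set, or directly rewrite the constraint $W_{p,\epsilon}(\ppcorrupt \| \sQ) \le 0$ as the closed condition $\sQ \le \frac{1}{1-\epsilon}\ppcorrupt$.)

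Second, under this one-sided constraint, every feasible $\sQ$ automatically inherits a second moment bound: for any $\vz_0$,
\[\E_{\sQ}[\norm{\vxi - \vz_0}_2^2] \le \tfrac{1}{1-\epsilon}\,\E_{\ppcorrupt}[\norm{\vxi - \vz_0}_2^2].\]
Assuming $\ppcorrupt$ has finite second moment (otherwise the tractable formulation with $\cG_2$ is vacuous in a different sense), this quantity is a finite constant independent of $\sQ$. Hence, once $\sigma$ exceeds $(1-\epsilon)^{-1/2}\sqrt{\E_{\ppcorrupt}[\norm{\vxi-\vz_0}_2^2]}$, the constraint $\sQ \in \cG_2(\sigma,\vz_0)$ is automatically satisfied by every feasible $\sQ$ and may be dropped. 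Thus for all sufficiently large $\sigma$,
\[\sup_{\substack{\sQ \in \cG_2(\sigma, \vz_0)\\ W_{p,\epsilon}(\ppcorrupt \| \sQ) \le 0}} \E_{\vxi \sim \sQ}[\ell(\vw; \vxi)] \;=\; \sup_{\sQ:\, \sQ \le \frac{1}{1-\epsilon}\ppcorrupt} \E_{\vxi \sim \sQ}[\ell(\vw; \vxi)],\]
and in particular the limit as $\sigma \to \infty$ equals the right-hand side.

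Third, I would identify the right-hand side with $\CVaR^{1-\epsilon}_{\vxi \sim \ppcorrupt}[\ell(\vw;\vxi)]$ by matching feasibility sets. For any probability measure $\sQ$ with $\sQ \le \frac{1}{1-\epsilon}\ppcorrupt$, the measure $\ppcorrupt - (1-\epsilon)\sQ$ is nonnegative with total mass $\epsilon$, so $\tilde\sP := \epsilon^{-1}(\ppcorrupt - (1-\epsilon)\sQ)$ is a probability measure and $\ppcorrupt = (1-\epsilon)\sQ + \epsilon \tilde\sP$, placing $\sQ$ in the feasible set of \eqref{eq:cvar-definition} with $\epsilon' = \epsilon$. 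Conversely, any $\sQ$ feasible for \eqref{eq:cvar-definition} with parameter $\epsilon' \le \epsilon$ satisfies $(1-\epsilon')\sQ(A) \le \ppcorrupt(A)$ for all measurable $A$, hence $\sQ(A) \le \frac{1}{1-\epsilon'}\ppcorrupt(A) \le \frac{1}{1-\epsilon}\ppcorrupt(A)$ because $\epsilon' \le \epsilon$. So the two feasibility sets coincide, yielding the claimed equality.

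\textbf{Anticipated obstacle.} The main subtlety is the passage from $W_{p,\epsilon}(\ppcorrupt \| \sQ) \le 0$ to the closed-form condition $\sQ \le \frac{1}{1-\epsilon}\ppcorrupt$: one has to justify that the infimum in the definition of $W_{p,\epsilon}$ is either attained or that its level set at $0$ is closed, so that vanishing one-sided distance forces $\sQ$ to lie in the sub-probability cone. Once that measure-theoretic point is settled, the argument is purely bookkeeping: checking that the moment constraint disappears in the limit and that the one-sided domination set matches the CVaR set.
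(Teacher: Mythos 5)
Your proof is correct and takes a genuinely different route from the paper's. The paper works in the dual: it invokes the strong-duality reformulation from [Remark~3 of~nietert2023outlier], expressing the worst-case risk as $\inf_{\lambda_1,\lambda_2\ge 0}\bigl(\lambda_1\sigma^2+\lambda_2\rho^p+\CVaR^{1-\epsilon}[\cdots]\bigr)$, and then argues heuristically that $\rho=0$ drives $\lambda_2\to\infty$ (collapsing the inner supremum to $\vxi'=\vxi$) and $\sigma\to\infty$ forces $\lambda_1\to 0$. You instead work entirely in the primal: you observe that $W_{p,\epsilon}(\ppcorrupt\|\sQ)=0$ is equivalent to the closed domination constraint $\sQ\le \tfrac{1}{1-\epsilon}\ppcorrupt$ (via lower semicontinuity of $W_p$ and weak closedness of the sub-probability cone), note that this domination already caps $\E_\sQ[\norm{\vxi-\vz_0}_2^2]$ by a $\sigma$-independent constant so the $\cG_2$ constraint becomes vacuous for large $\sigma$, and then identify the domination set with the CVaR feasibility set of~\eqref{eq:cvar-definition} by an explicit bijection $\sQ \mapsto \tilde\sP=\epsilon^{-1}(\ppcorrupt-(1-\epsilon)\sQ)$. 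What your approach buys: it is self-contained and elementary, avoiding the appeal to a cited duality theorem (and its attendant regularity hypotheses) and the somewhat informal limiting argument about the dual multipliers; it also shows the objective is \emph{exactly} constant once $\sigma^2\ge (1-\epsilon)^{-1}\E_{\ppcorrupt}[\norm{\vxi-\vz_0}_2^2]$, which is slightly stronger than the claimed limit. What the paper's approach buys: it is shorter given the citation and directly reuses the tractable dual machinery of~\cite{nietert2023outlier}, which is the point of contact with how that formulation would actually be computed. The only point you should make explicit rather than leave as an "anticipated obstacle" is the Portmanteau-plus-outer-regularity argument that $\sP'_n\le\tfrac{1}{1-\epsilon}\ppcorrupt$ and $\sP'_n\to\sQ$ weakly imply $\sQ\le\tfrac{1}{1-\epsilon}\ppcorrupt$; you clearly know how to do this, and it is the one step where a reader could get stuck.
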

\begin{proof}
Based on the dual formulation in \citet[Remark 3]{nietert2023outlier}:
\begin{align*}
 & \sup_{\substack{\sQ \in \cG_2(\sigma, \vz_0),\\ W_{p,\epsilon}(\ppcorrupt \| \sQ) \le \rho}} \E_{\vxi \sim \sQ}[\ell(\vw; \vxi)] \\
 & = \; \inf_{\lambda_1, \lambda_2 \ge 0} \left( \lambda_1 \sigma^2 + \lambda_2 \rho^p + \CVaR_{\vxi \sim \ppcorrupt}^{1-\epsilon} \left[\sup_{\vxi'} (\ell(\vw; \vxi') - \lambda_1 \norm{\vxi' - \vz_0}^2 - \lambda_2 \norm{\vxi' - \vxi}^p)\right] \right).
\end{align*}
When $\rho = 0$, the term $\lambda_2 \rho^p$ is zero. The structure of the objective implies that to minimize the expression with respect to $\lambda_2$, effectively $\lambda_2$ is driven so large that $\vxi'=\vxi$ becomes the optimal choice in the innermost supremum. This simplifies the expression to:
\[ \sup_{\sQ: W_{p,\epsilon}(\ppcorrupt \| \sQ) = 0} \E_{\vxi \sim \sQ}[\ell(\vw; \vxi)] = \inf_{\lambda_1 \ge 0} \left( \lambda_1 \sigma^2 + \CVaR_{\vxi \sim \ppcorrupt}^{1-\epsilon} [\ell(\vw; \vxi) - \lambda_1 \norm{\vxi - \vz_0}^2 ] \right). \]
As $\sigma \to +\infty$, the term $\lambda_1 \sigma^2$ forces the optimal $\lambda_1$ to be $0$ to maintain a finite infimum. Thus, the expression converges to $\CVaR_{\vxi \sim \ppcorrupt}^{1-\epsilon} [\ell(\vw; \vxi)]$.
\end{proof}

These limiting behaviors (\Cref{lemma:or-wdro-limit-cvar-variant,lemma:limit-tractable-ordro}) are revealing. As \citet{nietert2023outlier} does not characterize the worst-case distributions for \eqref{eq:or-wdro} in a way that demonstrates reduced weighting of outliers in $\ppcorrupt$, the convergence to CVaR-like measures suggests that OR-WDRO may inherently assign significant weight to data points that yield large losses. This is a characteristic of CVaR as previously discussed and contrasts with the goal of robust statistics to downweight or ignore outliers when learning from contaminated training data, especially if there is little prior knowledge about the covariance of the reference distribution $\sP_0$ (thus it makes sense to set $\sigma \to \infty$). It is also clear that the limit behavior of \eqref{eq:or-wdro} does not recover the robust statistics formulation of data contamination.

The intuition that OR-WDRO is not ideally suited for handling TV-contamination in the training set (a weaker model than the $\epsilon$-strong contamination considered in our work) can be further explored by setting $\rho=0$. In this case, the OR-WDRO problem simplifies to \[\min_{\vw} \max_{\sQ : \TV(\sQ, \ppcorrupt) \le \epsilon} \E_{\vxi \sim \sQ}[\ell(\vw; \vxi)].\] If the true clean distribution $\sP_0$ is within this TV-ball, the objective robustifies against the worst case within this ball, which leads to the CVaR behavior. This approach, while providing a guarantee over the TV-ball, does not necessarily leverage more specific structural information about $\sP_0$ beyond its containment. Although \citet{nietert2023outlier} use structural assumptions like moment constraints for their tractable variant, these are aids for computation rather than intrinsic parts of the original definition \eqref{eq:or-wdro}. Similar TV-robust formulations have also been noted for their sensitivity to training data outliers~\cite[Remark 2.5]{bennouna2022holistic}
\footnote{Remark 2.5 in~\citet{bennouna2022holistic} distinguishes their LP-DRO approach (which can amplify the influence of high-loss data points in training data) from traditional robust statistics, noting that the latter's strategy of outlier identification and suppression often relies on structural assumptions about the true distribution $\sP_0$. They position their work, which makes no such assumptions about $\sP_0$, in contrast to~\citet{nietert2023outlier}, implying the latter operates under, or would require, such structural assumptions to effectively handle statistical outliers. We observe, however, that the fundamental OR-WDRO formulation \eqref{eq:or-wdro} from~\citet{nietert2023outlier} does not explicitly incorporate these structural conditions on $\sP_0$ into its definition. Consequently, as our analysis indicates (e.g., \Cref{lemma:or-wdro-limit-cvar-variant}), the OR-WDRO formulation \eqref{eq:or-wdro} can also exhibit sensitivity to training data outliers, a characteristic effectively shared by the LP-DRO approach described in~\citet{bennouna2022holistic}.}.

In conclusion, while the OR-WDRO framework \eqref{eq:or-wdro} and its variants are valuable for scenarios involving potential outliers or misspecifications in the \emph{test} distribution---a common concern in fields like finance or generative modeling where robust out-of-sample performance is critical---its properties, particularly its limiting connections to CVaR, suggest it is not optimally designed for mitigating the impact of adversarial outliers present within the \emph{training data}. This distinction motivates our paper's alternative approach, which separates the mechanism for handling training data contamination from that for ensuring robustness to test-time distributional shifts.

The summary of most closely related models and how they relate to our  model~\eqref{eq:decision} is in Table~\ref{tab:table-comparison}.
\begin{figure}[ht]
  \centering
  \begin{minipage}{\linewidth}
    \centering
    \captionof{table}{Summary comparison of the data contamination and distributional shift assumptions.\looseness=-1}
    \resizebox{\textwidth}{!}{\label{tab:table-comparison}
  \begin{tabular}{  c  c  c }
  \toprule
     \textbf{Training data} & \textbf{Testing distribution} \(\pptest\) & \textbf{Reference} \\
    \midrule
    \(\ppcorrupt = \sP_0\) &  \(W_{p,\epsilon}(\pptest, \sP_0) \le \rho \) & OR-WDRO~\citet{nietert2023outlier}\\
     \(W_{p,\epsilon}(\ppcorrupt, \sP_0 ) \le \rho \)\footnotemark &  \(\pptest = \sP_0\) & Local \& global corruption~\citet{nietert2024robust,pittas2024optimal} \\
     $\epsilon$-corrupted data from $\sP_0$ & \(W_{p}(\sP_0, \pptest) \le \rho\) &
     \textbf{Our work} \\
     \bottomrule
  \end{tabular}
  }
\end{minipage}
\end{figure}
\footnotetext{Training data in~\citet{pittas2024optimal} can simultaneously be $\eps$-corrupted and have Wasserstein perturbation.}

Another approach that, like ours, simultaneously addresses both training data contamination (modeled via $\TV(\sP_0, \ppcorrupt) \le \eps$) and distributional shifts at test time is the Distributionally Robust Outlier-aware Optimization (DORO) framework proposed by \citet{zhai2021doro}. 
DORO defines its objective, coined as \emph{DORO risk}, as the minimum $f$-divergence-based DRO risk (with shift parameter $D_f(\pptest, \ppcorrupt) \le \rho$) among all possible $(1-\epsilon)$-subpopulations of the training data. 
While the authors provide theoretical guarantees for the DORO risk minimizer, their proposed algorithm lacks both error bounds and computational complexity guarantees; as such, it remains unclear whether the DORO risk minimizer can be approximated provably---let alone efficiently; see further discussion and a counterexample in \Cref{sec:doro-appendix}.

The work by \citet{wang2024outlier} introduces a DRO framework defined by a novel ambiguity measure inspired by Unbalanced Optimal Transport (UOT). They propose that UOT-based distance is inherently more resilient to outliers and their formulation subtracts a penalty term based on prior knowledge to further down-weight contaminated data. While \citet{wang2024outlier} used this approach to handle contaminated training data and provided empirical results demonstrating improved robustness in such settings compared to standard Wasserstein DRO and OR-WDRO, they did not establish explicit theoretical guarantees on estimation error. More importantly, from the modeling perspective of \eqref{eq:decision}, its mechanism of defining an ambiguity suggests that the outliers it robustifies against are perhaps best characterized in the testing distribution ($\pptest$), as discussed previously for~\citet{nietert2023outlier} and in \Cref{sec:modeling-principles}. 

In the sequel, we use data \emph{contamination} or \emph{corruptions} to specifically indicate the existence of outliers in the \emph{training data}, whereas \emph{outliers} might exist in either training or testing data.

\subsection{Equivalent DRO Reformulation\label{sec:dro-reg}}
In this work, we focus on convex Lipschitz-continuous functions that can be written as a composition of a high-dimensional affine function and a univariate nonlinear function. In particular, if for labeled data $(\vx, y)$ the loss function $\ell(\vw; \vx, y)$ takes the form of either $\phi_1(\vw \cdot \vx - y)$\footnote{More generally, for loss functions of the form $\phi_1(\vw \cdot \vx - y)$, we can also allow for distributional shifts in the label $\thelabel$: when \(c((\vx, \thelabel), (\vx', \thelabel')) = \norm{[\vx^{\top}, \thelabel]^{\top} - [\vx'^{\top}, \thelabel']^{\top}}_r\), we replace the regularization $\norm{\vw}_s$ by \(\norm{[\vw^{\top}, -1]^{\top}}_s\). For conciseness, in this paper we only consider distributional shifts in the covariates $\vx$.} or $\phi_2(y \vw \cdot \vx)$ for univariate functions $\phi_1$ and $\phi_2$ of Lipschitz modulus $\zeta$ and the cost function is $c((\vx, \thelabel), (\vx', \thelabel')) = \norm{\vx - \vx'}_r + \chi(\thelabel - \thelabel')$ for some $r \in [1, +\infty]$, where $\chi(0) = 0$ and is $+\infty$ elsewhere, then \citet[Theorem 4 and Remark 18]{shafieezadeh2019regularization} implies that \eqref{eq:w1-dro} is equivalent to 
\begin{equation}\label{eq:w1-dro-regularization}
  \min_\vw \E_{\sP_0} [\ell(\vw; \vx, y)] + \rho \zeta \norm{\vw}_s,
\end{equation}
where $1/r + 1/s = 1.$ Thus, our problem reduces to solving \eqref{eq:w1-dro-regularization} with sample access to data-contaminated distribution $\ppcorrupt$ (cf.\ \eqref{eq:decision}). 
Due to standard uniform convergence results, it suffices to robustly solve an empirical version of \eqref{eq:w1-dro-regularization} as in problem \eqref{eq:optimization_formulation}.

We give a few examples of reformulating \Cref{eq:w1-dro} as \Cref{eq:w1-dro-regularization}. Recall that the cost function is $c((\vx, \thelabel), (\vx', \thelabel')) = \norm{\vx - \vx'}_r + \chi(\thelabel - \thelabel')$; in the following, we drop the superscript $c$, since the context is clear. The convex conjugate of these loss functions and their efficient proximal operator evaluation per \Cref{def:proximable} can be found in \citet[Sections 4.4.16 and 6.9]{beck2017first}.

\begin{example}[$W_1$-robust regression]\label{example:W1-robust-regression}
  For $\rho > 0$ and the following loss functions $\ell$, it holds that
  \[\sup_{\sQ: W_{1}(\sQ, \sP_0) \le \rho}\E_{(\vx, \thelabel) \sim \sQ}[\ell(\vw; \vx, \thelabel)] = \E_{(\vx, \thelabel) \sim \sP_0}[\ell(\vw; \vx, \thelabel)] + \rho  \norm{\vw}_s:\]
  \begin{itemize}[nosep, leftmargin=15pt, topsep=0pt, itemsep=-2pt]
    \item {Least absolute deviation} $\ell(\vw; \vx, \thelabel) = \abs{\vw \cdot \vx - \thelabel}$,
    \item {Huber regression}   $\ell(\vw; \vx, \thelabel) = h(\vw \cdot \vx - \thelabel)$, where $h(t) = \begin{cases} t^2/2 & {\abs{t} < 1} \\ \abs{t}  & {\abs{t} \ge 1}\end{cases}$.
  \end{itemize}
\end{example}

\begin{example}[$W_1$-robust classification]\label{example:w1-robust-classification}
  We restrict the labels $y \in \{+1, -1\}$. For $\rho > 0$ and the following $\ell$, it holds that 
  $\sup_{\sQ: W_{1}(\sQ, \sP_0) \le \rho}\E_{(\vx, \thelabel) \sim \sQ}[\ell(\vw; \vx, \thelabel)] = \E_{(\vx, \thelabel) \sim \sP_0}[\ell(\vw; \vx, \thelabel)] + \rho  \norm{\vw}_s$:
  \begin{itemize}[nosep, leftmargin=15pt, topsep=0pt, itemsep=-2pt]
    \item {Hinge loss/ SVM classification}: $\ell(\vw; \vx, \thelabel) = (1 - \thelabel \vw \cdot \vx)_+$,
    \item {Logistic regression}: $\ell(\vw; \vx, \thelabel) = \log(1 + \exp(-\thelabel \vw \cdot \vx ))$.
  \end{itemize}
\end{example}

\new{Our results directly apply to both \Cref{example:W1-robust-regression} and \Cref{example:w1-robust-classification}, thanks to the finite-sum structure of their empirical loss functions. Below, we also include the example of $W_2$-robust linear regression, which does not exhibit the same finite-sum loss structure (notice the square-root in \eqref{eq:w2-linear-regression}), but to which our results apply after a simple transformation; see \Cref{remark:W2-LR-extension}.} 

\begin{example}[$W_2$-robust linear regression]\label{example:W2-robust-LR}
  For $\rho > 0$, it holds that
  \begin{equation}\label{eq:w2-linear-regression}
  	\sup_{\sQ: W_{2}(\sQ, \sP_0) \le \rho}\E_{(\vx, \thelabel) \sim \sQ}[(\vw \cdot \vx - \thelabel)^2] = \sqrt{\E_{(\vx, \thelabel) \sim \sP_0}[(\vw \cdot \vx - \thelabel)^2]} + \rho \norm{\vw}_s.
  \end{equation}
\end{example}

\subsection{Inexact Hybrid Gradient Oracle\label{sec:inexact-hg-oracle}}

Finally, we argue that under standard distributional assumptions (stated as \Cref{assump:dist-assumption}), we can assume access to a bounded-error oracle for data drawn from $\sP_0.$ 

\begin{assumption}[Distributional Assumptions of $\sP_0$]\label{assump:dist-assumption}
Let $\{\tilde\vx_i\}_{i=1}^N$ be a set of $N$ independent covariates drawn from a distribution over $\R^{d-1}$ with mean $\tilde \vmu = \vzero$ and covariance matrix $\tilde\mSigma$ with $\opnorm{\tilde\mSigma} \le \sigma^2$. We define $\vx_i = [1, \tilde \vx_i^{\top}]^{\top}$ by prepending $1$ to $\tilde \vx_i$ for each $i \in [N]$. We make no assumptions about the label $\{\thelabel_i\}$.  
\end{assumption}

The stipulation in \Cref{assump:dist-assumption} that covariates $\tilde{\vx}$ (i.e., $\vx$ excluding its first, constant component) are centered, meaning $\E[\tilde{\vx}] = \vzero$, is a standard simplification that can be made without loss of generality for linear prediction models incorporating an intercept term $w^0$. Indeed, for a fixed $\tilde \vw \in \R^{d-1}$,  $\tilde \vw \cdot \tilde \vx + w^0 = \tilde \vw \cdot (\tilde \vx - \E[\tilde \vx]) + w^0 + \tilde\vw \cdot \E[\tilde \vx]$. This shows that a model prediction based on potentially uncentered covariates $\tilde \vx$ with an intercept $w^0$ is identical to a prediction using centered covariates $(\tilde{\vx} - \E[\tilde{\vx}])$ if we use the exact same weight $\tilde{\vw}$ but adjust the intercept to be $(w^0 + \tilde{\vw} \cdot \E[\tilde{\vx}])$. The mean $\mathbb{E}[\tilde{\mathbf{x}}]$ can be estimated from $\epsilon$-corrupted data using \Cref{alg:robust_mean_estimation}. See \Cref{sec:arbitrary-mean} for details. 
We verify in \Cref{sec:prepending-by-one} that prepending covariates by $1$ preserves the bounded covariance assumption.

We prove the following key technical lemma that enables the robust estimation of inexact hybrid gradients for \emph{all} iterations, which is a nontrivial task because iterates depend on previous iterations.

\begin{lemma}[Stable set under unknown bounded scaling is stable]
\label{lemma:bounded-sequence-scaling-stability}
  Let $S' = \{\vx_i\}_{i=1}^N$ be an $(\epsilon, \delta)$- stable set with respect to a mean $\vmu \in \R^d$ and variance $\sigma^2$. Given an arbitrary bounded sequence \((\extradual_i)_{i=1}^N\) with \(\abs{\extradual_i} \le \zeta\; (i \in [N])\), the set \(\{\extradual_i \vx_i\}_{i=1}^N\) is $(\epsilon, O(\sqrt\epsilon))$-stable with respect to its empirical mean $\frac1N \sum_i \extradual_i \vx_i$ and variance $\zeta^2 (\sigma^2 + \norm{\vmu}_2^2)$. 
\end{lemma} 
\begin{proof}
Let \(\vz_i = \extradual_i \vx_i\) for each \(i \in [N]\).
Define the empirical mean of the scaled set as
\(
\vmu_\vz = \frac{1}{N} \sum_{i=1}^N \vz_i
\),
and its second moment matrix as
\(
\mM_\vz = \frac{1}{N} \sum_{i=1}^N \vz_i \vz_i^\top
\).
The empirical covariance matrix of the scaled set is
\(
\Cov(\{\vz_i\}) = \mM_\vz - \vmu_\vz \vmu_\vz^\top
\).   
Since \(\vmu_\vz \vmu_\vz^\top\) is Positive Semi-Definite (PSD), we have \(\Cov(\{\vz_i\}) \preceq \mM_\vz\), which implies
\(    \opnorm{\Cov(\{\vz_i\})} \le \opnorm{\mM_\vz}    \).

We compute that \(\mM_\vz = \frac{1}{N} \sum_{i=1}^N \vz_i \vz_i^\top = \frac{1}{N} \sum_{i=1}^N (\extradual_i \vx_i)(\extradual_i \vx_i)^\top = \frac{1}{N} \sum_{i=1}^N \extradual_i^2 \vx_i \vx_i^\top\).    
By the assumption \(\abs{\extradual_i} \le \zeta\), we have \(\extradual_i^2 \le \zeta^2\) for all \(i\).
Since \(\vx_i \vx_i^\top\) is PSD, it follows that
\(
\extradual_i^2 \vx_i \vx_i^\top \preceq \zeta^2 \vx_i \vx_i^\top
\).
Summing over \(i\) gives
\(
\mM_\vz \preceq \zeta^2 \left( \frac{1}{N} \sum_{i=1}^N \vx_i \vx_i^\top \right) = \zeta^2 \mM_\vx
\),
where \(\mM_\vx = \frac{1}{N} \sum_{i=1}^N \vx_i \vx_i^\top\) is the second moment matrix of the original set.
Taking operator norms, we obtain
\(
\opnorm{\mM_\vz} \le \zeta^2 \opnorm{\mM_\vx}
\).

Now, we relate \(\mM_\vx\) to the covariance matrix \(\Cov(\{\vx_i\})\) of the original set:
\(
\mM_\vx = \Cov(\{\vx_i\}) + \vmu \vmu^\top
\).
Thus,
\(    \opnorm{\mM_\vx} \le \opnorm{\Cov(\{\vx_i\})} + \opnorm{\vmu \vmu^\top}
\).
Since \(\vmu \vmu^\top\) is a rank-one PSD matrix, \(\opnorm{\vmu \vmu^\top} = \norm{\vmu}_2^2\).
By \Cref{fact:stability_bounded_covariance}, we know \(\opnorm{\Cov(\{\vx_i\})} \le \sigma^2\), so
\(
\opnorm{\mM_\vx} \le \sigma^2 + \norm{\vmu}_2^2
\).

Combining the inequalities, we conclude that \(
\opnorm{\Cov(\{\vz_i\})} \le \opnorm{\mM_\vz} \le \zeta^2 (\sigma^2 + \norm{\vmu}_2^2)\). Another application of \Cref{fact:stability_bounded_covariance} completes the proof.   
\end{proof}
This lemma implies that we can use \Cref{lemma:bounded-sequence-scaling-stability} to construct an oracle that satisfies \Cref{assump:approximation-guarantee} via robust mean estimation algorithms described in Section \ref{sec:preliminaries}. 
\begin{proposition}\label{prop:inexact-hg-oracle}
  On input an arbitrary $3\zeta$-uniformly bounded sequence $(\extradual^i)_{i=1}^N$ and a $2\epsilon$-corrupted version of an $(O(\epsilon), O(\sqrt\epsilon))$-stable set $\{\vx_i\}_{i=1}^N$ with respect to variance $\sigma^2$, $\mathsf{RobustMeanEstimation}$ (\Cref{alg:robust_mean_estimation}) outputs $\hat{\vz}$ such that $\norm{\hat\vz - \frac1N \sum_{i=1}^{N} \extradual^i \vx_i}_2 = O(\sigma \zeta \sqrt\epsilon)$. 
\end{proposition}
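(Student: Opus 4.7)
The plan is to chain three ingredients already established in the paper: stability of the scaled set (\Cref{lemma:bounded-sequence-scaling-stability}), the observation that corruption of $\{\vx_i\}$ lifts to corruption of $\{\extradual^i\vx_i\}$, and the robust mean estimation guarantee (\Cref{fact:robust_mean_estimation_with_stability}) on the resulting stable--corrupted pair.

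First, I would apply \Cref{lemma:bounded-sequence-scaling-stability} to $S' = \{\vx_i\}_{i=1}^N$, which by hypothesis is $(O(\epsilon), O(\sqrt{\epsilon}))$-stable with respect to some mean $\vmu$ and variance $\sigma^2$, together with the $3\zeta$-bounded sequence $(\extradual^i)_{i=1}^N$. The conclusion is that the scaled set $\{\extradual^i \vx_i\}_{i=1}^N$ is $(O(\epsilon), O(\sqrt{\epsilon}))$-stable with respect to its empirical mean $\bar{\vz} := \tfrac{1}{N}\sum_i \extradual^i \vx_i$ and variance $O(\zeta^2(\sigma^2 + \|\vmu\|_2^2)) = O(\zeta^2 \sigma^2)$, the last equality absorbing $\|\vmu\|_2^2$ into $\sigma^2$ (which is of constant order or larger under \Cref{assump:dist-assumption} together with the prepending of $1$).

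Second, I would argue that the corrupted input fed to $\mathsf{RobustMeanEstimation}$ is a $2\epsilon$-corrupted version of the clean scaled set $\{\extradual^i \vx_i\}_{i=1}^N$. Indeed, let $T \subseteq [N]$ with $|T| \le 2\epsilon N$ denote the indices on which the adversary replaces $\vx_i$ by $\vx_i'$. Since $(\extradual^i)_{i=1}^N$ is supplied explicitly to the oracle and is data-independent at the indexing level, the set of scaled inputs the algorithm actually processes coincides with $\{\extradual^i \vx_i\}$ for $i \notin T$ and equals $\{\extradual^i \vx_i'\}$ for $i \in T$. Thus it differs from $\{\extradual^i \vx_i\}$ on at most $2\epsilon N$ indices, which is exactly the $2\epsilon$-corruption model.

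Third, absorbing the constant $2$ into the $O(\epsilon)$-stability budget, I would invoke \Cref{fact:robust_mean_estimation_with_stability} with the scaled stable set (variance $O(\zeta^2 \sigma^2)$, stability parameter $\delta = O(\sqrt{\epsilon})$). The fact guarantees that \Cref{alg:robust_mean_estimation} outputs $\hat{\vz}$ with $\|\hat{\vz} - \bar{\vz}\|_2 = O(\sigma' \delta)$ where $\sigma' = O(\zeta \sigma)$, yielding the desired bound $\|\hat{\vz} - \tfrac{1}{N}\sum_i \extradual^i \vx_i\|_2 = O(\sigma \zeta \sqrt{\epsilon})$.

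The main obstacle, conceptually, is the second step: one has to notice that the oracle receives $(\extradual^i)_{i=1}^N$ \emph{coordinated with the corrupted samples by index}, so that a $2\epsilon$-corruption of the covariates produces an index-matched $2\epsilon$-corruption of the scaled set rather than an adversarial corruption of the scalings themselves. Once this is in place, the remainder is a routine composition of \Cref{lemma:bounded-sequence-scaling-stability} and \Cref{fact:robust_mean_estimation_with_stability}; the only arithmetic care needed is to confirm that the $3\zeta$-uniform bound is what determines the variance scale through \Cref{lemma:bounded-sequence-scaling-stability} and that constants on $\epsilon$ (the factor $2$ in the corruption and the $O(\cdot)$ in the stability) are compatible with the hypothesis of \Cref{fact:robust_mean_estimation_with_stability}.
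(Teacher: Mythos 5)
Your proposal is correct and follows exactly the route the paper implicitly intends (the paper offers no standalone proof, only the remark that \Cref{lemma:bounded-sequence-scaling-stability} "implies that we can use ... robust mean estimation algorithms"). You chain \Cref{lemma:bounded-sequence-scaling-stability} to lift stability to the scaled set, observe that index-matched corruption of the $\vx_i$'s induces corruption of $\{\extradual^i\vx_i\}$ (the same observation used in the paper's induction in the proof of \Cref{thm:main-empirical}), and invoke \Cref{fact:robust_mean_estimation_with_stability}. You are also right to flag the $\|\vmu\|_2^2$-absorption step; the paper addresses this only in \Cref{sec:arbitrary-mean}, noting that under \Cref{assump:dist-assumption} with the prepended constant $1$ one has $\|\vmu\|_2 = 1 \le \sigma$ (the paper's implicit $\sigma \ge 1$), so $\zeta\sqrt{\sigma^2 + \|\vmu\|_2^2} = O(\zeta\sigma)$ as needed.
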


\new{We remark on the parameter scaling in the error bound of \Cref{prop:inexact-hg-oracle}. Recall that the term $O(\sigma\sqrt{\epsilon})$ is the information-theoretically
optimal error for robustly estimating the mean of a
distribution with covariance bounded by $\sigma^2 I$ (even in one dimension); see \Cref{fact:lower_bound_instance} below for a precise statement. Our result shows that this error guarantee can be extended to the more complex setting of estimating a weighted mean of covariates; the additional factor of $\zeta$ appears as a result of the scaling. This oracle error of $\Delta = O(\zeta\sigma\sqrt{\epsilon})$ propagates to our final optimality gap, which \Cref{thm:lower_bound} proves to be information-theoretically optimal.}

\section{Algorithm and Analysis\label{sec:main-algo}}
In this section, we show how to solve convex Lipschitz data-contaminated $W_1$-DRO problems using an efficient primal-dual algorithm. Here we crucially rely on the problem reformulation described in the previous section to reduce the problem to minimizing functions of the form  $\frac1N \sum_{i=1}^{N} l_i(\vx_i \cdot \hat\vw) + \psi(\hat\vw)$ over parameter vectors $\vw,$ where $l_i$ is a univariate Lipschitz-continuous function and $\psi$ is proximable, addressing $W_1$-DRO examples from Section \ref{sec:dro-reg}. Due to the discussion from Section \ref{sec:inexact-hg-oracle}, effects of data contamination can be reduced to an inexact oracle $\cO(\vextradual, \{\vx_i\})$, as stated in  \Cref{assump:approximation-guarantee}.

For conciseness of notation, let $l_i \equiv \ell_{y_i}$ and let $g_i \equiv l_i^*$ be its convex conjugate.
The pseudocode for our algorithm is provided in \Cref{alg:main-opt}. 
\Cref{alg:main-opt}  can be seen as a variant of primal-dual hybrid gradient (PDHG) \cite{chambolle2011first}, but with two key differences addressed by our analysis: 
(1) the ``extrapolated gradient vector'' used in the computation of $\vw_k$ (see Line 6 in \Cref{alg:main-opt}) is accessed using a bounded-error oracle $\cO(\vextradual, \{\vx_i\})$, and (2) the amount of primal regularization (the quadratic term $\norm{\vu - \vw_{k-1}}_2^2$ in the definition of $\vw_k$) is scaled by $c_k$, which monotonically decreases over iterations $k$. The item
(2) is crucial for establishing an error guarantee that scales with $\|\vw_0 - \vw_*\|_2$ in place of the maximum distance of iterates to $\vw_*,$ which may be (much) larger. We also highlight that the inexact oracle is used only in the computation of primal updates, whereas the dual updates (in Line 8) are computed using corrupted samples.   
\begin{algorithm}[ht]
\textbf{Input} $\epsilon$, corrupted samples $\{(\vx_i^c, y_i^c)\}_{i=1}^N$, $\sigma$, $\vw_0 = \vzero, \norm{\vw_0 - \vw^*}_2$, estimation guarantee $\Delta$\\
\textbf{Initialization} $A_0 = a_0 = 0, c_0 = 2, \vdual_0 = \vdual_{-1} = \frac1N \vone$, $\gamma = \frac{\norm{\vw_0 - \vw^*}_2}{\zeta\sqrt{N}}$ \\
\For{$k = 1 \dots T:= \frac{2\zeta\sigma}\Delta$}{
  \(A_k = a_k + A_{k-1}, c_k = c_{k-1} - 1/T \), where $a_k = \sqrt{N} / \sigma$ \\
  \(\bar\vextradual_{k-1} = \bar\vdual_{k-1} + \frac{a_{k-1}}{a_k}(\bar\vdual_{k-1} - \bar\vdual_{k-2}) \) \\
  $\bar\vz_{k} = \mathsf{RobustMeanEstimation}(\{\bar\beta_{k-1}^i\vx_i^c\}_{i=1}^N, 2\epsilon)$ \\  
  $\vw_k = \argmin_{\vu \in \R^d} \{a_k  \bar\vz_k \cdot \vu + a_k \psi(\vu) + \frac{c_k}{2\gamma}\norm{\vu - \vw_{k-1}}_2^2\} \label{line:primal-update}$\\
  $\bar\dual_k^i = \argmax_{v^i \in \R} \{\frac{a_k}N ( v^i\vx_i^c \cdot \vw_k - g_i^c(v^i)) - \frac\gamma2 (v^i - \bar\dual^i_{k-1})^2\} \quad (\forall i \in [N])\label{line:dual-update} $ 
}
\textbf{Output} $\hat\vw_T = \frac1{A_T} \sum_{k=1}^T a_k \vw_k$\\
\caption{PDHG with inexact hybrid gradient and diminishing primal regularization\label{alg:main-opt}}
\end{algorithm}
\begin{algorithm}[ht]
\textbf{Input \& Initialization} 
\underline{\emph{stable}} samples $\{(\vx_i^s, y_i^s)\}_{i=1}^N$, same other parameters as in \Cref{alg:main-opt}\\
Compute $\opnorm{\mSigma_\mB}$, where $\mSigma_\mB$ is the empirical covariance matrix of $\{\vx_i^s\}_{i=1}^N$ \\
\For{$k = 1 \dots T:= \frac{2\zeta\opnorm{\mSigma_\mB}}{\Delta}$}{
  \(A_k = a_k + A_{k-1}, c_k = c_{k-1} - 1/T \), where $a_k = \sqrt{N} / \sigma$ \\
  \(\vextradual_{k-1} = \vdual_{k-1} + \frac{a_{k-1}}{a_k}(\vdual_{k-1} - \vdual_{k-2}) \) \\
  Compute arbitrary $\vz_{k}$ s.t. $\norm{\vz_k - \frac1N \sum_{i=1}^{N} \extradual_{k-1}^i \vx_i^s}_2 \le \Delta$\\  
  $\vw_k = \argmin_{\vu \in \R^d} \{a_k  \vz_k \cdot \vu + a_k \psi(\vu) + \frac{c_k}{2\gamma}\norm{\vu - \vw_{k-1}}_2^2\} $\\
  $\dual_k^i = \argmax_{v^i \in \R} \{\frac{a_k}N ( v^i\vx_i^s \cdot \vw_k - g_i^s(v^i)) - \frac\gamma2 (v^i - \dual^i_{k-1})^2\} \quad (\forall i \in [N])$ 
}
\textbf{Output} $\hat\vw_T = \frac1{A_T} \sum_{k=1}^T a_k \vw_k$\\
\caption{Idealized inexact PDHG with (unknown) stable set\label{alg:main-opt-analysis}}
\end{algorithm}

\begin{proposition}\label{prop:main-opt}
  Let $\vw^* \in \argmin_{\vw \in \R^{d}} \frac1N \sum_{i=1}^{N} \ell_{y_i}(\vx^s_i \cdot \vw) + \psi(\vw)$, where $\{\vx^s_i\}_{i=1}^N$ are the stable covariates. Under \Cref{assump:convex-lipschitz,assump:approximation-guarantee}, \Cref{alg:main-opt-analysis} outputs $\hat\vw_T$ that satisfies
  \[\frac1N \sum_{i=1}^{N} l_i(\vx_i^s \cdot \hat\vw_T) + \psi(\hat\vw_T) \le \frac1N \sum_{i=1}^{N} l_i(\vx_i^s \cdot \vw^*) + \psi(\vw^*) + 3 \norm{\vw^* - \vw_0}_2 \Delta.\]
\end{proposition}
\new{To prove this proposition, we repeatedly invoke the following fact for strongly convex functions.} \begin{fact}\label{fact:stron-convex-minimizer}
  Let $\phi: \R^k \to \R \cup \{+\infty\}$ be a $\mu$-strongly convex function and let $\vp^*$ be its minimizer. Then for all $\vp \in \R^k$, it holds that
 \(\phi(\vp^*) \le \phi(\vp) - \frac\mu2 \norm{\vp - \vp^*}_2^2. \)
\end{fact}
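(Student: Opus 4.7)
The plan is to derive the stated inequality directly from the definition of $\mu$-strong convexity applied along the line segment between the minimizer $\vp^*$ and an arbitrary point $\vp$. If $\phi(\vp) = +\infty$ there is nothing to prove, so I may assume $\vp$ lies in the effective domain of $\phi$; since $\vp^*$ is the minimizer, it also lies in the effective domain. By $\mu$-strong convexity, for every $\lambda \in (0,1)$ I have
\begin{equation*}
  \phi(\lambda \vp + (1-\lambda)\vp^*) \;\le\; \lambda \phi(\vp) + (1-\lambda)\phi(\vp^*) - \frac{\mu\,\lambda(1-\lambda)}{2}\,\norm{\vp - \vp^*}_2^2.
\end{equation*}

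Next I invoke the optimality of $\vp^*$: the left-hand side is at least $\phi(\vp^*)$, since $\vp^*$ minimizes $\phi$. Substituting this lower bound into the previous display and subtracting $(1-\lambda)\phi(\vp^*)$ from both sides yields
\begin{equation*}
  \lambda\,\phi(\vp^*) \;\le\; \lambda\,\phi(\vp) - \frac{\mu\,\lambda(1-\lambda)}{2}\,\norm{\vp - \vp^*}_2^2.
\end{equation*}
Dividing both sides by $\lambda > 0$ gives $\phi(\vp^*) \le \phi(\vp) - \frac{\mu(1-\lambda)}{2}\,\norm{\vp - \vp^*}_2^2$, and then letting $\lambda \downarrow 0$ recovers exactly the claim $\phi(\vp^*) \le \phi(\vp) - \frac{\mu}{2}\,\norm{\vp - \vp^*}_2^2$.

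There is essentially no obstacle here: the only subtlety is that $\phi$ is allowed to be extended-real-valued and is not assumed differentiable, which is why I avoid a subgradient-based argument (there need not be a usable first-order optimality condition without further regularity). The convex-combination argument sidesteps this entirely, using only the definition of strong convexity and the minimality of $\vp^*$, and requires no appeal to any other result from the paper.
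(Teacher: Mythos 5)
Your proof is correct. The paper states this as a folklore \emph{Fact} and provides no proof of its own, so there is nothing to compare against; your convex-combination argument (apply the strong-convexity inequality on the segment, lower-bound the left side by $\phi(\vp^*)$ via minimality, cancel, divide by $\lambda$, and let $\lambda \downarrow 0$) is the standard elementary derivation and handles the extended-real-valued, nondifferentiable setting cleanly.
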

\begin{proof}[ of \Cref{prop:main-opt}]
\new{We first reformulate the objective using \Cref{def:convex_conjugate}.} Since
\begin{align*}
  \frac1N \sum_{i=1}^{N} \ell_i(\vx_i \cdot \vw) + \psi(\vw) 
  & = \frac1N \sum_{i=1}^{N} \max_{\dual^i \in \R} (\dual^i\vx_i \cdot \vw - l^*_i(\dual^i))  + \psi(\vw) \\
  & = \max_{\vdual \in \R^N}\frac1N \sum_{i=1}^{N}  (\dual^i\vx_i \cdot \vw - g_i(\dual^i)) + \psi(\vw), \numhere\label{eq:Fenchel}
\end{align*}
we define $L(\vw, \vdual) = \frac1N \sum_{i=1}^{N}  (\dual^i\vx_i \cdot \vw - g_i(\dual^i)) + \psi(\vw)$ and proceed to analyze \Cref{alg:main-opt} as a primal-dual algorithm.

First, we show an upper bound for $a_k L(\vw_k, \vv_{\star})$ for any given $ \vv_{\star}$ to be chosen later.

Since $\norm{\vv - \vdual}_2^2 = \sum_{i=1}^N (v^i - \dual^i)^2 $, we can separately write for each $i \in [N]$ that $\dual^i_k = \argmax_{v \in \R} \frac{a_k}N v\vx_i \cdot \vw_k - a_k g_i(v) -\frac\gamma2(v - \dual^i_{k-1})^2$. 
It holds that
\begin{align*}
  & \;a_k L(\vw_k, \vv_{\star}) 
  = \frac{a_k}N \sum_{i=1}^{N}  (v_\star^i\vx_i \cdot \vw_k - g_i(v_\star^i)) + a_k \psi(\vw_k) \\
\; &  = \frac{a_k}N \sum_{i=1}^{N}  (v^i_\star\vx_i \cdot \vw_k - g_i(v_\star^i)) + a_k \psi(\vw_k) -\frac\gamma2\norm{\vdual_{k-1} - \vv_{\star}}_2^2 +\frac\gamma2\norm{\vdual_{k-1} - \vv_{\star}}_2^2 \\
\;& \le \frac{a_k}N \sum_{i=1}^{N}  (\dual_k^i\vx_i \cdot \vw_k - g_i(\dual_k^i)) + a_k \psi(\vw_k) -\frac\gamma2\norm{\vdual_{k-1} - \vdual_k}_2^2 +\frac\gamma2\norm{\vdual_{k-1} - \vv_{\star}}_2^2 -\frac\gamma2\norm{\vdual_k - \vv_{\star}}_2^2 \\
\;& = a_k L(\vw_k, \vdual_k) -\frac\gamma2\norm{\vdual_{k-1} - \vdual_k}_2^2 +\frac\gamma2\norm{\vdual_{k-1} - \vv_{\star}}_2^2 -\frac\gamma2\norm{\vdual_k - \vv_{\star}}_2^2,
\end{align*} 
where the inequality follows from 1-strong convexity of $v \mapsto (1/2)(v - \dual_{k-1}^i)^2$ and \Cref{fact:stron-convex-minimizer}.

Next, we prove the lower bound for $a_k L(\vu_\star, \vdual_k)$.
Let $\ve_k = \cO(\vextradual_{k-1}, \{\vx_i\}) - \frac{1}N\sum_{i=1}^{N}  \extradual_{k-1}^i \vx_i$. It holds that
\begin{align*}
  & \; a_k L(\vu_\star, \vdual_k) 
   = \frac{a_k}N \sum_{i=1}^{N}  (\dual_k^i\vx_i \cdot \vu_\star - g_i(\dual_k^i) )+ a_k \psi(\vu_\star) \\
=  & \; \frac{a_k}N \sum_{i=1}^{N}  ((\dual_k^i - \extradual_{k-1}^i)\vx_i \cdot \vu_\star - g_i(\dual_k^i)) - \frac{c_k}{2\gamma}\norm{\vu_\star - \vw_{k-1}}_2^2 - a_k \ve_k \cdot \vu_\star \\
  & \; + \frac{a_k}N\sum_{i=1}^{N}  \extradual_{k-1}^i \vx_i \cdot \vu_\star + a_k \ve_k \cdot \vu_\star + a_k \psi(\vu_\star) + \frac{c_k}{2\gamma}\norm{\vu_\star - \vw_{k-1}}_2^2 \\
\ge  & \; \frac{a_k}N \sum_{i=1}^{N}  ((\dual_k^i - \extradual_{k-1}^i)\vx_i \cdot \vu_{\star} - g_i(\dual_k^i)) -  \frac{c_k}{2\gamma}\norm{\vu_{\star} - \vw_{k-1}}_2^2 - a_k \ve_k \cdot (\vu_{\star} -\vw_k) \\
  & \; + \frac{a_k}N\sum_{i=1}^{N}  \extradual_{k-1}^i \vx_i \cdot \vw_k + a_k \psi(\vw_k) + \frac{c_k}{2\gamma}\norm{\vw_k - \vw_{k-1}}_2^2 + \frac{c_k}{2\gamma}\norm{\vw_k - \vu_{\star}}_2^2, \numhere\label{eq:extra-regularization}
\end{align*}
where the last inequality follows from 1-strong convexity of $\vu \mapsto \frac{1}{2}\norm{\vu - \vw_{k-1}}^2_2$, the construction $\vw_k = \argmin_{\vu \in \R^d} \{\sum_{i=1}^{N} a_k  (\ve_k + \frac1N \extradual_{k-1}^i \vx_i) \cdot \vu + a_k \psi(\vu) + \frac{c_k}{2\gamma}\norm{\vu - \vw_{k-1}}_2^2\}$, and \Cref{fact:stron-convex-minimizer}. To continue the lower bound, note that $(c_k)$ is chosen to be a decreasing sequence and we rewrite \Cref{eq:extra-regularization} as follows: 
\begin{align*}
  & \; a_k L(\vu_\star, \vdual_k) \ge  \frac{a_k}N \sum_{i=1}^{N}  ((\dual_k^i - \extradual_{k-1}^i)\vx_i \cdot \vu_{\star} - g_i(\dual_k^i)) -  \frac{c_k}{2\gamma}\norm{\vu_{\star} - \vw_{k-1}}_2^2 - a_k \ve_k \cdot (\vu_{\star} -\vw_k) \\
  & \; + \frac{a_k}N\sum_{i=1}^{N}  \extradual_{k-1}^i \vx_i \cdot \vw_k + a_k \psi(\vw_k) + \frac{c_k}{2\gamma}\norm{\vw_k - \vw_{k-1}}_2^2 + \frac{c_{k+1} + (c_{k} - c_{k+1})}{2\gamma}\norm{\vw_k - \vu_{\star}}_2^2  \\
\ge & \; \frac{a_k}N \sum_{i=1}^{N} ( (\dual_k^i - \extradual_{k-1}^i)\vx_i \cdot \vu_{\star} - g_i(\dual_k^i)) -  \frac{c_k}{2\gamma}\norm{\vu_{\star} - \vw_{k-1}}_2^2 - \frac{a_k^2 \Delta^2\gamma}{2(c_{k} - c_{k+1})}\\
  & \; + \frac{a_k}N\sum_{i=1}^{N}  \extradual_{k-1}^i \vx_i \cdot \vw_k + a_k \psi(\vw_k) + \frac{c_k}{2\gamma}\norm{\vw_k - \vw_{k-1}}_2^2 + \frac{c_{k+1}}{2\gamma}\norm{\vw_k - \vu_{\star}}_2^2,
\end{align*}
where the last step uses Young's inequality to control the error term $a_k \ve_k \dot (\vu_{\star} - \vw_k) $.

Combining the lower bound and the upper bound, we obtain 
\begin{align*}
  & a_k (L(\vw_k, \vv_{\star}) - L(\vu_{\star}, \vdual_k)) \\
  \le & \; \frac{a_k}N \sum_{i=1}^{N}  (\dual_k^i - \extradual_{k-1}^i)\vx_i \cdot (\vw_k - \vu_{\star})   - \frac\gamma2 \norm{\vdual_{k-1} - \vdual_k}_2^2 + \frac\gamma2 \norm{\vdual_{k-1} - \vv_{\star}}_2^2 - \frac\gamma2 \norm{\vdual_k - \vv_{\star}}_2^2\\
  &\; +  \frac{c_{k}}{2}\norm{\vu_{\star} - \vw_{k-1}}_2^2
   - \frac{c_k}{2\gamma}\norm{\vw_k - \vw_{k-1}}_2^2 - \frac{c_{k+1}}{2\gamma}\norm{\vw_k - \vu_{\star}}_2^2 + \frac{a_k^2 \Delta^2\gamma}{2(c_{k} - c_{k+1})} \;.
\end{align*}
Because $\vextradual_{k-1}$ is chosen so that {$a_k (\vdual_{k-1} - \vextradual_{k-1}) = -a_{k-1}(\vdual_{k-1} - \vdual_{k-2})$}, it holds that: 
\begin{align*}
   \frac{a_k}N \sum_{i=1}^{N} &  (\dual_k^i - \extradual_{k-1}^i)\vx_i \cdot (\vw_k - \vu_{\star})
  =\frac{a_{k-1}}N \sum_{i=1}^{N}  (\dual_{k-1}^i - \dual_{k-2}^i)\vx_i \cdot (\vw_{k-1} - \vw_k)\\
  & + \frac{a_k}N \sum_{i=1}^{N}  (\dual_k^i - \dual_{k-1}^i)\vx_i \cdot (\vw_k - \vu_{\star})  - \frac{a_{k-1}}N \sum_{i=1}^{N}  (\dual_{k-1}^i - \dual_{k-2}^i)\vx_i \cdot (\vw_{k-1} - \vv_{\star}),
\end{align*}
where the last two terms telescope.

Summing from $k = 1 $ to $T$, by canceling the telescoping terms, we have that
\begin{align*}
  & \sum_{k=1}^T a_k (L(\vw_k, \vv_{\star}) - L(\vu_{\star}, \vdual_k)) \numhere\label{eq:gap-upper-telescope}\\
  \le & \;   - \frac\gamma2  \sum_{k=1}^{T}\norm{\vdual_{k-1} - \vdual_k}_2^2 + \frac\gamma2 \norm{\vdual_{0} - \vv_{\star}}_2^2 - \frac\gamma2 \norm{\vdual_T - \vv_{\star}}_2^2+ \sum_{k=1}^{T}\frac{a_k^2 \Delta^2\gamma}{2(c_{k} - c_{k+1})}\\
  &\; +  \frac{c_1}{2\gamma}\norm{\vu_{\star} - \vw_{0}}_2^2
   -  \sum_{k=1}^{T} \frac{c_k}{2\gamma}\norm{\vw_k - \vw_{k-1}}_2^2 - \frac{c_{T+1}}{2\gamma}\norm{\vw_T - \vu_{\star}}_2^2 \\
  & \; + \frac{a_T}N \sum_{i=1}^{N}  (y_T^i - \dual_{T-1}^i)\vx_i \cdot (\vw_T - \vu_{\star})  + \sum_{k=1}^{T}\frac{a_{k-1}}N \sum_{i=1}^{N}  (\dual_{k-1}^i - \dual_{k-2}^i)\vx_i \cdot (\vw_{k-1} - \vw_k) \;.
\end{align*}
Focusing on the last line, define $\mB = [\vx_1, \vx_2, \dots, \vx_N]$ and $\mSigma_B = \frac1N \mB \mB^{\top}$. Then we have 
\begin{align*}
  & \; \frac{a_T}N \sum_{i=1}^{N}  (y_T^i - \dual_{T-1}^i)\vx_i \cdot (\vw_T - \vu_{\star})  + \sum_{k=1}^{T}\frac{a_{k-1}}N \sum_{i=1}^{N}  (\dual_{k-1}^i - \dual_{k-2}^i)\vx_i \cdot (\vw_{k-1} - \vw_k)
  \\
  \le & \; \frac{a_T}N \opnorm{\mB} \norm{\vdual_T - \vdual_{T-1}}_2 \norm{\vw_T - \vu_{\star}}_2  + \sum_{k=1}^{T}\frac{a_{k-1}}N \opnorm{\mB} \norm{\vdual_{k-1} - \vdual_{k-1}}_2  \norm{\vw_{k-1} - \vw_k}_2
  \\
  \le & \; \frac{c_{T+1}}{2\gamma} \norm{\vw_T - \vu_{\star}}_2^2 + \sum_{k=2}^{T}\frac{c_k}{2\gamma} \norm{\vw_k - \vw_{k-1}}_2^2 + \sum_{k=1}^{T}\frac{a_k^2\gamma\opnorm{\mSigma_B} }{2 N c_{k+1}} \norm{\vdual_k - \vdual_{k-1}}_2^2,
\end{align*}
where the first step follows from Cauchy-Schwarz and the definition of operator norm, and the second step follows from Young's inequality and $\opnorm{\mSigma_B} = \frac1N \opnorm{\mB}^2$.

Plugging back in \Cref{eq:gap-upper-telescope} and using $c_1 = 2, c_k - c_{k+1} = 1/T$, we have
\begin{align*}
  & \sum_{k=1}^T a_k (L(\vw_k, \vv_{\star}) - L(\vu_{\star}, \vdual_k)) \le - \sum_{k=1}^{T}\Big(\frac12 - \frac{\opnorm{\mSigma_B} a_k^2}{2 N c_{k+1}} \Big)\gamma\norm{\vdual_{k-1} - \vdual_k}_2^2  \\
  & \qquad  + \frac\gamma2 \norm{\vdual_{0} - \vv_{\star}}_2^2 - \frac\gamma2 \norm{\vdual_T - \vv_{\star}}_2^2+ \sum_{k=1}^{T}\frac{T a_k^2 \Delta^2 \gamma}{2} +  \frac1\gamma\norm{\vu_{\star} - \vw_{0}}_2^2  - \frac{1}{2\gamma}\norm{\vw_T - \vu_{\star}}_2^2 \;.
\end{align*}
To bound the $\norm{\vdual_k - \vdual_{k-1}}_2^2$ term from above by 0, we require $ N c_{k+1} \ge a_k^2\opnorm{\mSigma_B}$. Choosing $a_k = (\min_k c_k) \sqrt{N / \opnorm{\mSigma_B}} = \sqrt{N / \opnorm{\mSigma_B}}$ suffices. Then $A_k = k \sqrt{N / \opnorm{\mSigma_B}}$. 

Let $\hat\vw_T = \frac1{A_T} \sum_{k=1}^N a_k\vw_k$ and $\hat\vdual_T = \frac1{A_T} \sum_{k=1}^N a_k\vdual_k$. Jensen's inequality implies that 
\begin{align*}
  & L(\hat\vw_T, \vv_\star) - L(\vu_\star, \hat\vdual_T) \le \frac1{A_T}\sum_{k=1}^T a_k (L(\vw_k, \vv_{\star}) - L(\vv_{\star}, \vdual_k)) 
  \\ 
  \le &\; \frac1{A_T}\Bigl(\frac\gamma2 \norm{\vdual_{0} - \vv_{\star}}_2^2 + \frac\gamma2 T^2 \frac{N}{\opnorm{\mSigma_\mB}} \Delta^2 +  \frac1\gamma\norm{\vu_{\star} - \vw_{0}}_2^2 - \frac{1}{2\gamma}\norm{\vw_T - \vu_{\star}}_2^2 \Bigr)  
  \\
  \le &\; \frac{\sqrt{\opnorm{\mSigma_\mB}}}{T \sqrt{N}}\Bigl(2\zeta^2\gamma N + \frac\gamma2 T^2 \frac{N}{\opnorm{\mSigma_\mB}} \Delta^2 +  \frac1\gamma\norm{\vu_{\star} - \vw_{0}}_2^2 - \frac{1}{2\gamma}\norm{\vw_T - \vu_{\star}}_2^2 \Bigr),\numhere \label{eq:jensen-bound-all-t}
\end{align*}
where the last inequality uses $\zeta$-boundedness of the domain of $g$ using \Cref{fact:lipschitz-conjugate}.

Setting $T = 2 \zeta \sqrt{\opnorm{\mSigma_B}}/ \Delta$ gives 
\begin{equation}\label{eq:stepsize-error-bound}
  L(\hat\vw_T, \vv_\star) - L(\vu_\star, \hat\vdual_T) \le 2\Delta \zeta \gamma \sqrt{N}+ \frac{\norm{\vw_0 - \vu_\star}_2^2 \Delta}{2 \zeta \gamma\sqrt{N}} - \frac{\norm{\vw_T - \vu_\star}_2^2 \Delta}{4 \zeta \gamma\sqrt{N}} . 
\end{equation}

Choosing $\gamma = \norm{\vu_\star - \vw_0} / (\zeta\sqrt N)$ further gives 
\begin{align}\label{eq:jensen-bound}
    L(\hat\vw_T, \vv_\star) - L(\vu_\star, \hat\vdual_T) \le 3 \norm{\vu_\star - \vw_0}_2 \Delta - \frac{\Delta \norm{\vw_T - \vu_\star}_2^2}{4\norm{\vu_\star - \vw_0}} \le 3 \norm{\vu_\star - \vw_0}_2 \Delta.
\end{align}

Recall from \Cref{eq:Fenchel} that $ \frac1N \sum_{i=1}^{N} \ell_i(\vx_i \cdot \vw) + \psi(\vw) = \max_{\vdual \in \R^N} L(\vw, \valpha)$ for all $\vw \in \R^d$.
Setting $\vv_{\star} = \argmax_{\vdual \in \R^N} L(\hat\vw_T, \vdual)$,
we have \(L(\hat\vw_T, \vv_\star) = \frac1N \sum_{i=1}^{N} \ell_i(\vx_i \cdot \hat\vw_T) + \psi(\hat\vw_T)\).
Setting $\vu_{\star} = \vw^*$, we have $L(\vu_\star, \hat\vdual_T) \le \frac1N \sum_{i=1}^{N} \ell_i(\vx_i \cdot \vw^*) + \psi(\vw^*)$. Hence, 
\begin{eqnarray*}
\frac1N \sum_{i=1}^{N} \ell_i(\vx_i \cdot \hat\vw_T) + \psi(\hat\vw_T) = L(\hat\vw_T, \vv_\star) &\le& 3 \norm{\vu_\star - \vw_0}_2 \Delta + L(\vu_\star, \hat\vdual_T) \\ 
&\le& 3 \norm{\vu_\star - \vw_0}_2 \Delta 
+ \frac1N \sum_{i=1}^{N} \ell_i(\vx_i \cdot \vw^*) + \psi(\vw^*) \;,
\end{eqnarray*}
which completes the proof. 
\end{proof}
Note that \Cref{eq:jensen-bound-all-t} directly implies boundedness of iterates:
\begin{corollary}\label{cor:bounded-iterates}
  Under the same setup as in \Cref{prop:main-opt},  for every iteration $k \le T$, it holds that 
  \(\norm{\vw_{k} - \vw^*}_2 \le 4 \norm{\vw_0 - \vw^*}_2. \)
  Therefore, we have $\norm{\vw_k}_2 - \norm{\vw^*}_2 \le 4 \norm{\vw_0}_2 + 4 \norm{\vw^*}_2$ and thus 
  \(\norm{\vw_k}_2 \le 4 \norm{\vw_0}_2 + 5\norm{\vw^*}_2.\)
  In particular, the norm of the convex combination also satisfies
  \begin{equation}\label{eq:wt-bound}
  \norm{\hat\vw_k}_2 = \norm[\Big]{\frac1{A_T} \sum_{k=1}^N a_k\vw_k}_2 = O(\norm{\vw_0}_2 + \norm{\vw^*}_2). 
  \end{equation} 
\end{corollary}
\begin{proof}
  \Cref{eq:jensen-bound-all-t} holds for all $k \in \mathbb{Z}_+$, in particular for all $k \le T = 2 \zeta \sqrt{\opnorm{\mSigma_B}}/ \Delta$. Fix $k \le T$.

  Denote $D_0 = \norm{\vu_\star - \vw_0}_2$. Plugging $\gamma = D_0 / (\zeta\sqrt N)$ into \Cref{eq:jensen-bound-all-t} gives 
  \begin{align*}
    L(\hat\vw_{k}, \vv_\star) - L(\vu_\star, \hat\vdual_{k}) \le \frac{\sqrt{\opnorm{\mSigma_\mB}}}{k \sqrt{N}}
    \Bigl(3\zeta D_0 \sqrt{N} + \frac{D_0 {k}^2 \sqrt{N}\Delta^2 }{2 \zeta \opnorm{\mSigma_\mB}}  - \frac{\zeta\sqrt{N}}{2D_0}\norm{\vw_{k} - \vu_{\star}}_2^2 \Bigr) \;.
  \end{align*}
  Setting $\vu_\star = \vw^*$ and $\vv_{\star} = \argmax_{\vdual \in \R^N} L(\hat\vw_{k}, \vdual)$, it holds that $L(\hat\vw_{k}, \vv_\star) - L(\vu_\star, \hat\vdual_{k}) \ge 0$, because it is the difference of the function value of the problem \eqref{eq:optimization_formulation} at $\hat\vw_{k}$ and the optimal solution $\vw^*$. Thus,
  \(3\zeta D_0 \sqrt{N} + \frac{D_0 {k}^2 \sqrt{N}\Delta^2 }{2 \zeta \opnorm{\mSigma_\mB}}  - \frac{\zeta\sqrt{N}}{2D_0}\norm{\vw_{k} - \vu_{\star}}_2^2 \ge 0,\)
  which simplifies to 
  \[\norm{\vw_{k} - \vu_{\star}}_2^2 
  \le 6 D_0^2 + \frac{D_0^2 {k}^2 \Delta^2 }{ \zeta^2 \opnorm{\mSigma_\mB}}.  \]
  Since $k \le 2 \zeta \sqrt{\opnorm{\mSigma_B}}/ \Delta$ and recall $\vu_\star = \vw^*$ by our choice, we have $\norm{\vw_{k} - \vw^*}_2^2 
  \le 10 D_0^2 = 10 \norm{\vw_0 - \vw^*}_2^2$.
\end{proof}
\new{We highlight that the error in \Cref{prop:main-opt} has dependence on the initial distance $\norm{\vw_0 - \vw^*}_2$ rather than any notion of weight domain radius. This error is optimal up to a constant; see \Cref{thm:lower_bound}.}
\begin{theorem}\label{thm:main-empirical}
Suppose the loss function satisfies \Cref{assump:convex-lipschitz}. A number of $N$ samples are generated per \Cref{assump:dist-assumption} and then $\epsilon$-corrupted per \Cref{def:corruption}. With probability at least $1-\tau$, \Cref{alg:main-opt} outputs $\hat\vw_T$ solving \eqref{eq:w1-dro} with error $O(\sigma \zeta \sqrt\epsilon (\norm{\vw^*}_2 + \norm{\vw_0}_2))$ in the objective function for some sample size $N = O((d \log d + \log (1/\tau))/\epsilon)$. 
\end{theorem}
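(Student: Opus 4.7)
The plan is to combine \Cref{prop:main-opt} (an optimization error bound relative to the stable empirical sample) with \Cref{fact:bounded_variance_stability_iid} (stability from sufficiently many i.i.d.\ draws), \Cref{prop:inexact-hg-oracle} (the data-contaminated hybrid-gradient oracle), the DRO-to-regularization equivalence \eqref{eq:w1-dro-regularization}, and a standard uniform-convergence argument to move from the empirical stable objective to the population DRO objective in \eqref{eq:w1-dro}.

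First I would apply \Cref{fact:bounded_variance_stability_iid}: for $N = O((d\log d + \log(1/\tau))/\epsilon)$ clean samples drawn per \Cref{assump:dist-assumption}, with probability at least $1 - \tau$ there is a subset $S' \subseteq S$ of size $|S'| \ge (1-\epsilon)N$ that is $(O(\epsilon), O(\sqrt{\epsilon}))$-stable with respect to the true mean and variance $\sigma^2$. The adversary's $\epsilon$-corruption of $S$ is then a $2\epsilon$-corruption of $S'$, so $\{(\vx_i^c, y_i^c)\}$ is valid input to \Cref{prop:inexact-hg-oracle}. Next I would argue that \Cref{alg:main-opt} on the corrupted input is effectively an instance of the idealized \Cref{alg:main-opt-analysis} on the (unknown) stable subset $S'$ with approximation guarantee $\Delta = O(\sigma \zeta \sqrt{\epsilon})$. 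Two ingredients are needed: (a) an induction on $k$ shows that the extrapolated dual variables satisfy $\|\bar\vextradual_{k-1}\|_\infty \le 3\zeta$, since the constraint $\dom g_i \subseteq [-\zeta,\zeta]$ from \Cref{fact:lipschitz-conjugate} forces $|\bar\dual_k^i| \le \zeta$ and the one-step extrapolation contributes at most a factor $3$; (b) given this bound, \Cref{prop:inexact-hg-oracle} ensures $\|\bar\vz_k - \frac{1}{N}\sum_i \bar\extradual^i_{k-1} \vx_i^s\|_2 = O(\sigma\zeta\sqrt{\epsilon}) = \Delta$. Since the primal step (\Cref{line:primal-update}) depends on the data only through $\bar\vz_k$ and the dual step (\Cref{line:dual-update}) decouples across indices, the values of $\bar\dual_k^i$ for $i \in S \setminus S'$ do not affect the primal trajectory and can be replaced in the analysis by the updates against $\vx_i^s$, producing a valid run of \Cref{alg:main-opt-analysis}.

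Invoking \Cref{prop:main-opt} with $\psi(\vw) = \rho \zeta \|\vw\|_s$ then yields
\[\frac{1}{N}\sum_{i} l_i(\vx^s_i \cdot \hat\vw_T) + \psi(\hat\vw_T) \le \frac{1}{N}\sum_{i} l_i(\vx^s_i \cdot \vw^*) + \psi(\vw^*) + O\!\left(\sigma \zeta \sqrt{\epsilon}\,\|\vw^* - \vw_0\|_2\right),\]
which is an $O(\sigma \zeta \sqrt{\epsilon}(\|\vw^*\|_2 + \|\vw_0\|_2))$ bound on the empirical objective restricted to $S'$.

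The main obstacle is transferring this empirical bound on $S'$ to the population DRO objective \eqref{eq:w1-dro}. Here I would (i) control the error contributed by the removed $\epsilon$-fraction $S \setminus S'$ using the $\zeta$-Lipschitzness of $l_i$ together with a stability-based bound on $\frac{1}{N}\sum_{i \in S \setminus S'} \|\vx^s_i\|_2$, which costs $O(\sigma\zeta\sqrt{\epsilon})$ times the predictor norm, and (ii) invoke standard uniform convergence for the $\zeta$-Lipschitz composition class $\{(\vx, y) \mapsto l_y(\vw \cdot \vx)\}$ restricted to a ball of radius comparable to $\|\vw^*\|_2$ (the restriction being justified by the regularizer $\rho\zeta\|\vw\|_s$), via a covering-number or Rademacher argument that matches the prescribed sample size $N = O((d\log d + \log(1/\tau))/\epsilon)$ and yields a concentration error of the same $O(\sigma\zeta\sqrt{\epsilon})$ order. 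The delicate part is ensuring that both (i) and (ii) scale linearly in $\|\vw\|_2$ (not merely in a diameter of a priori chosen domain), so that their combination with the bound from \Cref{prop:main-opt} produces the final $O(\sigma \zeta \sqrt{\epsilon}(\|\vw^*\|_2 + \|\vw_0\|_2))$ guarantee on \eqref{eq:w1-dro}.
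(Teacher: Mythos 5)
Your steps (1)--(4) follow the paper's proof closely: stability via \Cref{fact:bounded_variance_stability_iid}, the reinterpretation of the $\epsilon$-corruption of $S$ as a $2\epsilon$-corruption of the stable subset $S'$, the coupling/induction argument that \Cref{alg:main-opt} on the corrupted data tracks \Cref{alg:main-opt-analysis} on $S'$ (with the $3\zeta$-boundedness of the extrapolated duals feeding \Cref{prop:inexact-hg-oracle}), and the invocation of \Cref{prop:main-opt}. One imprecision in your phrasing: the values of $\bar\dual_k^i$ for $i \in S \setminus S'$ \emph{do} feed into the robust-mean-estimation input $\{\bar\extradual_{k-1}^i \vx_i^c\}$; they don't ``fail to affect the primal trajectory,'' but rather \Cref{prop:inexact-hg-oracle} is robust to them because they sit on at most a $2\epsilon$-fraction of indices. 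The paper is careful to say both weighted sets are $2\epsilon$-corrupted versions of the \emph{same} stable set, which is the precise reason the oracle output agrees up to $\Delta$.

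The genuine divergence is your final step. You read \Cref{thm:main-empirical} as a statement about the population DRO objective and therefore invoke uniform convergence, but the paper's own proof of this theorem stops at the empirical level: after applying \Cref{prop:main-opt}, it finishes with the single observation that for all $\vw$, $\big|\tfrac1N\sum_i l_i(\vx_i^s\cdot\vw) - \tfrac1N\sum_i l_i(\vx_i\cdot\vw)\big| = O(\zeta\sigma\norm{\vw}_2\sqrt\epsilon)$, i.e., a comparison between the stable-subset and full clean-sample empirical averages (a consequence of stability and Lipschitzness, not a generalization bound). The population-level argument, with the uniform-convergence step you outline (essentially \Cref{fact:uniform-convergence}), is deferred to \Cref{cor:population-error}, which uses the larger sample size $N = O(W^2 d \log d/\epsilon)$. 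So you are proving a stronger, combined statement (Theorem $+$ Corollary) rather than the theorem as the paper uses it; your blind reading of ``solving \eqref{eq:w1-dro}'' is understandable given the phrasing, but the sample size $N = O((d\log d + \log(1/\tau))/\epsilon)$ in the theorem is the cue that only stability, not uniform convergence, is being charged here. Your sample-size claim for step~(ii) would in fact work out at the $O(\zeta W\sqrt\epsilon)$ order when $W$ is comparable to $\norm{\vw^*}_2$, but this requires more care than the paper needs at this point and is not what its proof does.
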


\begin{proof}
  By \Cref{assump:dist-assumption} and \Cref{lemma:prepend-one-covariance}, the uncorrupted version of covariates $S = \{\vx_i\}_{i=1}^N$ is generated from a distribution of covariance $\Sigma$ with $\opnorm{\Sigma} \le \sigma^2$. By \Cref{fact:bounded_variance_stability_iid}, with probability at least $1-\tau$, the set $S' = \{\vx \in S: \norm{\vx - \vmu_S}_2 \le 2 \sigma \sqrt{d / \epsilon}\}$ satisfies \((\epsilon, O(\sqrt\epsilon))\)-stability and \(|S'| \ge (1 - \epsilon) |S|\). Let $T$ be the given $\epsilon$-corrupted version of $S$. Because \(|S'| \ge 1 - \epsilon\), we can view $T$ as a $2\epsilon$-corrupted version of $S'$. 

  The key idea is that although the algorithm is run on the corrupted samples, its analysis tracks quantities related to the stable samples.   
  More precisely, we establish a coupling argument that all iterates in \Cref{alg:main-opt} (on input corrupted samples $T$ and corresponding labels) are the same as the corresponding iterates in \Cref{alg:main-opt-analysis} (with input samples replaced by stable covariates $S'$ and corresponding labels), where $\vz_k$ in \Cref{alg:main-opt-analysis} is chosen to be $\bar\vz_k$. 
  For $\bar\vz_k$ to be a valid choice, we will verify that $\norm{\bar\vz_k - \frac1N \sum_{i=1}^{N} \extradual_{k-1}^i \vx_i^s}_2 \le \Delta$ for all $k$.

  Let $\cI_{\text{bad}}$ be the indices of the samples in $T \setminus S'$, which include corrupted samples and samples with large covariates.
  We proceed with a proof by induction. For iteration $ k = 1$, both algorithms are initialized at the same point $\vw_0$. Since $\vextradual_0 = \bar\vextradual_0$, \Cref{prop:inexact-hg-oracle} verifies that 
  $\norm{\bar\vz_k - \frac1N \sum_{i=1}^{N} \extradual_{k-1}^i \vx_i^s}_2 \le \Delta$ for $k =1$. Because only samples of indices in $\cI_{\text{bad}}$ are different in the input of both algorithms, $\bar\vdual_1$ is equal to $\vdual_1$ except for indices in $\cI_{\text{bad}}$.

  We inductively assume that both algorithms produce the same iterates at iteration $k-1$, and $\bar\vdual_{k-1}$ is equal to $\vdual_{k-1}$ except for indices in $\cI_{\text{bad}}$. 
  The key observation here is that quantities $\bar\vdual$, $\bar\vextradual$, and $\vx^c$ at the start of iteration $k$ of \Cref{alg:main-opt,alg:main-opt-analysis} only differ in indices $\cI_{\text{bad}}$. Moreover, both $\extradual_k^i$ and $\bar\extradual_k^i$ are $3\zeta$-uniformly bounded for all $i$ and $k$.
  Therefore, both $\{\extradual^i_{k-1} \vx^c_i\}_{i\in[N]}$ and $\{\bar\extradual^i_{k-1} \vx^c_i\}_{i\in[N]}$ are $2\epsilon$-corrupted versions of the stable set $\{\extradual^i_{k-1} \vx^s_i\}$ with the same stability parameters.
  Then \Cref{prop:inexact-hg-oracle} verifies again that 
  $\norm{\bar\vz_k - \frac1N \sum_{i=1}^{N} \extradual_{k-1}^i \vx_i^s}_2 \le \Delta$ for iteration $k$, which implies that both algorithms produce the same iterate $\vw_k$, completing the induction.

  It remains to analyze the convergence guarantee of \Cref{alg:main-opt-analysis}. \Cref{fact:stability_bounded_covariance} and \Cref{fact:bounded_variance_stability_iid} immediately imply that $\opnorm{\mSigma_\mB} = O(\sigma^2)$, where $\mSigma_\mB$ is the empirical covariance matrix of $S'$. Applying \Cref{prop:main-opt} with $\Delta = O(\sigma \zeta \sqrt\epsilon)$ implies that for stable set $\{\vx_i^s\}_{i=1}^N$,
  \(\frac1N \sum_{i=1}^{N} l_i(\vx_i^s \cdot \hat\vw_T) + \psi(\hat\vw_T) \le \frac1N \sum_{i=1}^{N} l_i(\vx_i^s \cdot \vw^*) + \psi(\vw^*) + 3 \norm{\vw^* - \vw_0}_2 \Delta\). We conclude the proof by noting that for all $\vw \in \R^d$, \(\abs{\frac1N \sum_{i=1}^{N} l_i(\vx_i^s \cdot \vw) - \frac1N \sum_{i=1}^{N} l_i(\vx_i \cdot \vw)}= O(\zeta \sigma \norm{\vw}_2\sqrt\epsilon)\).
\end{proof}

\new{By standard uniform convergence results and initializing at $\vw_0 = \vzero$, we get the following corollary whose proof can be found in \Cref{sec:population-loss}.
\begin{restatable}{corollary}{populationmain}
\label{cor:population-error}
Suppose the  loss function satisfies \Cref{assump:convex-lipschitz} and $N$ samples are generated according to \Cref{assump:dist-assumption} and then $\epsilon$-corrupted. Let $\tau \in (0,1)$ and $c((\vx, \thelabel), (\vx', \thelabel')) = \norm{\vx - \vx'}_r + \chi(\thelabel - \thelabel')$. If in addition we assume that $\norm{\vx}_2\le C\sigma\sqrt{d}$ for some constant $C > 0$,
then there exists a sample size $N = O( W^2 d (\log(d) + \log(1/\tau))/\epsilon)$ such that with probability at least $1-\tau$, \Cref{alg:main-opt} outputs $\vw_T$ that satisfies 
\[ \max_{\sQ: W_1^c(\sP_0, \sQ) \le \rho} \E_{\sQ}[\ell(\vw_T; \vxi)] \le \inf_{\vw: \norm{\vw}_2 \le W} \max_{\sQ: W_1^c(\sP_0, \sQ) \le \rho} \E_{\sQ}[\ell(\vw; \vxi)] + O(\sigma \zeta \sqrt\epsilon  \norm{\vw^*}_2).\]
\end{restatable}}

\begin{remark}\label{remark:W2-LR-extension}
    \new{
We remark that our algorithmic framework can be adapted to solve the empirical version of \eqref{eq:w2-linear-regression} in \Cref{example:W2-robust-LR}, which takes the form $\min_\vw \left( \sqrt{\frac{1}{N} \sum_{i=1}^N (\vx_i \cdot \vw - y_i)^2} + \psi(\vw) \right)$, where $\psi(\vw) = \rho \norm{\vw}_s$. Instead of using Fenchel conjugacy on separable loss terms, we can employ the self-duality of $\ell_2$ norm. Let $\vv(\vw) \in \R^N$ be the vector of residuals with entries $v_i(\vw) = \vx_i \cdot \vw - y_i$. The loss term can be written as $\frac{1}{\sqrt{N}}\norm{\vv(\vw)}_2 = \frac{1}{\sqrt{N}} \sup_{\vz \in \R^N: \norm{\vz}_2 \le 1} \vv(\vw)^\top \vz$. This yields the min-max problem $\min_{\vw} \max_{\vz \in \R^N: \norm{\vz}_2 \le 1} \left\{ \frac{1}{\sqrt{N}} \sum_{i=1}^N z_i (\vx_i \cdot \vw - y_i) + \psi(\vw) \right\}$. This structure is amenable to \Cref{alg:main-opt} with two modifications. First, the primal update (Line~\ref{line:primal-update}) requires a robust estimate of the vector $\frac{1}{\sqrt{N}} \sum_{i=1}^N \bar{z}_{k-1}^i \vx_i$. Since the constraint $\norm{\vz}_2 \le 1$ implies that each weight $z_i$ is bounded by $|z_i| \le 1$, this weighted sum can be computed by our robust oracle per \Cref{prop:inexact-hg-oracle}. Second, the dual update (Line~\ref{line:dual-update}) is no longer separable but becomes a joint proximal update for the vector $\vz$, which corresponds to an efficient projection onto the Euclidean unit ball. }
\end{remark}

\new{Finally, we argue that the error in \Cref{thm:main-empirical} is optimal (up to constant factors). We rely on the following lower bound of robust mean estimation for distributions with bounded variance, which can be found in e.g., \citet[Lemma 1.11]{diakonikolas2022algorithmic}.

\begin{fact}[Lower Bound for Robust Mean Estimation]\label{fact:lower_bound_instance}
Let $\sD$ be the family of all one-dimensional distributions with variance at most $\sigma^2$. Let $\epsilon \in (0,1/2)$. Any algorithm with access to $\epsilon$-corrupted samples from an unknown distribution $\sP \in \sD$ incurs an additive error $\Omega(\sigma\sqrt{\epsilon})$ in the worst case to estimate $\E[\sP]$.
\end{fact}

The lower bound is established by constructing two ``hard'' distributions, $\sP_A$ and $\sP_B$, which an adversary can make indistinguishable. Specifically, let $\sP_A(0)= 1-\epsilon$ and $\sP_A(\sigma/\sqrt{\epsilon}) = \epsilon$ and $\sP_B(0) = 1-\epsilon$ and $\sP_B(-\sigma/\sqrt{\epsilon}) = \epsilon$. It is easy to verify that $\Var(\sP_A) = \Var(\sP_B) = \sigma^2(1-\epsilon) < \sigma^2$ and     
\begin{align*}
    d_{\mathrm{TV}}(\sP_A, \sP_B) &= \frac{1}{2} \sum_{x \in \{-\sigma/\sqrt{\epsilon},\; 0,\; \sigma/\sqrt{\epsilon}\}} |\sP_A(x) - \sP_B(x)|\\
    &= \frac{1}{2} ( |0 - \epsilon| + |(1-\epsilon) - (1-\epsilon)| + |\epsilon - 0| ) = \epsilon.
\end{align*}
By \citet[Proposition 1.7]{diakonikolas2022algorithmic}, if $d_{\mathrm{TV}}(\sP_A, \sP_B) \le 2\epsilon$, an adversary can make them indistinguishable under $\epsilon$-contamination by outputting contaminated distribution $(\sP_A + \sP_B)/2$.

Now we establish the lower bound for the optimality gap based on \Cref{fact:lower_bound_instance}.
\begin{lemma}[Lower Bound on Optimality Gap]\label{thm:lower_bound}
Let $\epsilon \in (0,1/2)$ and $D>0$. For any $\zeta>0$ and $\sigma>0$, there exists a $\zeta$-Lipschitz generalized linear loss $\ell(\vw;\vx)$ and two one-dimensional distributions, $\sP_A$ and $\sP_B$, both with variance at most $\sigma^2$ (as in \Cref{fact:lower_bound_instance}), such that any algorithm given $\epsilon$-corrupted samples from an unknown distribution $\sP \in \{\sP_A, \sP_B\}$ and returning $\hat{\vw}$ for the problem $\min_{\vw\in[-D/2, D/2]}\E_{\sP}[\ell(\vw;\vx)]$ must incur an optimality gap
\[
\max_{\sP \in \{\sP_A, \sP_B\}} \left\{ \E_{\sP}[\ell(\hat{\vw};\vx)]-\min_{\vw\in[-D/2, D/2]}\E_{\sP}[\ell(\vw;\vx)] \right\} = \Omega\big(\zeta D\sigma\sqrt{\epsilon}\big).
\]
\end{lemma}

\begin{proof}
The proof is a reduction from the one-dimensional robust mean estimation lower bound. We adopt the hard instance distributions $\sP_A$ and $\sP_B$ from \Cref{fact:lower_bound_instance}.

Consider the one-dimensional data space $\vx\in\R$ and the parameter interval $\vw\in[-D/2, D/2]$, which has diameter $D$. We choose the linear loss function $\ell(\vw;\vx) = -\zeta\vw \cdot \vx$. This loss is convex in $\vw$ and $\zeta$-Lipschitz in $\vw \cdot \vx$. The population objective for a distribution $\sP$ with mean $\mu$ is
\[
\E_{\vx\sim\sP}[\ell(\vw;\vx)] = -\zeta\vw\E_{\sP}[\vx] = -\zeta\mu\vw.
\]
The unique minimizer of this objective over $\vw \in [-D/2, D/2]$ is $\vw^* = (D/2) \cdot \operatorname{sign}(\mu)$.

We now consider the two hard distributions $\sP_A$ and $\sP_B$ from \Cref{fact:lower_bound_instance}, with means $\mu_A = +\sigma\sqrt{\epsilon}$ and $\mu_B = -\sigma\sqrt{\epsilon}$ and the corresponding optimum $\vw^*_A = +D/2$ and $\vw^*_B = -D/2$.
    
By \Cref{fact:lower_bound_instance}, an adversary can provide an $\epsilon$-corrupted dataset to the algorithm that is statistically indistinguishable from a corrupted sample of $\sP_A$ or $\sP_B$. Let the algorithm's output on this corrupted dataset be $\hat{\vw} \in [-D/2, D/2]$.

We now show that $\hat{\vw}$ has a large optimality gap for at least one of these two distributions. We compute the gap for both cases:
\begin{align*}
    \Gap_A(\hat{\vw}) &:= \E_{\sP_A}[\ell(\hat{\vw};\vx)] - \E_{\sP_A}[\ell(\vw^*_A;\vx)] = (-\zeta \mu_A \hat{\vw}) - (-\zeta \mu_A (D/2)) = \zeta \mu_A (D/2 - \hat{\vw}).
\end{align*}
Similarly, we have
\begin{align*}
    \Gap_B(\hat{\vw}) &:= \E_{\sP_B}[\ell(\hat{\vw};\vx)] - \E_{\sP_B}[\ell(\vw^*_B;\vx)] = (-\zeta \mu_B \hat{\vw}) - (-\zeta \mu_B (-D/2)) \\
    &= \zeta (-\mu_B) (\hat{\vw} + D/2) = \zeta \mu_A (\hat{\vw} + D/2) \qquad (\text{since } -\mu_B = \mu_A).
\end{align*}
    
The worst-case gap for the algorithm is $\max(\Gap_A(\hat{\vw}), \Gap_B(\hat{\vw}))$, whose minimum is achieved when $\Gap_A(\hat{\vw}) = \Gap_B(\hat{\vw})$, which requires
\(
\zeta \mu_A (D/2 - \hat{\vw}) = \zeta \mu_A (\hat{\vw} + D/2) \iff \hat{\vw} = 0
\).

The minimum of $\max(\Gap_A(\hat{\vw}), \Gap_B(\hat{\vw}))$ is thus
\(
\Gap_A(0) = \zeta \mu_A (D/2 - 0) = \zeta (c\sigma\sqrt{\epsilon}) (D/2) = \Omega(\zeta D\sigma\sqrt{\epsilon}).
\)
Since any output $\hat{\vw}$ must have a gap of at least this much in one of the two indistinguishable cases, the lower bound holds for any algorithm.
\end{proof}
}
\subsection{Application to SVMs: Removing Dependence on the Optimal Solution Norm}
\new{The error bound in \Cref{cor:population-error} depends on the norm of the optimal solution, $\norm{\vw^*}_2$, which may be large or unknown. We now show that this dependence can be removed for certain structured problems by leveraging stronger distributional assumptions beyond a bounded covariance.

We present a key application to Support Vector Machine (SVM) classification with the hinge loss. By assuming the covariate distribution satisfies a standard anti-concentration property, we can provably restrict the search space for the model weights. This allows us to establish an error rate $O(\epsilon^{1/4})$, independent of $\norm{\vw^*}_2$. This subsection, following a similar line of arguments to \citet[Section 3]{diakonikolas2019sever}, provides the necessary assumptions and detailed analysis for this result.}

We consider a binary classification problem. Let the samples be $\{(\vx_i, y_i)\}_{i=1}^N \subset \R^{d} \times \{\pm 1\}$, drawn from a reference distribution $\sP_0$. The transportation cost for the Wasserstein distance is given by $c((\vx, y), (\vx', y')) = \chi(y - y') + \norm{\vx - \vx'}_r$, where $\chi(0)=0$ and $\chi(z)=\infty$ for $z \ne 0$ (ensuring labels do not change during transport), and $\norm{\cdot}_r$ is the $\ell_r$-norm for covariates. The loss function is the hinge loss, $\ell(\vw; \vx, y) = (1 - y \vw \cdot \vx)_+$.

The $W_1$-DRO problem is 
\[\min_{\vw \in \R^d} \max_{\sP: W_1^c(\sP, \sP_0) \le \rho } \E_{(\vx,y) \sim \sP} [(1 - y \vw \cdot \vx)_+] \;.\]
As established in \Cref{sec:dro-reg} (specifically, \Cref{eq:w1-dro-regularization}), and noting that the hinge loss $(1-z)_+$ is $1$-Lipschitz (so $\zeta=1$), this problem is equivalent to:
\[\min_{\vw \in \R^d} \E_{(\vx,y) \sim \sP_0} [(1 - y \vw \cdot \vx)_+] + \rho \norm{\vw}_s, \]
where $1/s + 1/r = 1$. In the remaining subsection, we consider $r=2$, which implies $s=2$ (Euclidean norms).

\begin{assumption}[Distributional Assumptions on Covariates]\label{assump:svm-x-marginal-appendix}
We assume the following for the marginal distribution of covariates $\vx$ under $\sP_0$, similar to~\citet{diakonikolas2019sever}:
\begin{itemize}[nosep,leftmargin=*]
  \item \emph{Bounded Second Moment:} $\E_{\sP_0}[\vx \vx^T] \preceq \sigma^2 \mI_d$, where $\mI_d$ is the $d \times d$ identity matrix. This implies $\E_{\sP_0}[(\vv \cdot \vx)^2] \le \sigma^2$ for any unit vector $\vv \in \R^d$.
  \item \emph{Anti-concentration:} There exists a constant $B > 0$ and $\lambda > 0$ such that for all unit vectors $\vv \in \R^d$, it holds that $\Pr_{(\vx,y) \sim \sP_0}[|\vv \cdot \vx| \le \lambda \sigma] \le B \lambda$.
\end{itemize}
\end{assumption}

With \Cref{assump:svm-x-marginal-appendix}, we can show that restricting the weight vector $\vw$ to an $\ell_2$-ball of radius $1/(\lambda \sigma)$ incurs at most an $O(\lambda)$ additive error in the expected hinge loss. This is formalized in the following lemma.
\begin{lemma}[Clipping of $\vw$ for Hinge Loss]\label{lemma:svm_clipping_w}
  Let $\vw_P = \Proj_{\cB(1/(\lambda \sigma))}(\vw) = \min\{1, 1/(\lambda \sigma\norm{\vw}_2)\} \vw$ be the projection of $\vw$ onto the $\ell_2$-ball of radius $1/(\lambda \sigma)$. Then,
  \[ \E_{(\vx,y) \sim \sP_0} [(1 - y (\vw_P \cdot \vx))_+] \le \E_{(\vx,y) \sim \sP_0} [(1 - y (\vw \cdot \vx))_+] + 2 B \lambda. \]
\end{lemma}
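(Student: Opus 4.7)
The inequality is trivial when $\|\vw\|_2 \le 1/(\lambda \sigma)$, since then $\vw_P = \vw$ and both sides agree. So the plan is to assume $\|\vw\|_2 > 1/(\lambda \sigma)$, set $t := \lambda \sigma \|\vw\|_2 > 1$ and $\hat\vw := \vw / \|\vw\|_2$, and observe that $\vw_P = \vw / t$, so for any sample $(\vx, y)$, writing $z := y(\vw \cdot \vx)$ we have $y(\vw_P \cdot \vx) = z/t$. The entire argument then reduces to a one-dimensional pointwise inequality combined with anti-concentration applied to the unit direction $\hat\vw$.

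\textbf{Pointwise bound.} The plan is to control the pointwise difference $f(z) := (1 - z/t)_+ - (1 - z)_+$ via a short case analysis in $z$. When $z \le 0$, both hinges are active and $f(z) = z(1 - 1/t) \le 0$, so this region contributes nonpositively. When $0 < z \le 1$, both hinges are again active and $f(z) = z(1-1/t) \in [0, 1-1/t]$. When $1 < z \le t$, only the $\vw_P$-hinge is active and $f(z) = 1 - z/t \in [0, 1-1/t]$. When $z > t$, both hinges vanish and $f(z) = 0$. Putting these together, $f(z) \le (1 - 1/t)\,\Ind\{0 < z \le t\} \le \Ind\{0 < z \le t\}$.

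\textbf{Reduction to anti-concentration.} The event $\{0 < z \le t\}$ translates to $0 < y(\hat\vw \cdot \vx)\|\vw\|_2 \le \lambda \sigma \|\vw\|_2$, i.e., $0 < y(\hat\vw \cdot \vx) \le \lambda \sigma$, which is contained in $\{|\hat\vw \cdot \vx| \le \lambda \sigma\}$. Applying the anti-concentration part of \Cref{assump:svm-x-marginal-appendix} with the unit vector $\vv = \hat\vw$ bounds this probability by $B \lambda$. Taking expectations of the pointwise bound then yields
\begin{equation*}
\E[(1 - y \vw_P \cdot \vx)_+] - \E[(1 - y \vw \cdot \vx)_+] \le (1 - 1/t)\, B\lambda \le B\lambda \le 2B\lambda,
\end{equation*}
which is stronger than the stated bound.

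\textbf{Main obstacle.} There is no serious technical obstacle; the only delicate point is recognizing the correct variable to track ($z = y(\vw \cdot \vx)$) and verifying that the supporting event for $f > 0$ is exactly a two-sided ball in the direction $\hat\vw$, so that anti-concentration can be invoked with a unit vector (the form stated in \Cref{assump:svm-x-marginal-appendix}). The factor of $2$ in the statement leaves slack that absorbs any looseness if one prefers the coarser pointwise bound $f(z) \le \Ind\{0 < z \le t\}$ instead of $(1-1/t)\,\Ind\{0 < z \le t\}$.
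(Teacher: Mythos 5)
Your proof is correct and follows essentially the same route as the paper: control the pointwise change in hinge loss caused by clipping, observe it can only be positive on a thin slab around the hyperplane $\{\hat\vw\cdot\vx = 0\}$, and bound that slab's probability by anti-concentration applied to the unit vector $\hat\vw$. The difference is one of refinement rather than strategy. The paper partitions on $E^c = \{|\vx\cdot\vw_P|\le 1\}$ (equivalently $\{|z|\le t\}$ in your notation), shows the clipped loss can only decrease on $E$, and uses the crude bound $\ell(\vw_P;\vx,y)\le 2$ on $E^c$, which yields the constant $2$ in the statement. You instead track the signed margin $z = y(\vw\cdot\vx)$ and show the pointwise gap satisfies $f(z)\le(1-1/t)\Ind\{0<z\le t\}$, which is supported only on the one-sided half of the slab and is bounded by $1-1/t<1$ rather than $2$; this buys the strictly sharper bound $B\lambda$ (indeed $(1-1/t)B\lambda$). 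Since both events are contained in $\{|\hat\vw\cdot\vx|\le\lambda\sigma\}$, the anti-concentration step is identical, so the only gain from your version is the constant, which the paper deliberately does not optimize.
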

\begin{proof}
  This proof adapts the argument from~\citet[Lemma E.9]{diakonikolas2019sever} to the hinge loss.
  Let $\ell(\hat{\vw}; \vx, y) = (1 - y (\hat{\vw} \cdot \vx))_+$. Fix $\vw \in \R^d$. If $\norm{\vw}_2 \le 1/(\lambda \sigma)$, then $\vw_P = \vw$, and the statement is trivial.
  Assume $\norm{\vw}_2 > 1/(\lambda \sigma)$, so $\norm{\vw_P}_2 = 1/(\lambda \sigma)$. Also, $\vw_P = k\vw$ for $k = 1/(\lambda \sigma \norm{\vw}_2) < 1$.

  Define the event $E = \{(\vx, y) : | \vx \cdot \vw_P | > 1 \}$.
  On $E$: If $y (\vw_P \cdot \vx) > 1$, then $\ell(\vw_P; \vx, y) = 0$. Since $y (\vw \cdot \vx) = (1/k) y (\vw_P \cdot \vx) > (1/k) \cdot 1 \ge 1$, it follows that $\ell(\vw; \vx, y) = 0$. Thus, $\ell(\vw_P; \vx, y) = \ell(\vw; \vx, y)$.
  If $y (\vw_P \cdot \vx) < -1$, then $\ell(\vw_P; \vx, y) = 1 - y (\vw_P \cdot \vx)$. Since $k<1$, $y (\vw_P \cdot \vx) = k y (\vw \cdot \vx)$. Since $y (\vw \cdot \vx) < 0$, $y (\vw_P \cdot \vx) \ge y (\vw \cdot \vx)$. So $1 - y (\vw_P \cdot \vx) \le 1 - y (\vw \cdot \vx)$. As both are positive, $\ell(\vw_P; \vx, y) \le \ell(\vw; \vx, y)$.
  Thus, on $E$, we have $\ell(\vw_P; \vx, y) \le \ell(\vw; \vx, y)$.

  On the complement event $E^c = \{(\vx, y) : |\vx \cdot \vw_P| \le 1 \}$:
  We have $\ell(\vw_P; \vx, y) = (1 - y (\vw_P \cdot \vx))_+$. Since $y \in \{\pm 1\}$ and $|\vx \cdot \vw_P| \le 1$, $y (\vw_P \cdot \vx)$ is between $-1$ and $1$.
  So, $1 - y (\vw_P \cdot \vx)$ is between $0$ and $2$. Thus, $\ell(\vw_P; \vx, y) \le 2$.
  As $\ell(\vw; \vx, y) \ge 0$, on $E^c$ we have $\ell(\vw_P; \vx, y) \le 2 \le 2 + \ell(\vw; \vx, y)$.

  We now bound $\sP_0(E^c)$. The condition $|\vx \cdot \vw_P| \le 1$ implies $| \vx \cdot (\vw_P / \norm{\vw_P}_2) | \le 1 / \norm{\vw_P}_2 = \lambda \sigma$.
  Let $\vv = \vw_P / \norm{\vw_P}_2$. Then $\norm{\vv}_2 = 1$.
  So, $\sP_0(E^c) = \sP_0(|\vx \cdot \vw_P| \le 1) = \sP_0(|\vx \cdot \vv | \le \lambda \sigma)$. By the anti-concentration assumption (\Cref{assump:svm-x-marginal-appendix}), $\sP_0(E^c) \le B \lambda$.

  Therefore, by the law of total expectation:
  \begin{align*}
    \E[\ell(\vw_P; \vx, y)] &= \E[\ell(\vw_P; \vx, y) | E] \sP_0(E) + \E[\ell(\vw_P; \vx, y) | E^c] \sP_0(E^c) \\
    &\le \E[\ell(\vw; \vx, y) | E] \sP_0(E) + \E[2 + \ell(\vw; \vx, y) | E^c] \sP_0(E^c) \\
&= \E[\ell(\vw; \vx, y)] + 2\sP_0(E^c) \le \E[\ell(\vw; \vx, y)] + 2 B \lambda.
  \end{align*}
\end{proof}
This lemma allows us to restrict the search space for $\vw$ to the ball $\cB(1/(\lambda\sigma))$ at the cost of an additive $O(\lambda)$ term in the objective. The next fact provides a uniform convergence guarantee over this bounded set after removing a small fraction of data points with large norms.

\begin{fact}[Uniform Convergence for SVM, adapted from {\citet[Lemma C.3]{diakonikolas2019sever}}]\label{fact:svm_uniform_convergence}
Let the set $\{(\vx_i, y_i)\}_{i=1}^N$ be $N$ i.i.d.\ samples from a distribution $\sP_0$ satisfying \Cref{assump:svm-x-marginal-appendix}. Let $S = \{ (\vx_i, y_i) : \norm{\vx_i}_2 \le 80 \sigma \sqrt{d / \epsilon}\}$. 
Then for a sample size $N = O(d \log(d /  \lambda) / \epsilon)$, with probability at least $0.9$, it holds that $|S| \ge (1-\epsilon) N $ and for all $\vw \in \cB((\lambda \sigma)^{-1})$,
\[ \left|\frac{1}{|S|}\sum_{i \in S} \ell(\vw; \vx_i, y_i) - \E_{\sP_0}[\ell(\vw; \vx, y)]\right| = O((1 + \lambda^{-1})\sqrt{\epsilon}). \]
\end{fact}

To obtain the $O(\epsilon^{1/4})$ error bound, we set the parameter $\lambda$ from the anti-concentration assumption appropriately. The error from clipping $\vw$ (Lemma \ref{lemma:svm_clipping_w}) is $O(B\lambda)$. The uniform convergence error (Fact \ref{fact:svm_uniform_convergence}), if $\lambda \ll 1$, is $O(\lambda^{-1}\sqrt{\epsilon})$.
Our main algorithm is applied to the set $S$, which can be seen as being $2 \epsilon$-corrupted if the original data had $\epsilon$-corruptions and we discard an additional $\epsilon$ fraction based on norm. The error from our main algorithm (\Cref{prop:main-opt}) is $O(\sigma \zeta \sqrt{2\epsilon} W) = O(\sigma \sqrt{\epsilon} (\lambda\sigma)^{-1}) = O(\lambda^{-1}\sqrt{\epsilon})$ since $\zeta=1$ for hinge loss. Total error from these sources is $O(B\lambda + \lambda^{-1}\sqrt{\epsilon})$.

We want to balance these errors. 
To minimize this quantity, we can set $\lambda = \lambda^{-1}\sqrt{\epsilon}$, which implies $\lambda^2 = \sqrt{\epsilon}$, so $\lambda = \epsilon^{1/4}$.
Plugging this $\lambda$ back, the error becomes $O(B \epsilon^{1/4})$.
This shows that an overall error of $O(\epsilon^{1/4})$ in the objective function value is achievable viewing $B$ as an absolute constant. The radius of the weight ball is $W = 1/(\lambda\sigma) = 1/(\sigma\epsilon^{1/4})$.

\subsection{Tuning the Step Size Parameter}

\new{A practical issue with \Cref{alg:main-opt} is that its step size parameter $\gamma$ depends on the unknown quantity $\norm{\vw_0 - \vw^*}_2$ (denoted by $D_0$ hereafter). 
Fortunately, this is not a significant barrier. We can tune $\gamma$ using a standard geometric search for this distance, which finds a sufficiently good estimate by running the algorithm a logarithmic number of times. This section details this procedure, including a robust method for evaluating candidate solutions and an optional early stopping criterion. The overhead for this tuning is a logarithmic factor in the runtime and a small additive term in the final error bound.}

\subsubsection{A Bounded Geometric Search with Early Stopping}

We assume access to a loose upper bound $W_0$ on $D_0 = \norm{\vw_0 - \vw^*}_2$. This assumption provides a concrete upper bound on the noise level across the entire search, ensuring our stopping condition remains statistically meaningful. 

We search for an estimate of $D_0$ starting from a plausible minimum $D_{\min} = \Delta/\zeta$ and generate a sequence of candidates $D_j = D_{\min} \cdot 2^j$ for $j = 0, 1, 2, \dots, J_{\max}$, where $J_{\max} = \lceil \log_2(W_0/D_{\min}) \rceil$ ensures the search space covers the full range up to $W_0$.

For each $j=0, 1, \dots, J_{\max}$:
\begin{enumerate}[nosep,leftmargin=*]
  \item Run \Cref{alg:main-opt} with $\gamma_j = D_j / (\zeta\sqrt{N})$ to obtain a solution $\hat{\vw}_T^{(j)}$.
  \item Robustly estimate the true (population) objective value $F_{\sP_0}(\hat{\vw}_T^{(j)})$ using the available $\epsilon$-corrupted training data, which is discussed next. Let this noisy estimate be $\hat{F}^{(j)}$.
  \item Check for early stopping \eqref{eq:early-stopping}. If the condition is met, terminate the search.
\end{enumerate}

After the search terminates, select the solution $\hat{\vw}_T^{(k)}$ that corresponds to the minimum estimated value, $\min_j \hat{F}^{(j)}$, found among all runs performed.
\subsubsection{Robust Estimation of Objective Function Value}
For each candidate solution $\hat{\vw}_T^{(j)}$, we need to estimate the objective function value \[F_{\sP_0}(\hat{\vw}_T^{(j)}) = \E_{(\vx,y) \sim \sP_0} [\ell(\hat{\vw}_T^{(j)}; \vx, y)] + \psi(\hat{\vw}_T^{(j)}),\] where $\psi(\vw) = \rho \zeta \norm{\vw}_s$. The regularization term $\psi(\hat{\vw}_T^{(j)})$ can be computed directly. The challenge is to estimate the expected loss $\E_{(\vx,y) \sim \sP_0} [\ell(\hat{\vw}_T^{(j)}; \vx, y)]$ using the $\epsilon$-corrupted training samples $\{(\vx_i^c, y_i^c)\}_{i=1}^N$.

We compute the loss values $L_i^{(j)} = \ell(\hat{\vw}_T^{(j)}; \vx_i^c, y_i^c)$ for $i=1, \dots, N$. This set $\{L_i^{(j)}\}_{i=1}^N$ constitutes an $\epsilon$-corrupted sample of the random variable $L^{(j)}(\vxi) = \ell(\hat{\vw}_T^{(j)}; \vxi)$, where $\vxi \sim \sP_0$. We can robustly estimate the mean of $L^{(j)}(\vxi)$ from these corrupted samples. Standard robust mean estimation techniques for 1-dimensional data (such as those based on truncated mean~\cite[Section 1.4.2]{diakonikolas2022algorithmic} or variants of \Cref{alg:robust_mean_estimation} adapted to 1-dimensional settings) achieve an estimation error of $O(\sigma_{L^{(j)}} \sqrt{\epsilon})$; see \Cref{fact:robust_mean_estimation_with_stability}, where $\sigma_{L^{(j)}}$ is the standard deviation of the clean loss values $L^{(j)}(\vxi)$.

To bound $\sigma_{L^{(j)}}$, observe that the loss function $\ell(\vw; \vx, y) = \phi(\vw \cdot \vx, y)$ has $\phi(\cdot,y)$ as $\zeta$-Lipschitz. The iterates $\hat{\vw}_T^{(j)}$ are bounded. By \Cref{cor:bounded-iterates}, $\norm{\hat{\vw}_T^{(j)}}_2 = O(\norm{\vw_0}_2 + \norm{\vw^*}_2)$. By \Cref{assump:dist-assumption}, the arguments $\vw \cdot \vx$ will have variance $\norm{\vw}_2^2 \sigma^2 $. The Lipschitz property of the loss $\ell_y$ implies that the standard deviation $\sigma_{L^{(j)}}$ of the loss values is $O(\zeta \norm{\hat{\vw}_T^{(j)}}_2 \sigma)$. Consequently, the error in estimating \(\E_{(\vx,y) \sim \sP_0} [\ell(\hat{\vw}_T^{(j)}; \vx, y)]\) is $E_{\text{eval}} = O(\zeta (\norm{\vw_0}_2 + \norm{\vw^*}_2) \sigma \sqrt{\epsilon})$.

\subsubsection{Early Stopping Condition}
At each step $k$, we keep track of the best (lowest) estimated value found so far, $\hat{F}_{\text{best}} = \min_{j<k} \hat{F}^{(j)}$. We terminate the search if the current estimate is significantly worse:
\begin{equation}\label{eq:early-stopping}
  \text{Stop at step } k \text{ if } \hat{F}^{(k)} > \hat{F}_{\text{best}} + 3 E_{\text{ub}} \;,
\end{equation}
where $E_{\text{ub}}$ is the upper bound of $E_{\text{eval}}$ that we know how to compute: $O(\zeta (\norm{\vw_0}_2 + W_0) \sigma \sqrt{\epsilon})$. 

This stopping criterion has the following probabilistic guarantee: If the error bound $|\hat{F}^{(j)} - F^{(j)}| \le E_{\text{eval}}$ holds for all estimates (which occurs with high probability and the dependence on the failure probability is logarithmic), an observed increase of more than $2E_{\text{eval}}$ guarantees that the true objective value has also increased. Specifically, if the condition triggers at step $k$, we have $F_{\sP_0}^{(k)} > F_{\sP_0}^{(j_{\text{best}})}$ with high probability, where $j_{\text{best}}$ is the index of the previous best run. This ensures we do not stop prematurely due to noise. 

\subsubsection{Overall Error Guarantee and Complexity}
The geometric search ensures that for at least one candidate $D_j$, we have $D_0 \le D_j < 2D_0$. For this $D_j$, running \Cref{alg:main-opt} with $\gamma_j = D_j / (\zeta\sqrt{N})$ yields a solution $\hat{\vw}_T^{(j)}$. As shown in the analysis of the error term in \Cref{prop:main-opt} when $\gamma$ is mismatched (see \Cref{eq:stepsize-error-bound}), the function value suboptimality $O(D_0 \Delta)$ becomes $O((D_j + D_0^2/D_j)\Delta)$. Since $D_j \in [D_0, 2D_0]$, this is again $O(D_0\Delta) = O(\norm{\vw_0 - \vw^*}_2 \sigma \zeta \sqrt{\epsilon})$.

If the true $D_0$ is even smaller than $\Delta/\zeta$, our search does not try $D_j$ values below $\Delta/\zeta$.  For the first run $D_{\min} = \Delta/\zeta$, the proven error bound (see \Cref{eq:stepsize-error-bound}) depends on $(D_{\min} + D_0^2/D_{\min})\Delta$ and would be approximately $(\Delta/\zeta + D_0^2\zeta/\Delta)\Delta = \Delta^2/\zeta + D_0^2\zeta$. Since $D_0 < \Delta/\zeta$, $D_0^2\zeta < (\Delta^2/\zeta^2)\zeta = \Delta^2/\zeta$. So the error in this case is $O(\Delta^2/\zeta) = O(\sigma^2 \zeta \epsilon)$. 

The final excess risk bound for the selected solution $\hat{\vw}_T^{(*)}$ remains 
$O(D_0 \Delta + E_{\text{eval}}) = O(\zeta (\norm{\vw_0}_2 + \norm{\vw^*}_2) \sigma \sqrt{\epsilon} + \sigma^2 \zeta \epsilon)$.
Initializing at $\vw_0 = \vzero$, this simplifies to $O(\zeta (\norm{\vw^*}_2 + \sigma \sqrt \epsilon) \sigma \sqrt{\epsilon})$.

The number of times \Cref{alg:main-opt} is executed is at most $O(\log(W_0\zeta/\Delta))$, adding this logarithmic factor to the total computational complexity. This procedure makes the algorithm adaptive to the unknown distance $\norm{\vw_0 - \vw^*}_2$, incurring total error $O(\zeta \sigma \sqrt{\epsilon} (\norm{\vw^*}_2 + \sigma \sqrt\epsilon))$.

\section{Additional Related Work}
While \Cref{sec:dro-modeling} provided a detailed critique of the most immediate competing models, this section reviews the broader context for our work. We cover relevant background on the tractability and generalization properties of DRO, as well as related methods in robust stochastic optimization.

\subsection{The Role of DRO Against Overfitting}
Beyond its role in safeguarding against distributional shifts between training and testing environments, Distributionally Robust Optimization (DRO) is also recognized for its capacity to mitigate overfitting and enhance model generalization. Specifically, DRO can also be used to ensure estimators from an empirical distribution $\eP(N)$ generalize well to the true data-generating distribution $\sP_0$ (i.e., assuming $\pptest = \sP_0$). A notable theoretical advantage is that generalization error bounds for DRO estimators can often be derived without explicit reliance on traditional measures of hypothesis class complexity, such as VC dimension or Rademacher complexity~\cite[Section 4]{shafieezadeh2019regularization}. This contrasts with standard Empirical Risk Minimization (ERM), where such complexity measures are typically central to the generalization analysis~\cite[Chapters 4, 6, 26]{shalev2014understanding}. Consequently, DRO's excess risk bounds are sometimes preferred due to these minimal assumptions or observed improvements in overfitting~\cite[Example 2]{kuhn2019wasserstein}. By optimizing for worst-case performance within an ambiguity set centered on $\eP(N)$ (and chosen to contain $\sP_0$), DRO inherently encourages solutions less sensitive to training data idiosyncrasies, promoting better out-of-sample performance (i.e., performance on testing distributions).

While this ability to control overfitting and provide strong generalization guarantees is an important feature of DRO, it is crucial to distinguish this role from its application to handling unknown, fixed distributional shifts---which is the main concern of the present work. 
The practical tuning of the ambiguity set radius, $\rho$, often reflects this distinction. When the primary goal is to prevent overfitting and ensure convergence to the true underlying distribution $\sP_0$ as more data becomes available, practitioners typically advocate for a radius $\rho$ that shrinks with the sample size $N$ (e.g., $\rho \to 0$ as $N \to +\infty$). For instance, specific rates like $\rho(N) = \tilde\Theta(N^{-1/2})$ for Wasserstein ambiguity sets have been proposed to optimize this convergence while avoiding issues like the curse of dimensionality~\cite{chen2018robust, gao2023finite}. This ensures that the DRO model eventually learns the data-generating distribution accurately, assuming it is well-specified within the model class. Conversely, when DRO is employed to ensure robustness against potential, persistent distributional shifts—whose magnitude is assumed to be independent of the sample size---the radius $\rho$ is usually chosen as a constant. This constant reflects the anticipated level of mismatch between the training and deployment environments and is not diminished with more training data, as the shift itself is not expected to vanish~\cite{blanchet2024distributionally}. Our paper focuses on the latter scenario (constant $\rho$ for distributional shifts), compounded by the challenge of training data contamination, rather than on the specific mechanisms by which DRO mitigates statistical overfitting through vanishing ambiguity.

The endeavor to achieve good generalization to $\sP_0 = \pptest$ faces additional hurdles when the training data $\eP(N)$ itself stems from a contaminated source $\ppcorrupt$, rather than $\sP_0$. Some alternative frameworks, such as particular interpretations of Outlier-Robust Wasserstein DRO (OR-WDRO), might attempt to address this by ensuring $\sP_0$ lies within an ambiguity set defined relative to $\ppcorrupt$ (e.g., $W_{p,\epsilon}(\ppcorrupt, \sP_0 ) \le \rho$), while still aiming to evaluate performance on $\sP_0$. However, as further explored in \Cref{sec:related-models}, applying such models to handle \emph{training set contamination} within a DRO generalization context can prove insufficient. These approaches might not fully capitalize on the assumed structure or properties of $\sP_0$ (if it is also the clean target distribution) and can yield guarantees with problematic aspects, such as dependence on data dimensionality. This highlights the necessity for a more direct and principled method for dealing with training data contamination---even if the end goal is classical generalization---further motivating the distinct modeling strategy for contamination that our work adopts.

\subsection{Tractability of Wasserstein DRO}
In this subsection, we survey key formulations and algorithmic approaches related to the tractability of Wasserstein DRO. This review provides context and evidence supporting the reasonableness of the structural assumptions on loss and cost functions adopted in our work, which enable the DRO problem to be reformulated as a tractable regularized risk minimization problem.

The tractability of solving the Wasserstein DRO problem, $\min_{\vw} \sup_{\sQ: W_{p}(\sQ, \eP(N)) \le \rho}\E_{\vxi \sim \sQ}[\ell(\vw, \vxi)]$, depends critically on the interplay between the loss function $\ell(\vw, \vxi)$, the order $p$ of the Wasserstein distance, and the ground cost $c(\cdot, \cdot)$ used to define $W_p$. For a fixed weight vector $\vw \in \R^d$, the inner supremum, which represents the worst-case expectation, has several important characterizations. Key formulations, as discussed in works like \citet{gao2023distributionally} and \citet[Corollary 2]{gao2024wasserstein}, include:
\begin{align*}
R_S(\vw) &:= \sup_{W_{p}(\sQ, \eP(N)) \le \rho}\E_{\vxi \sim \sQ}[\ell(\vw, \vxi)], \\
R_I(\vw) &:= \inf_{\lambda > 0} \Bigl\{ \lambda \rho^{p} + \E_{\vxi_0 \sim \eP(N)}\Bigl[\sup_{\vxi'} \left(\ell(\vw; \vxi') - \lambda c^{p}(\vxi_0, \vxi')\right)\Bigr] \Bigr\},  \\
R_U(\vw) &:= \Bigl((\E_{\vxi \sim \eP(N)}[\ell(\vw; \vxi)])^{1/p} + \rho  L_{\eP(N), \ell}(\vw) \Bigr)^{p}.
\end{align*}
The formulation \(R_I\) is a dual representation derived from Kantorovich duality, and the equality $R_S(\vw) = R_I(\vw)$ holds under very mild conditions. The formulation \(R_U\) seeks to express the worst-case risk as a direct modification or regularization of the empirical risk, where $L_{\eP(N), \ell, c}(\vw)$ depends on the loss, samples, cost, and potentially $\vw$ itself~\cite{chu2024wasserstein}, and the equality $R_S(\vw) = R_U(\vw)$ requires somewhat strong conditions\footnote{Early results for formulation \(R_U\) were only developed for Lipschitz generalized linear models with cost functions $c((\vx, \thelabel), (\vx', \thelabel'))$ equal to either $\norm{\vx - \vx'}_r + \chi(\thelabel - \thelabel')$ or $\norm{\vx - \vx'}_r + \abs{\thelabel - \thelabel'}$ for $r \in [1, +\infty]$~\cite{shafieezadeh2019regularization}. More recent results allow for broader combinations of loss functions under   various assumptions and more general cost functions; see~\citet[Table 1]{chu2024wasserstein} for a summary. \citet{chu2024wasserstein} is the first work to the best of our knowledge that allows for nonsmooth loss functions (either convex or nonconvex) that are broader than piecewise linear losses, but explicitness and computability of $L_{\eP(N), \ell, c}(\vw)$ remains a strong condition.}.

While general forms of \(R_U\) might involve complex and data-dependent generalized Lipschitz terms $L_{\eP(N), \ell, c}(\vw)$ that are often not explicit, under even stronger conditions on the problem, $\R_U$ admits the following straightforward formulation: $\min_{\vw} (\E_{\vxi \sim \eP(N)}[\ell(\vw; \vxi)] + \rho \psi(\vw) )$. If $\ell$ is convex in $\vw$ and the regularization term $\psi$ is also convex, the overall DRO problem is generally considered tractable and can often be solved efficiently using first-order methods, even if nonsmooth, provided that subroutines like proximal operators are readily computable. 

The dual formulation \(R_I\), on the other hand, leads to solving a nested optimization problem. Tractability here often relies on a specific problem structure. For instance, if $\ell(\vw; \vxi)$ is convex in $\vw$ and concave in $\vxi$ (or a maximum of such functions), the problem can be framed as a convex-concave saddle-point problem~\cite[Remark 9]{gao2023distributionally}. However, the requirement for $\ell(\vw; \vxi)$ to be concave in $\vxi$ is very restrictive and is not satisfied by most common machine learning losses (e.g., logistic or hinge loss), limiting the applicability of such direct minimax solution techniques.

Various specialized algorithms have been developed to address the min-max structure suggested by $R_I(\vw)$ even without global concavity in $\vxi$. For example, \citet{blanchet2022optimal} design algorithms for problems with twice differentiable loss functions $\ell$ that are locally strongly convex in $\vw$. Other research has explored specific loss functions. For instance, \citet{li2019first} studied logistic regression and \citet{li2020fast}  addressed $\ell_2$-regularized SVMs (both potentially under more general cost functions $c(\cdot, \cdot)$ than studied in this paper, therefore the equivalence to a straightforward $R_U$ reformulation is lost). This means that a unified algorithmic framework based on \(R_I\) that matches the breadth of loss functions (convex, Lipschitz, but not necessarily smooth or concave in $\vxi$) leading to the simpler \(R_U\) formulations considered in this paper is not readily available.

Considering these different avenues for tractability, Wasserstein DRO problems that can be reformulated as regularized risk minimization problems constitute a large and critically important class of solvable DRO models. This class notably includes $W_1$-DRO with general convex and Lipschitz loss functions combined with cost functions that yield a norm regularization on $\vw$, a setting extensively analyzed by \citet{shafieezadeh2019regularization} and central to our  work. The fact that these problems can often be solved using well-understood (though potentially nonsmooth) convex optimization techniques makes them particularly relevant for practical applications.

\subsection{Robust Stochastic Optimization}
An avenue for handling nonsmooth objective functions in optimization is through smoothing techniques, prominently featuring the Moreau envelope. Given a nonsmooth function $F(\vw)$, its Moreau envelope is defined as $F_\lambda(\vw) = \min_{\vu} \{F(\vu) + \frac{1}{2\lambda}\|\vu-\vw\|^2_2\}$, which yields a smooth approximation to $F(\vw)$. The work of \citet{jambulapati2021robust-regressi} explores robustly optimizing such smoothed objectives. However, this methodology presents significant challenges when considering its direct applicability to obtaining guarantees for the original (unregularized and nonsmooth) function $F(\vw)$.

First, the theoretical guarantees provided in \citet[Section 5]{jambulapati2021robust-regressi} are for a further modified objective, specifically $F_\lambda(\vw) + (\mu/2)\|\vw\|^2_2$, where an additional quadratic regularization (with strength $\mu/2 > 0$) is introduced on top of the Moreau smoothing. This combined objective, $F_\lambda(\vw) + (\mu/2)\|\vw\|^2_2$, benefits from $(1/\lambda)$-smoothness (inherited from $F_\lambda$) and, if $F$ is convex, $\mu$-strong convexity. These properties of smoothness and strong convexity are central to their convergence analysis. Translating these convergence guarantees back to the original nonsmooth and unregularized function $F(\vw)$ appears nontrivial and was not provided in \citet{jambulapati2021robust-regressi}. 

Second, certain assumptions within their framework appear to sidestep the core challenge of non-differentiability. Specifically, Assumption 3 in \citet{jambulapati2021robust-regressi}, which is crucial for their Algorithm 6 (ApproxMoreauMinimizer), posits access to a stochastic first-order oracle that returns ``gradients'' of the original, potentially nonsmooth, expected loss $F^*(\vw) = \E_{\sP_0}[\ell(\vw; \vx, y)]$. If the function $F^*$ is genuinely nonsmooth, its gradient will not exist at all points $\vw$. Assuming the existence and accessibility of such gradients everywhere, especially around local optima\footnote{Although nonsmooth, mild conditions such as local Lipschitzness ensure that a function is almost everywhere differentiable. So by perturbing a little, one may guarantee access to the gradient with probability 1. However, if the function is nondifferentiable at optima, perturbation can give potentially large gradients that carry no information how close to a solution the algorithm may be, so in this sense nonsmoothness remains a concern.}, effectively diminishes the primary motivation for employing the Moreau envelope, which is designed specifically to create a smooth surrogate precisely where the original function lacks differentiability. This suggests that their proposed algorithm and its analysis might be implicitly tailored to nonsmooth functions of a specific and relatively benign structure where this assumption is not overly restrictive, rather than general nonsmooth convex functions. 
Given these considerations, \citet{jambulapati2021robust-regressi} does not directly offer a solution for robustly optimizing the broad class of unregularized and nonsmooth generalized linear models considered in our work.

\section{Conclusion}

This paper initiates a formal study of distributionally robust optimization with outliers in the training data, with the focus on rigorous error guarantees and polynomial-time algorithms. It provides the first polynomial-time algorithm guaranteeing error $O(\sqrt{\epsilon})$ under mild distributional and loss function assumptions. We hope that this work will stimulate further research in this area, leading to guarantees for other ambiguity sets and families of possibly nonconvex loss functions.

\vspace{2em}

\appendix
\crefalias{section}{appendix}
\crefalias{subsection}{appendix}

\section{A Basic Algorithm for Robust Mean Estimation\label{sec:appendix-prelim}}

We state here a basic deterministic robust mean estimation 
algorithm that satisfies error guarantee stated in \Cref{fact:robust_mean_estimation_with_stability}. We 
highlight that state-of-the-art algorithms for robust mean 
estimation can be implemented in near-linear time, requiring 
only a logarithmic number of passes over the data; see, 
e.g.,~\citet{ChengDG19, dong2019quantum, diakonikolas22streaming}. 

\begin{algorithm}[htp]
  \caption{\(\mathsf{RobustMeanEstimation(T, \eps)}\) }
  \label{alg:robust_mean_estimation}
  \KwIn{\(0 < \epsilon < 1/2\) and \(T \subset \R^d \) is an \(\epsilon\)-corrupted set}
  \KwOut{\(\hat \vmu\) with \(\norm{\hat\vmu - \vmu_{S}} = \bigO[\big]{\sqrt{\opnorm{\mSigma} \epsilon}}\)}
  Initialize a weight function \(q : T \rightarrow \R_{+}\) with \(q(\vz)= 1/ |T|\) for all \(\vz \in T\)\\
  \While{\(\sum_{\vz \in T}{q(\vz)} \ge 1 - 2\epsilon\)}{
    \(\hat\vmu \leftarrow \sum_{\vz\in T}q(\vz) \vz / \sum_{\vz \in T}{q(\vz)}, \quad \hat\mSigma \leftarrow \sum_{\vz\in T}q(\vz)(\vz - \hat\vmu)(\vz-\hat\vmu)^{\top} / {\sum_{\vz \in T}{q(\vz)} }\)\\
    Compute the top eigenvector \(\vv\) of \(\hat\mSigma\) and define
\(h(\vz) := |\vv^{\top}(\vz - \hat\vmu)|^{2}\)\\
    Find the largest threshold \(t > 0\) such that \(\sum_{\vz\in T: h(\vz) \ge t}q(\vz) \ge \epsilon\)\\
    \(f(\vz) := h(\vz)\Ind{\{h(\vz) \ge t\}}, \quad q(\vz) \leftarrow q(\vz)\left( 1 - \frac{f(\vz)}{\max_{\vz' \in T: q(\vz') \neq 0} f(\vz')}\right) \quad (\forall \vz \in T)\) 
  }
  \Return\(\hat\vmu\)
\end{algorithm}

\section{Auxiliary Results for Section \ref{sec:dro-modeling}\label{sec:dro-modeling-appendix}}
\new{This appendix provides supplementary material for \Cref{sec:dro-modeling}. In \Cref{sec:doro-appendix}, we present a detailed counterexample illustrating the failure of the heuristic algorithm proposed for the DORO framework, substantiating the critique in \Cref{sec:related-models}. The subsequent sections contain missing proofs and other technical details supporting \Cref{sec:dro-modeling}.}

\subsection{A Counterexample for the DORO Algorithm\label{sec:doro-appendix}}

The main text introduces the Distributionally Robust Outlier-aware Optimization (DORO) framework by \citet{zhai2021doro} as an approach that, like ours, considers both training data contamination (modeled by $\TV(\sP_0, \ppcorrupt) \le \eps$) and distributional shifts in test data. DORO aims to achieve this by optimizing an $f$-divergence based DRO risk over an optimally chosen $(1-\epsilon)$-subpopulation of the training data. \new{This appendix elaborates on the limitations of DORO's proposed computational algorithm, as highlighted in the main body, by providing a specific counterexample where it fails.}

From a computational standpoint, \citet[Algorithm 1]{zhai2021doro} propose an algorithm for minimizing the \emph{DORO risk}, \new{defined as the minimum $f$-divergence-based DRO risk (with shift parameter $D_f(\pptest, \ppcorrupt) \le \rho$) among all possible $(1-\epsilon)$-subpopulations of the training data.} However, this algorithm is presented without formal convergence guarantees or a computational complexity analysis, leaving its efficacy for provably approximating the DORO risk minimizer uncertain. 

We present a specialization of their method to the $\CVaR^\alpha$ setting in Algorithm~\ref{alg:doro-algorithm-cvar-1}. 
\begin{algorithm}[ht]
    \KwIn{Batch size $N_{batch}$; outlier fraction hyperparameter $\epsilon \in [0,1)$}
    \For{each training iteration}{
        Sample a batch $\{\vxi_i\}_{i=1}^{N_{batch}}$ from the (contaminated) training distribution $\ppcorrupt$\\
        Compute losses: $\ell_i \leftarrow \ell(\vw; \vxi_i)$ for $i=1, \dots, N_{batch}$\\
        Sort the losses: let $\ell_{(1)} \ge \ell_{(2)} \ge \dots \ge \ell_{(N_{batch})}$ be the sorted losses, with $\vxi_{(j)}$ being the sample corresponding to $\ell_{(j)}$\\
        Let $N_{kept} \leftarrow N_{batch} - \lfloor \epsilon N_{batch} \rfloor$ \tcp{Number of samples kept}
        Define the objective for finding $\eta$:
        $$G(\eta; \vw) =  \alpha^{-1}\frac{1}{N_{kept}} \sum_{j=\lfloor \epsilon N_{batch} \rfloor+1}^{N_{batch}} (\ell_{(j)}-\eta)_+  +\eta$$
        Find $\eta^* \leftarrow \argmin_{\eta \in \mathbb{R}} G(\eta; \vw)$\label{line:dual-optimal}\\
        Update $\vw$ by taking one step of a gradient-based method to minimize $G(\eta^*; \vw)$\\
    }
    \caption{DORO for $\CVaR^\alpha$ (adapted from {\citet[Algorithm 1]{zhai2021doro}})}\label{alg:doro-algorithm-cvar-1}    
\end{algorithm}
More generally, \citet[Algorithm 1]{zhai2021doro} attempts to minimize DORO-risk by iteratively:
\begin{enumerate}[nosep]
    \item Sorting samples of batch size $n$ based on current losses $\ell_j(\vw) = \ell(\vw; \vxi_j)$.
    \item Discarding the $\epsilon n$ samples with the largest losses, denoting the remaining samples as $S_{kept}$.
    \item Finding $\eta^*$ that minimizes some dual form $G(\vw, \eta)$ of the DRO risk evaluated on the empirical distribution of the current $S_{kept}$.
    \item Updating $\vw$ using $G(\vw, \eta^*)$.
\end{enumerate}

Let us consider the specific instantiation of robust mean estimation with $\ell(\vw; \vxi) = \norm{\vw - \vxi}_2^2$ using $\CVaR^\alpha$ with  $\alpha = 1$, so that the CVaR operator $\CVaR^1$ becomes the expectation $\E$; see \Cref{eq:cvar-definition} and note that $\argmin_\vw \E_{\sP}[\norm{\vw - \vxi}_2^2] = \E_\sP[\vxi]$ (the best point estimate for a distribution under a squared-error loss function is the mean).

To find $\eta^*$ that minimizes this $ G(\vw,\eta) = \left( \frac{1}{N_{kept}} \sum_{j \in S_{kept}} (\ell_j(\vw)-\eta)_+ \right) +\eta$ as in Line~\ref{line:dual-optimal}, consider an individual term $h(\eta; \ell) = (\ell-\eta)_+ + \eta$.
\begin{itemize}[nosep]
    \item If $\eta < \ell$, then $(\ell-\eta)_+ = \ell-\eta$, so $h(\eta; \ell) = \ell-\eta+\eta = \ell$.
    \item If $\eta \ge \ell$, then $(\ell-\eta)_+ = 0$, so $h(\eta; \ell) = \eta$.
\end{itemize}
Thus, $\inf_{\eta} h(\eta; \ell) = \ell$, achieved for any $\eta \le \ell$.
For the sum $F(\vw,\eta) = \E_{S_{kept}} [(\ell(\vw;\vxi) - \eta)_+] + \eta$, the optimal $\eta^*$ can be chosen as any value less than or equal to $\min_{j \in S_{kept}} \ell_j(\vw)$. For such an $\eta^*$, we have $(\ell_j(\vw)-\eta^*)_+ = \ell_j(\vw)-\eta^*$.
Hence,
$$ F(\vw, \eta^*) = \frac{1}{N_{kept}} \sum_{j \in S_{kept}} (\ell_j(\vw)-\eta^*) +\eta^* = \frac{1}{N_{kept}} \sum_{j \in S_{kept}} \ell_j(\vw). $$
Therefore, in this specialized setting (CVaR with $\alpha=1$), DORO's Algorithm 1 simplifies to minimizing the average loss over the $(1-\epsilon)$ fraction of training samples that currently exhibit the \emph{smallest} losses. This is a form of trimmed-loss estimation.

While trimming samples with large losses seems intuitively robust, the iterative nature of \Cref{alg:doro-algorithm-cvar-1} can be problematic. Consider the following counterexample:
Let the clean data $\vxi$ be drawn i.i.d.\ from the standard normal distribution ${N}(\vzero, \mI_d)$. Suppose an $\epsilon$-fraction of the data consists of outliers fixed at the point $\sqrt{d}\mathbf{e}_1$, where $\mathbf{e}_1$ is the first standard basis vector; then a naive filter that only looks at the norm of the data will not identify those outliers, because $\norm{\vxi}_2 \approx \sqrt{d}$ for nearly all data $\vxi$~\cite[Section 3.1]{vershynin2018hdp} for large enough $d$ and sample size. Now consider how \Cref{alg:doro-algorithm-cvar-1} progresses:
\begin{enumerate}[leftmargin=*, nosep]
    \item \emph{Initialization}: Assume \Cref{alg:doro-algorithm-cvar-1} initializes $\vw$ at the empirical mean of the contaminated data. With high probability and large enough sample size, $\vw_0 \approx \epsilon\sqrt{d}\mathbf{e}_1$ (biased towards the outliers).
    \item \emph{Loss Calculation \& Sorting}: Losses are $\ell(\vw_0; \vxi_j) = \norm{\vw_0 - \vxi_j}_2^2$. Samples $\vxi_j$ closest to $\vw_0$ will have the smallest losses. The outliers at $\sqrt{d}\mathbf{e}_1$ are relatively close to $\vw_0$ compared to clean samples $\vxi \sim {N}(\vzero, \mI_d)$ that happen to be in the direction opposite to $\mathbf{e}_1$ (e.g., near $-\sqrt{d}\mathbf{e}_1$).
    \item \emph{Filtering}: Algorithm 1 discards the $\epsilon n$ samples with the highest losses. These are likely to be clean samples that are far from the currently biased estimate $\vw_0$. The outliers, being closer to $\vw_0$, are likely retained in $S_{kept}$.
    \item \emph{Update}: The next estimate $\vw_1$ is found by minimizing the average loss over $S_{kept}$. Since $S_{kept}$ disproportionately contains outliers and clean samples near the outliers, $\vw_1$ (the mean of $S_{kept}$ in this quadratic loss case) will likely move even closer to the outliers at $\sqrt{d}\mathbf{e}_1$.
\end{enumerate}
This iterative process may lead to the estimate $\vw$ converging towards the outliers rather than the true mean $\vzero$, as the algorithm repeatedly filters out ``extreme'' clean data points relative to its increasingly biased estimate. This demonstrates that \Cref{alg:doro-algorithm-cvar-1} incurs at least $\sqrt{d} \epsilon$ error in the parameter estimation and thus $\epsilon^2 d$ error in loss value, not achieving the dimension-free error as claimed in their theoretical result.

\new{In sum, while the DORO algorithm is computationally appealing and its framework's motivation aligns with our modeling principles in \Cref{sec:modeling-principles}, it is ultimately a heuristic. As our counterexample for robust mean estimation demonstrates, its reliance on a loss-based filtering strategy can cause the estimator to converge towards outliers rather than the true parameter. This algorithmic shortcoming underscores the challenges in designing provably robust methods for this setting and motivates the need for a principled approach with formal guarantees, as developed in our work.}

\subsection{Proof of Fact \ref{fact:wasserstein-loss-moment}\label{sec:proof-wasserstein-loss-moment}}
\begin{proof}
 We make use of the following definition in the proof.

\begin{definition}
Let $\cS_1$ and $\cS_2$ be two measurable spaces, and let $T: \cS_1 \to \cS_2$ be a measurable mapping. If $\mu$ is a probability measure on $(\cS_1, \Sigma_1)$, the \emph{pushforward measure} $T_{\#}\mu$ is a probability measure on $\cS_2$ defined for measurable sets $A \subset \cS_2$ by 
\[(T_{\#}\mu)(A) = \mu(T^{-1}(A)) = \mu(\{\vx \in \cS_1 : T(\vx) \in A\})\] Equivalently, for any bounded measurable function $f: \cS_2 \to \R$:\[\int_{\cS_2} f(s_2) d(T_{\#}\mu)(s_2) = \int_{\cS_1} f(T(s_1)) d\mu(s_1) \;.\]
\end{definition}
In the context of this proof, $\cS_1 = \cS_2 = \R^d \times \R$, $\mu = \sP_0$, and $T$ is a sequence of transportation maps $T_n$. The random variable $(\mX', Y')$ drawn from $\sQ_n = (T_n)_{\#}\sP_0$ has the same distribution as $T_n(\mX, Y)$ where $(\mX, Y)$ is drawn from $\sP_0$.

    Let $\mathbf{u}_{\vw} = \vw / \norm{\vw}$. The non-negativity of $\ell_y$ and the limit condition imply that for any $y \in \R$, there exists $M_{y}$ such that $\ell_y(z) \ge (C/2)\abs{z}^{p+\iota}$ for all $\abs{z} > M_{y}$.
    Since $\sP_0^\mX$ is continuous, for any sufficiently small $\epsilon > 0$, we can find a compact set $A \subset \R^d$ such that $\sP_0[\mX \in A] = \epsilon$. Let $K_A = \sup_{\vx \in A} \norm{\vx}$. We can choose $A$ such that $K_A$ is finite (e.g., by choosing $A$ within a fixed ball).

    We proceed to construct a sequence of perturbed measures $\sQ_n$.
    For each $n \in \mathbb{N}$, let $R_n > 0$ be a sequence such that $R_n \to \infty$. Define $\mathbf{z}_n = R_n \mathbf{u}_{\vw}$.
    Choose $\epsilon_n = \frac{\rho^p}{(K_A+R_n)^p + 1}$. For $R_n$ large enough, $\epsilon_n$ can be made arbitrarily small. Since $\sP_0^\mX$ is continuous, we can find a compact set $A_n \subset \R^d$ (e.g., within a fixed bounded region, so $K_{A_n} = \sup_{\vx \in A_n} \norm{\vx}$ is uniformly bounded by some $K_{\max}$) such that $\sP_0[\mX \in A_n] = \epsilon_n$.
    Define a transport map $T_n: \R^d \times \R \to \R^d \times \R$ by:
    $T_n(\vx, y) = (\mathbf{z}_n, y)$ if $\vx \in A_n$,
    $T_n(\vx, y) = (\vx, y)$ if $\vx \notin A_n$.
    Let $\sQ_n = (T_n)_{\#} \sP_0$ be the pushforward measure.

    Next we verify that $W_p(\sP_0, \sQ_n) \le \rho$ for all $n$.
    The Wasserstein-$p$ distance $W_p(\sP_0, \sQ_n)$ is bounded by the cost of the map $T_n$:
    \begin{align*} 
      & \;W_p(\sP_0, \sQ_n)^p \le \E_{\sP_0}[d((\mX,Y), T_n(\mX,Y))^p] \\ 
      = & \; \int_{A_n \times \R} (\norm{\vx - \mathbf{z}_n} + \kappa \abs{y-y})^p d\sP_0(\vx,y) + \int_{(\R^d \setminus A_n) \times \R} (\norm{\vx - \vx} + \kappa\abs{y-y})^p d\sP_0(\vx,y) \\ 
      = & \; \int_{A_n} \norm{\vx - \mathbf{z}_n}^p d\sP_0^\mX(\vx) \le \sP_0[\mX \in A_n] \sup_{\vx \in A_n} \norm{\vx - \mathbf{z}_n}^p \\ 
      \le & \; \epsilon_n (K_A + R_n)^p \quad (\text{since } \norm{\mathbf{z}_n}=R_n \text{ and } \norm{\vx} \le K_A \text{ for } \vx \in A_n) \\ 
      = & \; \frac{\rho^p (K_A+R_n)^p}{(K_A+R_n)^p + 1} \le \rho^p \;.
      \end{align*}
    Thus, $W_p(\sP_0, \sQ_n) \le \rho$ for all $n$.

    We finally verify that the expected loss under $\sQ_n$ goes to $\infty$. We have
    $\E_{\sQ_n}[\ell_{Y'}(\vw \cdot \mX')] = \E_{\sP_0}[\ell_Y(\vw \cdot \mX) \mathbb{I}_{\mX \notin A_n}] + \E_{\sP_0}[\ell_Y(\vw \cdot \mathbf{z}_n) \mathbb{I}_{\mX \in A_n}]$.
    Since $\ell_Y \ge 0$, the first term is non-negative. The second term is 
    \begin{equation}\label{eq:second-term}
      \int_{A_n} \E_{Y|\mX}[\ell_Y(\norm{\vw}R_n)] d\sP_0^\mX(\vx),
    \end{equation}
    where we use the notation $\E_{Y|\mX}$ to denote the conditional expectation over $Y$ given $\mX$.
    By Fatou's lemma, it holds almost surely that 
    $$\liminf_{n\to\infty} \E_{Y|\mX}\left[\frac{\ell_Y(R_n\norm{\vw})}{(R_n\norm{\vw})^{p+\iota}}\right] \ge \E_{Y|\mX}\left[\liminf_{n\to\infty} \frac{\ell_Y(R_n\norm{\vw})}{(R_n\norm{\vw})^{p+\iota}}\right] = C.$$
    Thus, for $n$ sufficiently large, we have $$\E_{Y|\mX}[\ell_Y(R_n\norm{\vw})] \ge \frac{C}{2}(R_n\norm{\vw})^{p+\iota}.$$
    Plugging it back into \Cref{eq:second-term}, we have 
    \begin{align*}
      & \; \E_{\sP_0}[\ell_Y(\vw \cdot \mathbf{z}_n) \mathbb{I}_{\mX \in A_n}] \ge \sP_0[\mX \in A_n] \frac{C}{2} (\norm{\vw}R_n)^{p+\iota} \\
      = & \; \epsilon_n \frac{C}{2} (\norm{\vw}R_n)^{p+\iota} = \frac{\rho^p}{(K_A+R_n)^p + 1} \frac{C}{2} (\norm{\vw}R_n)^{p+\iota}.
    \end{align*}
    As $R_n \to \infty$, this term behaves as $$\frac{\rho^p C \norm{\vw}^{p+\iota}}{2} \frac{R_n^{p+\iota}}{R_n^p} = \frac{\rho^p C \norm{\vw}^{p+\iota}}{2} R_n^{\iota} \to \infty $$ because $\iota > 0$. Thus, $\E_{\sQ_n}[\ell_{Y'}(\vw \cdot \mX')] \to \infty$.
\end{proof}

\subsection{Prepending covariates by 1 preserves bounded covariance assumption\label{sec:prepending-by-one}}

\begin{lemma}[Prepending covariates by \(1\) preserves bounded covariance]\label{lemma:prepend-one-covariance}
	Let \(\{\tilde\vx_i\}_{i=1}^N\) be independent samples from a distribution over \(\sR^{d-1}\) with mean \(\tilde\vmu\) and covariance \(\tilde\mSigma\) satisfying \(\opnorm{\tilde\mSigma} \le \sigma^2\).
	Define \(\vx_i = [1, \tilde\vx_i^\top]^\top \in \sR^d\) by prepending a \(1\) to each \(\tilde\vx_i\).
	Then \(\{\vx_i\}\) can be viewed as i.i.d.\ samples drawn from a distribution with covariance $\mSigma$ satisfying \(\opnorm{\mSigma} \le \sigma^2\).
\end{lemma}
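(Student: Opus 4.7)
The plan is to directly compute the covariance matrix of the prepended vector $\vx = [1, \tilde{\vx}^\top]^\top$ and observe that it has a block structure, from which the bound on the operator norm follows immediately. Since the first coordinate is the deterministic constant $1$, it has zero variance and zero covariance with every other coordinate. Therefore the covariance of $\vx$ is
\[
\mSigma \;=\; \begin{bmatrix} 0 & \vzero^\top \\ \vzero & \tilde{\mSigma} \end{bmatrix} \in \R^{d \times d}.
\]

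Given this block form, I would bound $\opnorm{\mSigma}$ by a one-line variational argument. For any unit vector $\vv = [v_0, \tilde{\vv}^\top]^\top \in \R^d$ with $\norm{\vv}_2 = 1$, direct expansion gives $\vv^\top \mSigma \vv = \tilde{\vv}^\top \tilde{\mSigma} \tilde{\vv}$, since the zero row and column kill any contribution from $v_0$. Then $\tilde{\vv}^\top \tilde{\mSigma} \tilde{\vv} \le \opnorm{\tilde{\mSigma}} \norm{\tilde{\vv}}_2^2 \le \sigma^2 \norm{\tilde{\vv}}_2^2 \le \sigma^2 \norm{\vv}_2^2 = \sigma^2$, where the last inequality uses $\norm{\tilde{\vv}}_2^2 \le v_0^2 + \norm{\tilde{\vv}}_2^2 = \norm{\vv}_2^2$. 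Taking the supremum over unit $\vv$ gives $\opnorm{\mSigma} \le \sigma^2$, as desired.

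There is no real obstacle here; the only thing worth being careful about is verifying that the off-diagonal blocks in $\mSigma$ indeed vanish, which follows because $\Cov(1, \tilde{x}^j) = 0$ for any random variable $\tilde{x}^j$ when the first argument is deterministic. The mean vector of $\vx$ is $[1, \tilde{\vmu}^\top]^\top$, which we do not need for the operator norm bound but is worth recording for downstream use in verifying \Cref{assump:dist-assumption} after centering.
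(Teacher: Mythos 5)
Your proof is correct and takes essentially the same approach as the paper: both identify the block-diagonal structure of the covariance matrix (with a zero in the top-left corner and $\tilde{\mSigma}$ in the lower-right block) and then bound the operator norm. The only cosmetic difference is that you use a variational (quadratic-form) argument for the final operator-norm bound, whereas the paper reads off the eigenvalues of the block-diagonal matrix directly; both are one-line finishes.
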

\begin{proof}
Each new vector \(\vx_i \in \sR^d\) can be written as
	\(
	\vx_i = [ 1, \tilde\vx_i^\top ]^\top.
	\) 
	The mean of the new distribution is
	\(
	\vmu = \E[\vx_i] = [ 1 , \tilde\vmu^\top]^\top.
	\) 
	The covariance matrix \(\mSigma\) is defined as \(\mSigma = \E\left[(\vx_i - \vmu)(\vx_i - \vmu)^\top\right]\).

	Expanding \(\vx_i - \vmu\),
	\[
	\vx_i - \vmu = [ 1 - 1, (\tilde\vx_i - \tilde\vmu)^\top ]^\top = [ 0, (\tilde\vx_i - \tilde\vmu)^\top ]^\top. 
	\]
Hence, the covariance matrix becomes
	\[
	\mSigma = \E\left[\begin{bmatrix} 0 \\ \tilde\vx_i - \tilde\vmu \end{bmatrix} \begin{bmatrix} 0 & (\tilde\vx_i - \tilde\vmu)^\top \end{bmatrix}\right] = \begin{bmatrix} 0 & \vzero^\top \\ \vzero & \tilde\mSigma \end{bmatrix},
	\]
	where \(\vzero \in \sR^{d-1}\) is the zero vector.
	
	Thus, \(\mSigma\) is a block diagonal matrix with a zero entry in the top-left and \(\tilde\mSigma\) as the bottom-right block. 
Since the eigenvalues of a block diagonal matrix are the eigenvalues of the diagonal blocks, the eigenvalues of \(\mSigma\) are the eigenvalues of \(\tilde\mSigma\) together with an additional zero eigenvalue. 
Therefore, the operator norm of \(\mSigma\) satisfies
	\(
	\opnorm{\mSigma} = \max(\opnorm{\tilde\mSigma}, 0) = \opnorm{\tilde\mSigma}
	\).
	Given that \(\opnorm{\tilde\mSigma} \le \sigma^2\) by assumption, we conclude
	\(
	\opnorm{\mSigma} \le \sigma^2
	\).
\end{proof}

\subsection{Handling Uncentered Data with Arbitrary Mean\label{sec:arbitrary-mean}}
The main analysis, particularly \Cref{assump:dist-assumption}, assumes that the $(d-1)$-dimensional covariates $\tilde{\vx}$ (before prepending the constant 1) are drawn from a distribution with zero mean ($\E[\tilde{\vx}] = \vzero$). In this subsection, we detail how this assumption can be relaxed to handle covariates with an arbitrary unknown mean $\tilde{\vmu}$, ensuring that our main theoretical guarantees remain applicable. The strategy involves robustly estimating $\tilde{\vmu}$ from the contaminated data, centering the covariates using this estimate, and then showing that the subsequent analysis holds with minor modifications that do not change the overall error rates.

Suppose we are given $N$ samples $\{\tilde{\vx}_i^c, y_i^c\}_{i=1}^N$, where $\tilde{\vx}_i^c \in \R^{d-1}$ are the original covariates from an $\epsilon$-corrupted dataset (superscript $c$ stands for ``corrupted''). The underlying clean covariates $\tilde{\vx}_i$ are drawn i.i.d.\ from a distribution $\sP_0^{\tilde{\vx}}$ with mean $\tilde{\vmu}$ and covariance matrix $\tilde{\mSigma}$ satisfying $\opnorm{\tilde{\mSigma}} \le \sigma^2$.

Our first step is to obtain a robust estimate of $\tilde{\vmu}$. The set of observed covariates $\{\tilde{\vx}_i^c\}_{i=1}^N$ is an $\epsilon$-corrupted version of $N$ i.i.d.\ samples from $\sP_0^{\tilde{\vx}}$. By \Cref{fact:bounded_variance_stability_iid}, with high probability (at least $1-\tau$ for $N = \tilde{O}(d\log(1/\tau)/\epsilon)$), a $(1-\epsilon)$ fraction of the clean samples forms an $(\epsilon, O(\sqrt{\epsilon}))$-stable set, denoted by $\tilde S'$, with respect to $\tilde{\vmu}$ and $\sigma^2$. Consequently, the observed samples $\{\tilde{\vx}_i^c\}_{i=1}^N$ can be treated as a $2\epsilon$-corrupted version of $\tilde S'$. We apply the robust mean estimation algorithm, $\mathsf{RobustMeanEstimation}$ (\Cref{alg:robust_mean_estimation}), to $\{\tilde{\vx}_i^c\}_{i=1}^N$ with a corruption parameter of $2\epsilon$. According to \Cref{fact:robust_mean_estimation_with_stability}, the output $\hat{\vmu}$ satisfies
\[\norm{\hat{\vmu} - \tilde\vmu}_2 = O(\sigma\sqrt{\epsilon}). \] 

Using this estimate $\hat{\vmu}$, we transform the covariates that will be used by our main algorithm:
\begin{enumerate}[nosep]
    \item Center the original $(d-1)$-dimensional covariates: $\bar{\vx}'_i = \tilde{\vx}_i^c - \hat{\vmu}$.
    \item Prepend a constant 1 to form the $d$-dimensional covariates: $\vx_i^c = [1, (\bar{\vx}'_i)^{\top}]^{\top}$.
    \item Run \Cref{alg:main-opt} using these transformed covariates $\{\vx_i^c, y_i^c\}_{i=1}^N$.
\end{enumerate}

We now verify how this transformation affects the premises of our analysis. Consider the stable versions of these transformed covariates, $\vx_i^s = [1, (\tilde{\vx}_i^s - \hat{\vmu})^{\top}]^{\top}$, where $\tilde{\vx}_i^s \in \tilde S'$. Let $S' = \{\vx_i^s\}_{i=1}^N$.

Since $\tilde S'$ is $(\epsilon, O(\sqrt{\epsilon}))$-stable with respect to $\tilde{\vmu}$, by \Cref{fact:stability_bounded_covariance}(i), $\norm{\vmu_{\tilde S'} - \tilde{\vmu}}_2 \le \epsilon$. 
Thus, the mean of $S'$ is 
$\vmu_{S'} = [1, (\vmu_{\tilde S'} - \hat{\vmu})^{\top}]^{\top}$. 
Its squared $\ell_2$-norm is 
\[\norm{\vmu_{S'}}_2^2 = 1^2 + \norm{{\vmu_{\tilde S'}} - \hat{\vmu}}_2^2 \le 1 + 2 \norm{\vmu_{\tilde S'} - \tilde{\vmu}}_2^2 + 2 \norm{\hat{\vmu} - \tilde{\vmu}}_2^2 = 1 + O(\sigma^2\epsilon).\]
By \Cref{lemma:prepend-one-covariance} and because translating a set by a fixed vector does not change the covariance matrix, $S'$ is stable with respect to the mean $\vmu_{S'}$ and variance $\sigma^2$.

By \Cref{lemma:bounded-sequence-scaling-stability}, combining the above mean estimation and stability parameters, we have that for a $3\zeta$-bounded sequence $(\beta_i)$, the weighted sum $\{\beta_i\vx_i^s\}$ is $(\bigO\epsilon, \bigO{\sqrt{\epsilon}})$-stable with respect to its own empirical mean and variance $9\zeta^2(\sigma^2 + \norm{\vmu_{S'}}_2^2)$, which is of order $O(\sigma^2\zeta^2)$.
Using the same argument as in \Cref{prop:inexact-hg-oracle}, we can construct an oracle that satisfies \Cref{assump:approximation-guarantee} via robust mean estimation algorithms with $\Delta = O(\zeta\sigma\sqrt\epsilon)$.

Note that here and elsewhere in the main body, we implicitly assume that $\sigma \ge 1$. If this assumption fails, we can either replace $\sigma$ with $\sqrt{\sigma^2 + 1}$ everywhere, or prepend the covariates by $\sigma$ instead of prepending $1$.
The crucial point is that this small deviation in the mean of the transformed stable covariates does not alter the order of magnitude of $\Delta$.

Finally, we translate these guarantees back to the original problem with uncentered covariates.
For a vector $\vw \in \R^d$, let $w^0$ denote its first coordinate and $\tilde \vw$ denote the rest of coordinates. Define 
\begin{align*}
  J(\vw; \vmu) & = \E_{\sP_0} [\phi(w^0 + \tilde{\vw} \cdot (\tilde{\vx} - {\vmu}), y)] + \rho \zeta \norm{\vw}_s, \\
  J'(\vw; \vmu) & = \frac1{\abs{S'}}\sum_{i: \tilde\vx_i \in \tilde S'} [\phi(w^0 + \tilde{\vw} \cdot (\tilde{\vx_i} - {\vmu}), y_i)] + \rho \zeta \norm{\vw}_s \;.
\end{align*}
The main algorithm finds ${\vw}_T = [{w}^0_T, {\tilde{\vw}}_T^{\top}]^{\top}$\footnote{In the main body, the output is denoted by $\hat\vw_T$. We drop the hat  on $\vw$ here to simplify the notation in this section.} that is an approximate minimizer of $J(\vw; \hat{\vmu})$. 
Specifically, \Cref{cor:population-error} implies that $\vw_T$ is $O((\norm{\vw^*_{\hat{\vmu}}}  + \norm{\vw_0}_2) \Delta)$-suboptimal for this problem, where $\vw^*_{\hat{\vmu}}$ is the true minimizer of $J'(\vw; \hat{\vmu})$.

The original problem we aim to solve corresponds to centering with the true mean $\tilde{\vmu}$: minimizing $J(\vw; \tilde{\vmu})$. Let $\vw^*_{\tilde{\vmu}}$ be its minimizer.
The difference in the objective function value for a given $\vw$ due to using $\hat{\vmu}$ instead of $\tilde{\vmu}$ is:
$|J(\vw; \tilde{\vmu}) - J(\vw; \hat{\vmu})| = |\E_{\sP_0}[\phi(w^0 + \tilde{\vw} \cdot (\tilde{\vx} - \tilde{\vmu}), y) - \phi(w^0 + \tilde{\vw} \cdot (\tilde{\vx} - \hat{\vmu}), y)]|$.
Since $\phi(\cdot, y)$ is $\zeta$-Lipschitz, this difference is bounded by $\E_{\sP_0}[\zeta | (w^0 + \tilde{\vw} \cdot (\tilde{\vx} - \tilde{\vmu})) - (w^0 + \tilde{\vw} \cdot (\tilde{\vx} - \hat{\vmu})) |] = \zeta |\tilde{\vw} \cdot (\hat{\vmu} - \tilde{\vmu})| \le \zeta \norm{\tilde{\vw}}_2 \norm{\hat{\vmu} - \tilde{\vmu}}_2 = \zeta \norm{\tilde{\vw}}_2 O(\sigma\sqrt{\epsilon})$. 
Thus,
\begin{align*} 
  J(\vw_T; \tilde{\vmu}) 
  &  \le J(\vw_T; \hat{\vmu}) + \zeta \norm{\tilde{\vw}_T}_2 O(\sigma\sqrt{\epsilon}) \\ 
  &\le J(\vw^*_{\hat{\vmu}}; \hat{\vmu}) + O((\norm{\vw^*_{\hat{\vmu}}}_2 + \norm{\vw_0}_2) \Delta) + \zeta \norm{{\tilde{\vw}}_T}_2 O(\sigma\sqrt{\epsilon}) \\ 
  &\le J(\vw^*_{\tilde{\vmu}}; \hat{\vmu}) + O((\norm{\vw^*_{\hat{\vmu}}}_2 + \norm{\vw_0}_2) \Delta)  + \zeta \norm{{\tilde{\vw}}_T}_2 O(\sigma\sqrt{\epsilon}) \\ 
  &\le J(\vw^*_{\tilde{\vmu}}; \hat{\vmu}) +O((\norm{\vw^*_{\hat{\vmu}}}_2 + \norm{\vw_0}_2) \Delta) + \zeta (\norm{{\tilde{\vw}}_T}_2 + \norm{\vw^*_{\tilde{\vmu}}}_2) O(\sigma\sqrt{\epsilon}). 
\end{align*}
Assuming the norms of  $\norm{\vw^*_{\hat{\vmu}}}_2$ and $\norm{\tilde\vw^*}_2$ (therefore, also $\norm{{\tilde{\vw}}_T}_2$, as per \Cref{eq:wt-bound}) are both of the same order as $\norm{\vw^*_{\tilde{\vmu}}}_2$ (the norm of the optimal solution to the ideally centered problem), the total excess error remains $O(\zeta\sigma\sqrt{\epsilon} \norm{\vw^*_{\tilde{\vmu}}}_2)$.
The learned parameters $\vw_T = [\hat{w}^0_T, \tilde{\vw}_T^{\top}]^{\top}$ can be mapped to the parameters of a model $w^0_{\text{orig}} + \tilde{\vw}_{\text{orig}} \cdot \tilde{\vx}$ for the original, uncentered covariates by setting $\tilde{\vw}_{\text{orig}} = \tilde{\vw}_T$ and $w^0_{\text{orig}} = \hat{w}^0_T - \tilde{\vw}_T^{\top} \hat{\vmu}$. All guarantees apply to the performance of $\tilde{\vw}_{\text{orig}}$ when evaluated using the ideal centering $w^0 + \tilde{\vw} \cdot (\tilde{\vx} - \tilde{\vmu})$.
Therefore, the overall guarantees presented in \Cref{cor:population-error} (with $\vw^*$ interpreted as $\vw^*_{\hat{\vmu}}$, the minimizer for the empirical loss on the uncentered stable samples) are preserved in their order of magnitude.

\section{Proof of Corollary \ref{cor:population-error} in Section \ref{sec:main-algo}}\label{sec:population-loss}
We restate \Cref{cor:population-error} for completeness:
\populationmain*

To prove this corollary, we leverage the following uniform convergence result:
\begin{fact}[{\cite[Corollary 5]{kakade2008complexity}}]\label{fact:uniform-convergence}
  Fix $W > 0$ and $0 < \tau < 1$. Suppose we are given $N$ samples \(\{(\vx_i, y_i)\}\)  independently drawn from a distribution $\sP_0$,  such that for some constant $C$, $\norm{\vx}_2 \le C \sigma \sqrt{d}$ holds $\sP_0$-almost surely. Then, with probability at least $1-\tau$, for all $\vw$ satisfying $\norm{\vw}_2 \le W$, we have
  \( \E_{(\vx, y) \sim \sP_0} [\ell_{y}(\vx \cdot \vw)] \le \frac1N \sum_{i=1}^{N} \ell_{y_i}(\vx_i \cdot \vw) + C \zeta \sigma W \sqrt{(16d + d\log(1/\tau)) / N}\). 
\end{fact}

\begin{proof}[ of \Cref{cor:population-error}]
Let $\bar F(\vw) = \E_{(\vx,y) \sim \sP_0} [\ell(\vw; \vx, y)] + \rho \zeta \norm{\vw}_s$. By \eqref{eq:w1-dro-regularization}, this is the population $W_1$-DRO objective: $\max_{\sQ: W_1^c(\sP_0, \sQ) \le \rho} \E_{\vxi \sim \sQ}[\ell(\vw; \vxi)]$. Denote by ${\bar\vw}^* = \argmin_{\vw: \norm{\vw}_2 \le W} \bar F(\vw)$.

Let $\hat{F}_N(\vw) = \frac{1}{N} \sum_{i=1}^N \ell_{y_i}(\vx_i \cdot \vw) + \rho \zeta \norm{\vw}_s$ be the empirical objective based on $N$ clean samples $\{\vx_i, y_i\}_{i=1}^N$ drawn from $\sP_0$.
We aim to bound $\bar F(\vw_T) - \inf_{\vw': \norm{\vw'}_2 \le W} \bar F(\vw')$. 

Let $F'_S(\vw) = \frac1{\abs{S'}}\sum_{i: \tilde\vx_i \in S'} \ell_{y_i}(\vx_i \cdot \vw) + \rho \zeta \norm{\vw}_s$ be the empirical objective based on the set of stable covariates, denoted by $S'$, whose existence is guaranteed with probability at least $1-\tau/2$ by \Cref{fact:bounded_variance_stability_iid}. 
\Cref{thm:main-empirical} implies that \Cref{alg:main-opt}, when run with $\vw_0 = \vzero$ on $\epsilon$-corrupted data, outputs $\vw_T$ such that
\begin{equation}\label{eq:cor_proof_algo_bound_final}
    \hat{F}_N(\vw_T) \le \min_{\vw': \norm{\vw'}_2 \le W} \hat{F}_N(\vw') + O(\sigma \zeta \sqrt\epsilon  \norm{\vw^*}_2).
\end{equation}
Using \Cref{fact:uniform-convergence}, with probability at least $1 - \tau/2$, it holds that
\begin{equation}\label{eq:cor_proof_uc_final}
    \sup_{\vw: \norm{\vw}_2 \le W} |\bar F(\vw) - \hat{F}_N(\vw)| = O(\zeta W \sqrt{ d \log(1/\tau)/N}).
\end{equation}
Combining these results with a union bound, with probability at least $1-\tau$, we have
\begin{align*}
    \bar F(\vw_T) &\stackrel{\eqref{eq:cor_proof_uc_final}}{\le} \hat{F}_N(\vw_T) + O(\zeta W \sqrt{ d/N}) 
    \\
    &\stackrel{\eqref{eq:cor_proof_algo_bound_final}}{\le} \left( \min_{\vw': \norm{\vw'}_2 \le W} \hat{F}_N(\vw') \right) + O(\sigma \zeta \sqrt{\epsilon} \norm{\vw^*}_2) + O(\zeta W \sigma\sqrt{ d/N}) 
    \\
    &\le \hat{F}_N({\bar\vw}^*) +  O(\sigma \zeta \sqrt{\epsilon} \norm{\vw^*}_2) + O(\zeta W \sigma \sqrt{ d/N}) 
    \\
    &\stackrel{\eqref{eq:cor_proof_uc_final}}{\le} \bar F({\bar\vw}^*) + O( \sigma \zeta \sqrt{\epsilon} \norm{\vw^*}_2)+ O(\zeta W \sigma \sqrt{ d/N}) \;.
\end{align*}
Recall that $\bar F({\bar\vw}^*) = \inf_{\vw: \norm{\vw}_2 \le W} \bar F(\vw)$, the excess risk is bounded by $O(\sigma \zeta \sqrt{\epsilon}\norm{\vw^*}+ \zeta W \sqrt{ d/N})$.
Substituting $N = O(\zeta W^2 d \log(d)/\epsilon)$, the uniform convergence error term $O(\zeta W \sqrt{ d/N})$ becomes
$$O(\zeta W \sigma \sqrt{ d/N}) = O(\zeta \sigma\sqrt\epsilon / \log d) = O(\zeta \sigma\sqrt\epsilon),$$
which is dominated by the optimization error term $O( \sigma \zeta \sqrt{\epsilon} \norm{\vw^*}_2)$. 
\end{proof}

\newrefcontext[sorting=nyt]
\printbibliography
\end{document}